\definecolor{codegreen}{rgb}{0,0.6,0}
\definecolor{codegray}{rgb}{0.5,0.5,0.5}
\definecolor{codepurple}{rgb}{0.58,0,0.82}
\definecolor{backcolour}{rgb}{0.95,0.95,0.92}
\lstdefinestyle{mystyle}{
    backgroundcolor=\color{backcolour},   
    commentstyle=\color{codegreen},
    keywordstyle=\color{magenta},
    numberstyle=\tiny\color{codegray},
    stringstyle=\color{codepurple},
    basicstyle=\ttfamily\small,
    breakatwhitespace=false,         
    breaklines=true,                 
    captionpos=b,                    
    keepspaces=true,                 
    numbers=left,                    
    numbersep=5pt,                  
    showspaces=false,                
    showstringspaces=false,
    showtabs=false,                  
    tabsize=2
}
\newcommand{\textdiff}[1]{\textcolor{red}{#1}}
\newcommand{\defeq}{\mathrel{\stackrel{\makebox[0pt]{\mbox{\normalfont\tiny def}}}{=}}}
\newcommand{\argmin}[1]{\underset{#1}{\textrm{argmin}}\ }
\newcommand{\Projmu}{\Pi_\mu}
\newcommand{\trans}{T}
\newcommand{\Qclass}{\mathcal{Q}}
\newcommand{\valuefeedback}{\mathrm{V_{CF}}}
\newcommand{\bellmanopt}{\mathcal{B}^*}
\newcommand{\expec}{\mathbb{E}}
\newcommand{\kldiv}{\mathrm{D_{KL}}}
\newcommand{\tvd}{\mathrm{D_{TV}}}
\newcommand{\diag}{\mathrm{Diag}}
\newtheorem{theorem}{Theorem}[section]
\newtheorem{assumption}{Assumption}[section]
\newtheorem{definition}{Definition}[section]
\newtheorem{corollary}{Corollary}[theorem]
\newtheorem{lemma}{Lemma}[theorem]
\tikzset{
  treenode/.style = {align=center, inner sep=0pt, text centered,
    font=\sffamily},
  arn_n/.style = {treenode, circle, white, font=\sffamily\bfseries, draw=black,
    fill=black, text width=1.5em},%
  arn_r/.style = {treenode, circle, grey, draw=grey, 
    text width=1.5em, very thick},%
  arn_x/.style = {treenode, rectangle, draw=black,
    minimum width=0.8em, minimum height=0.8em}%
}
\author{\name Aviral Kumar\thanks{Corresponding Author; Preprint. Under Review.} \email {aviralk@berkeley.edu}\\
\name Abhishek Gupta \email {abhigupta@berkeley.edu} \\
\name Sergey Levine \email {svlevine@eecs.berkeley.edu} \\
\addr Electrial Engineering and Computer Sciences, University of California, Berkeley}
\begin{document}
\setcitestyle{square}

\title{\textbf{DisCor: Corrective Feedback in Reinforcement Learning via Distribution Correction}}
\editor{}
\maketitle

\begin{abstract}
Deep reinforcement learning can learn effective policies for a wide range of tasks, but is notoriously difficult to use due to instability and sensitivity to hyperparameters. The reasons for this remain unclear. When using standard supervised methods (e.g., for bandits), on-policy data collection provides ``hard negatives'' that correct the model in precisely those states and actions that the policy is likely to visit. We call this phenomenon ``corrective feedback.'' We show that bootstrapping-based Q-learning algorithms do not necessarily benefit from this corrective feedback, and training on the experience collected by the algorithm is not sufficient to correct errors in the Q-function.
In fact, Q-learning and related methods can exhibit pathological interactions between the distribution of experience collected by the agent and the policy induced by training on that experience, leading to potential instability, sub-optimal convergence, and poor results when learning from noisy, sparse or delayed rewards. We demonstrate the existence of this problem, both theoretically and empirically. We then show that a specific correction to the data distribution can mitigate this issue. Based on these observations, we propose a new algorithm, DisCor, which computes an approximation to this optimal distribution and uses it to re-weight the transitions used for training, resulting in substantial improvements in a range of challenging RL settings, such as multi-task learning and learning from noisy reward signals. \textbf{Blog post presenting a summary of this work is available at:} \url{https://bair.berkeley.edu/blog/2020/03/16/discor/}.
\end{abstract}

\section{Introduction}

Reinforcement learning (RL) algorithms, when combined with high-capacity deep neural net function approximators, have shown promise in domains ranging from robotic manipulation~\citep{kalashnikov18} to recommender systems~\citep{shani2005recommender}. However, current deep RL methods can be difficult to use, due to sensitivity with respect to hyperparameters and inconsistent and unstable convergence. While a number of hypotheses have been proposed to understand these issues~\citep{hassalt10doubleq,Hasselt2018DeepRL,pmlr-v80-fujimoto18a,fu19diagnosing}, and gradual improvements have led to more powerful algorithms in recent years~\citep{Haarnoja18,rainbow}, an effective solution has proven elusive. We hypothesize that a major source of instability in reinforcement learning with function approximation and value function estimation, such as Q-learning~\citep{Watkins92,Riedmiller2005,Mnih2015} and actor-critic algorithms~\citep{Haarnoja2017,konda_ac}, is a pathological interaction between the data distribution induced by the latest policy, and the errors induced in the learned approximate value function as a consequence of training on this distribution, which then exacerbates the issues for the data distribution at the next iteration.

While a number of prior works have provided a theoretical examination of various approximation dynamic programming (ADP) methods -- which encompasses Q-learning and actor-critic algorithms --
prior work has not extensively studied the relationship between the data distribution induced by the latest value function, and the \textit{errors} in the future value functions obtained by training on this data.

When using supervised learning to train a model, as in the case of contextual bandits or model-based RL, using the resulting model to select the most optimal actions results in a kind of ``hard negative'' mining: the model collects precisely those transitions that lead to good outcomes according to the model (potentially erroneously). This results in collecting precisely the data needed to fix those errors and improve. We refer to this as ``corrective feedback.''
We argue that ADP algorithms (e.g., Q-learning and actor-critic), which use bootstrapped targets rather than ground-truth labels, often do not enjoy this sort of corrective feedback. Since they regress onto bootstrapped estimates of the current value function, rather than the true optimal value function (which is unknown), simply visiting states with high error and updating the value function at those states does not necessarily correct those errors, since errors in the target value might be due to upstream errors in other states that are visited less often. This absence of corrective feedback can result in severe detrimental consequences on performance.

We demonstrate that na\"ively training a value function on transitions collected either by the latest policy or a mixture of recent policies (i.e., with replay buffers) may not result in corrective feedback. In fact, in some cases, it can actually lead to increasing accumulation of errors, which can lead to poor performance even on simple tabular MDPs.
We then show how to address this issue by re-weighting the data buffer using a distribution that explicitly optimizes for corrective feedback, which gives rise to our proposed algorithm, \textbf{DisCor}.

The two main contributions of our work consist of an analysis showing that ADP methods may not benefit from corrective feedback, even with online data collection, as well as a proposed solution to this problem based on estimating target value error and resampling the replay buffer to mitigate error accumulation.
Our method, DisCor, is general and can be used in conjunction with most modern ADP-based deep RL algorithms, such as DQN~\citep{Mnih2015} and SAC~\citep{Haarnoja18}. Our experiments show that DisCor substantially improves performance of standard RL methods, especially in challenging settings, such as multi-task RL and learning from noisy rewards. We evaluate our approach on both continuous control tasks and discrete-action Atari games.
On the multi-task MT10 benchmark~\citep{yu2019meta} and several robotic manipulation tasks, our method learns policies with a final success rate that is 50\% higher than that of SAC.

\section{Preliminaries}
\label{sec:backrgound}
{The goal in reinforcement learning is to learn a policy that maximizes the expected cumulative discounted reward in a Markov decision process (MDP), which is defined by a tuple} $(\mathcal{S}, \mathcal{A}, \trans, R, \gamma)$. $\mathcal{S}, \mathcal{A}$ represent state and action spaces, $\trans(s' | s, a)$ and $r(s,a)$ represent the dynamics and reward function, and $\gamma \in (0,1)$ represents the discount factor. $\rho_0(s)$ is the initial state distribution. The infinite-horizon, {discounted marginal state distribution of the policy $\pi(a|s)$ is denoted as $d^{\pi}(s)$ and the  corresponding state-action marginal is $d^{\pi} (s, a) = d^{\pi}(s) \pi(a|s)$.}

Approximate dynamic programming (ADP) algorithms, such as Q-learning and actor-critic methods, aim to acquire the optimal policy by modeling the optimal state ($V^*(s)$) and state-action ($Q^*(s,a)$) value functions. These algorithms are based on recursively iterating the Bellman optimality operator, $\bellmanopt$, defined as
\begin{align*}
\vspace{-5pt}
(\bellmanopt Q)(s, a) &= r(s, a) + \gamma E_{s' \sim \trans}[\max_{a'} Q(s', a')] .
\end{align*}
The goal is to converge to the optimal value function, $Q^*$, by applying successive Bellman projections.
With function approximation, these algorithms project the values of the Bellman optimality operator $\bellmanopt$ onto a family of Q-function approximators $\Qclass$ (e.g., deep neural nets) under a sampling or data distribution $\mu$, such that $Q_{k+1} \leftarrow \Projmu(\bellmanopt Q_k)$ and
\begin{equation}
\vspace{-5pt}
\label{eqn:bellman_projection} 
\Projmu(Q) \defeq 
\argmin{Q' \in \Qclass} \expec_{s,a \sim \mu}[(Q'(s,a) - Q(s,a))^2].
\end{equation}
We refer to $\bellmanopt Q$ as the \textit{target value} for the projection step. 
Q-function fitting is usually interleaved with additional data collection, which typically uses a policy derived from the latest value function, augmented with either $\epsilon$-greedy~\citep{Mnih2015} or Boltzmann-style~\citep{Haarnoja18} exploration.
To simplify analysis, we mainly consider an underlying RL algorithm (Appendix~\ref{sec:omitted_proofs}, Algorithm~\ref{alg:fqi}) that alternates between fitting the action-value function, $Q(s, a)$, fully with the current data by minimizing Bellman error, and then collecting data with the policy derived from this value function. This corresponds to fitted Q-iteration.

For commonly used ADP methods, $\mu$ simply corresponds to the on-policy state-action marginal, $\mu_k = d^{\pi_k}$ (at iteration $k$) or else a ``replay buffer''~\citep{Haarnoja18,Mnih2015,Lillicrap2015} formed as a mixture distribution over all past policies, such that $\mu_k = \sum_{i=1}^k d^{\pi_i}$.
However, as we will show in this paper, the choice of the sampling distribution $\mu$ is of crucial importance for the stability and efficiency of ADP algorithms, and that many commonly-used choices of this distribution can lead to convergence to sub-optimal solutions, even with online data collection. We analyze this issue in Section~\ref{sec:problem_description},
and then discuss a potential solution to this problem in Section~\ref{sec:method_description}.

\paragraph{Experiment setup for analysis.} For the purposes of the analysis of corrective feedback in Section~\ref{sec:problem_description}, we use tabular MDPs from \cite{fu19diagnosing}, which provide the ability to measure oracle quantities, such as the error against $Q^*$, the ground truth optimal Q-function. We remove other sources of error, such as sampling error, by providing all transitions in the MDP to the ADP algorithm. 
We simulate different data distributions by re-weighting these transitions. We use a two hidden layer feed-forward network to represent the Q-function. More details on this setup are provided in Appendix~\ref{sec:app_exps_gridworld}.
\section{Corrective Feedback in Q-Learning}
\label{sec:problem_description}

In this paper, we study how the policies induced by value functions learned via ADP result in data distributions that can \emph{fail} to correct systematic errors in those value functions, in contrast to non-bootstrapped methods (e.g., supervised learning of models).
That is, ADP methods lack ``corrective feedback.'' To define corrective feedback intuitively and formally, we start with a contextual bandit example, where the goal is to learn the optimal state-action value function $Q^*(s, a)$ which, for a bandit, is equal to the reward $r(s, a)$ for performing action $a$ in state $s$. At iteration $k$, the algorithm minimizes the estimation error of the Q-function: \mbox{$\expec_{s \sim \beta(s), a \sim \pi_k(a|s)} [|Q_k(s, a) - Q^*(s, a)|]$}. Using a greedy or Boltzmann policy $\pi_k(a|s)$ to collect data for training $Q_{k+1}$ leads the agent to choose actions $a$ with over-optimistic $Q_k(s, a)$ values at a state $s$, and observe the corresponding true $Q^*(s, a)$ values as a result. Minimizing this estimation error then leads to the errors being \emph{corrected}, as $Q_k(s, a)$ is pushed closer to match the true $Q^*(s, a)$ for actions $a$ with incorrectly high Q-values. This constructive interaction between online data collection and error correction is what we refer to as \textbf{corrective feedback}.

In contrast, as we will observe shortly, ADP methods that employ bootstrapped target values do not necessarily benefit from such corrective feedback, and therefore can converge to suboptimal solutions. Because value functions are trained with target values computed by applying the Bellman backup on the previous \emph{learned} Q-function, rather than the true optimal $Q^*$, errors in the previous Q-function at the states that serve as backup targets can result in incorrect Q-value targets at the current state. In this case, no matter how often the current transition is observed, the error at this transition is not corrected. 
Since ADP algorithms typically use data collected using past Q-functions for training, thus coupling the data distribution to the learned value function, this issue can make them unable to correct target value errors.

As shown in Figure~\ref{fig:visitation_doesnt_correct_eror} (experiment setup described at the end of Section~\ref{sec:backrgound}), state visitation frequencies of the latest policy in an ADP algorithm can often correlate \emph{positively} with increasing error, suggesting that visiting a state can actually \textit{increase} error at that state, in contrast to supervised learning in bandit problems, where this correlation is either negative (i.e., the error decreases) or 0.

\begin{figure}[!h]
\centering
\begin{subfigure}{.47\columnwidth}
  \includegraphics[width=0.7\textwidth,right]{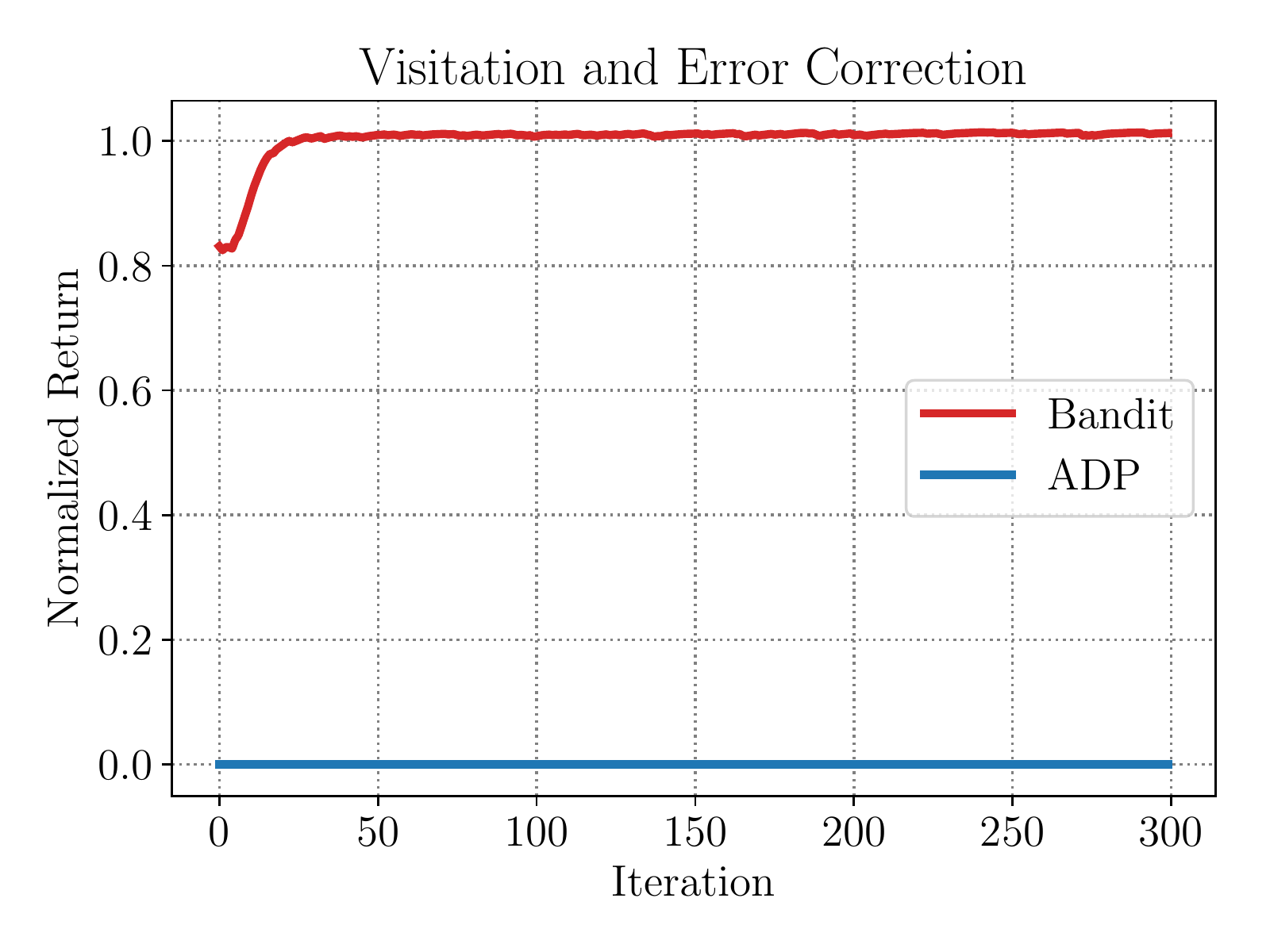}
\end{subfigure}%
~
\begin{subfigure}{.47\columnwidth}
  \includegraphics[width=0.7\textwidth,left]{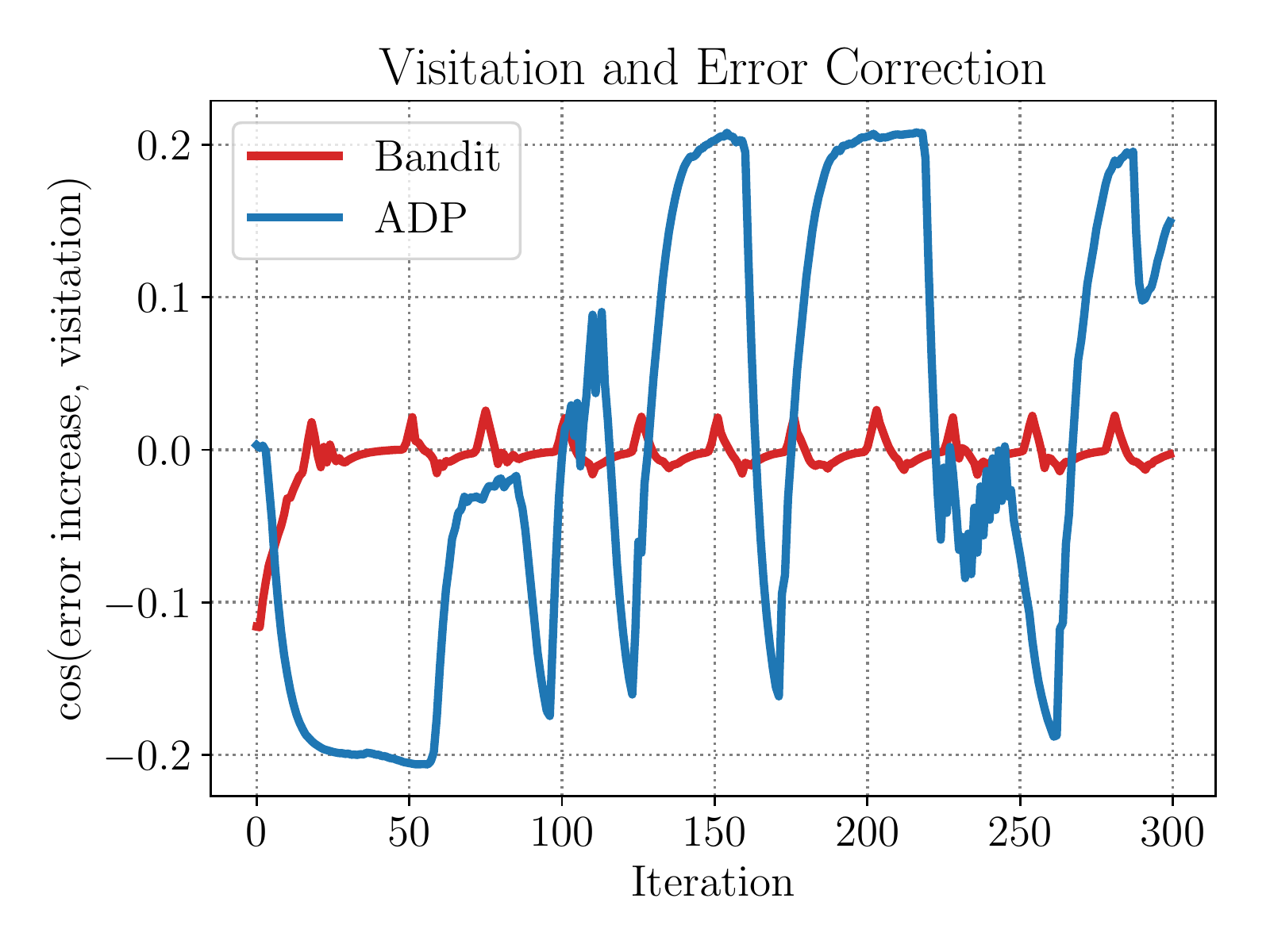}
\end{subfigure}
\caption{\footnotesize{\textbf{Left:} Return attained by with direct supervised learning of values in a bandit problem (``Bandit'') and with bootstrapped targets (``ADP'') on a simple tabular MDP. \textbf{Right:} Cosine similarity between per-iteration error increase $\mathcal{E}_{k+1} - \mathcal{E}_k$ and the policy's state visitation frequency $d^{\pi_k}$ for the two training runs. The ADP run performs poorly, and exhibits a \textit{positive} cosine similarity for prolonged periods during training, suggesting that the policy visitation at least at some states correlates with an \textbf{increase} in error. With ground truth targets (``Bandit''), this quantity is negative until convergence (around iteration 25), and then fluctuates around $0$, suggesting that this approach does benefit from corrective feedback.}}
\vspace{-15pt}
\label{fig:visitation_doesnt_correct_eror}
\end{figure}

Thus, na\"ively coupling the choice of data distribution $\mu$ and the $Q$ function being optimized, by sampling uniformly from a replay buffer collected by policies corresponding to prior Q-functions~\cite{Mnih2015}, or even just sampling from data collected by the latest policy, can cause an absence of corrective feedback. As we will show in Section~\ref{sec:consequences}, this can lead to several detrimental effects such as sub-optimal convergence, instability, and slow learning progress. To theoretically analyze the lack of corrective feedback, we first define error against the optimal value function as follows:

\begin{definition}
\label{eqn:value_of_feedback}
The value error is defined as the expected absolute error to the optimal Q-function $Q^*$ weighted under the corresponding on-policy ($\pi_k$) state-action marginal, $d^{\pi_k}$:
$$\mathcal{E}_k = \expec_{d^{\pi_{k}}}[|Q_k - Q^*|]. $$
\end{definition}

A smooth decrease in the value error $\mathcal{E}_k$ indicates that the algorithm is effectively correcting errors in the Q-function. If $\mathcal{E}_k$ fluctuates or increases, the algorithm may be making poor learning progress. When the value error $\mathcal{E}_k$ is roughly stagnant at a non-zero value, this may indicate premature, sub-optimal convergence, provided that the function approximator class is able to support lower-error solutions (which is typically the case for large deep networks).
Thus, having the learning process monotonically and quickly bring the value error $\mathcal{E}_k$ down to $0$ is desirable for effective learning.
By means of a simple didactic example, we now discuss some reasons why corrective feedback may be absent in ADP methods. We will then describe several examples that illustrate the negative consequences of absent corrective feedback on learning progress.

\subsection{A Didactic Example and Theoretical Analysis of an Absence of Corrective Feedback}
\label{sec:didactic_example_and_theory}

In this section, we first present a diadctic example which provides intuition for why corrective feedback may be absent in RL, and then we generalize this intuition to a more formal result in Section~\ref{sec:theoretical_analysis}.

\subsubsection{Didactic Example}
\label{sec:didactice_example}
{We first describe our didactic example which is a tree-structured deterministic MDP (Figures \ref{fig:on_policy_example} and \ref{fig:discor_sample}) with 7 states and 2 actions, $a_1$ and $a_2$, at each state. The MDP has deterministic transitions, where action $a_1$ transitions to a node's left child, and $a_2$ transitions to the right child. At each leaf node, the episode terminates and the agent receives a reward.}

{We illustrate the learning progress of Q-learning (Algorithm~\ref{alg:fqi}) on this tree-MDP in Figure~\ref{fig:on_policy_example} and the learning progress of a method with an optimal distribution in Figure~\ref{fig:discor_sample}. In the on-policy setting, Q-values at states are updated according to the visitation frequency of these states under the current policy.
Since the leaf nodes are the least likely in this distribution, the Bellman backup is slow to correct errors at the leaves. Using these incorrect leaf Q-values as target values for nodes higher in the tree then gives rise to incorrect
values, even if Bellman error is fully minimized for the sampled transitions. Thus, most of the Bellman updates do not actually bring the updated Q-values closer to $Q^*$.
If we carefully order the states, such that the lowest level nodes are updated first before proceeding upwards in the tree, as shown in Figure \ref{fig:discor_sample}, the errors are much lower. We will show in Section~\ref{sec:method_description} that our proposed approach in this paper aims at performing updates in the manner shown in Figure~\ref{fig:discor_sample} by re-weighting transitions collected by on-policy rollouts via importance sampling.}

\begin{figure*}
\centering
\includegraphics[width=0.9\linewidth]{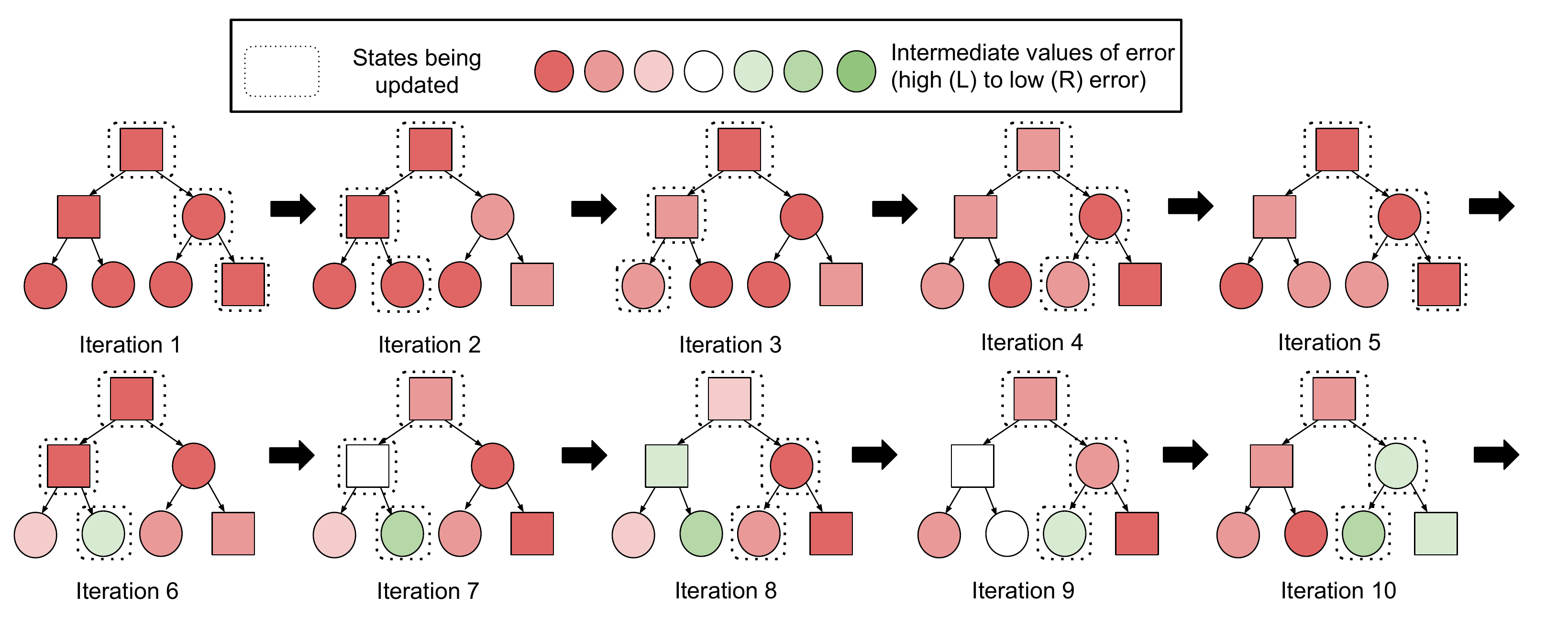}
    \caption{\footnotesize{Iterations of Q-learning on a tree-structured MDP. Trajectories are sampled using the current policy, with each trial (shown as dotted borders around states) starting from the root. Function approximation leads to aliasing of box-shaped nodes and circle-shaped nodes, such that updates to one circle-shaped node affect all other circles, and likewise for box-shaped nodes. Due to the training distribution and aliasing, this method often backs up incorrect target values. Due to aliasing, previously correct values at other states may become incorrect due to these erroneous backups, resulting in non-convergence. This issue is due to each of the leaf nodes (which cause the errors) being sampled less often than the nodes higher in the tree (which suffer from these errors), thus disabling  the algorithm from correct erroneous target values.}}
  \vspace{-15pt}
  \label{fig:on_policy_example}
\end{figure*}

\paragraph{Reasons for the absence of corrective feedback.} The above example provides an intuitive explanation for how on-policy or replay buffer distributions may not lead to error correction. Updates on states from such distributions may fail to correct Q-values at states that are the \textit{causes} of the errors. In general, Bellman backups rely on the correctness of the values at the states that are used as targets, which may mean relying on the correctness of states that occur least frequently in any distribution generated by online data collection. While this is sufficient to guarantee convergence in tabular settings, with function approximation this can lead to an absence of corrective feedback, as states with erroneous targets are visited more and more often, while the visitation frequency of the states that \emph{cause} these errors does not increase.

\begin{figure*}
\centering
\includegraphics[width=0.9\linewidth]{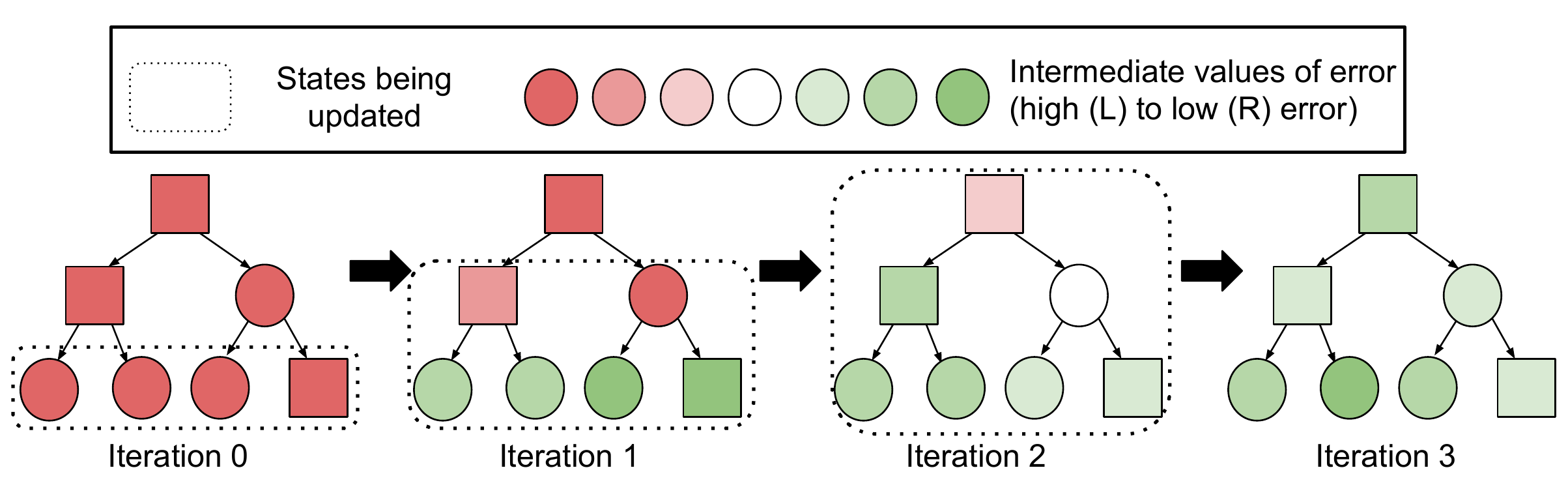}
    \caption{\footnotesize{Iterations of Q-learning on a tree-structured MDP with an optimal training distribution, where states are sampled starting from the leaf nodes, progressing upwards towards the root node in the tree. Note that this method backs up \textit{very few} incorrect values in any iterations, and takes only a few iterations to converge. Our aim will be to approximate such an optimal training distribution.}}
  \vspace{-10pt}
  \label{fig:discor_sample}
\end{figure*}

\subsubsection{Theoretical Analysis} 
\label{sec:theoretical_analysis}
We next present a theoretical result that generalizes the didactic example in a formal result. Proof for this result can be found in Appendix~\ref{sec:omitted_proofs}. 
Our result is a lower-bound on the iteration 
complexity of on-policy Q-learning. We show that, under the on-policy distribution, Q-learning may require exponentially many iterations to learn the optimal Q-function. This will also provide an explanation for the slow and unstable learning as described in Figure~\ref{fig:on_policy_example} and in Section~\ref{sec:consequences}, Figure~\ref{fig:sparse_reward}. We state the result next. An extension to the replay buffer setting is given in Appendix~\ref{sec:omitted_proofs}.

\begin{theorem}[Exponential lower bound for on-policy and replay buffer distributions]
\label{thm:exponential}
There exists a family of MDPs parameterized by $H > 0$, with $|\mathcal{S}| = 2^H$, $|\mathcal{A}| = 2$, such that even with features, $\Phi$ that can represent the optimal Q-function near-perfectly, i.e., $||Q^* - \Phi w||_{\infty} \leq \varepsilon$, 
on-policy or replay-buffer Q-learning, i.e. $D_k = d^{\pi_k}$, or $D_k = \sum_{i=1}^k d^{\pi_i}$ respectively, requires $\Omega\left(\gamma^{-H}\right)$ \textit{exact} Bellman projection steps for convergence to $Q^*$, if at all the algorithm converges to $Q^*$.  
\end{theorem}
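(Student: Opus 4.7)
The plan is to construct a family of hard instances that amplifies the didactic tree-MDP in Section~\ref{sec:didactice_example} to depth $H$, and then to exploit the mismatch between (i) the exponentially small discounted on-policy mass placed on leaves and (ii) the fact that $Q^*$ at the root depends on $H$ levels of backups rooted at those leaves. Concretely, I would use a complete binary tree MDP with $2^H$ states and two actions per state (left/right child), deterministic transitions, terminal leaves with carefully chosen distinct rewards, and a discount factor $\gamma \in (0,1)$. The rewards are arranged so that the action-gap at every internal node is strictly positive, which fixes a unique optimal policy $\pi^*$ and forces $Q^*$ to have information content localized at the leaves (the leaf rewards are the only source of any non-trivial value).

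Next I would pick the feature matrix $\Phi$. I would group states into a small number of ``aliasing classes'' (one class per tree depth, generalizing the box/circle pattern in Figure~\ref{fig:on_policy_example}), and then add one very small-magnitude ``corrector'' feature per state whose coefficient is at most $\varepsilon$. This gives $\|Q^* - \Phi w\|_\infty \le \varepsilon$ as required, while ensuring that any weighted least-squares projection $\Projmu$ with mass concentrated at the top of the tree essentially zeros out the corrector coefficients at deeper levels. The key lemma I would prove is that, for the on-policy distribution $d^{\pi_k}$ of any policy that visits the tree root-first, the mass at depth $h$ is $(1-\gamma)\gamma^h/(1-\gamma^H)$; thus the leaves collectively receive mass on the order of $\gamma^H$.

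I would then analyze one projected Bellman step. Because of the aliasing, the weighted least-squares projection onto $\Qclass$ at a given depth is a convex combination of target values within the class, with weights proportional to the per-state on-policy masses. A straightforward calculation shows that the resulting update contracts the target-value error at depth $h$ by a factor of at most $(1-c\,\gamma^{H-h})$ for an absolute constant $c$. Iterating this recursion level-by-level up the tree, and using $(\bellmanopt Q_k)(s,a) = r + \gamma\max_{a'} Q_k(\cdot,a')$ so that errors at deeper nodes are what drives the error at shallower nodes, I obtain $\mathcal{E}_k \ge \mathcal{E}_0 \cdot (1 - c\gamma^H)^k$. Demanding $\mathcal{E}_k \le \varepsilon$ then forces $k = \Omega(\gamma^{-H})$, giving the stated bound. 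For the replay-buffer case ($D_k = \sum_{i\le k} d^{\pi_i}$), I would note that every $d^{\pi_i}$ satisfies the same exponential-in-$H$ decay at leaves (since every policy in this MDP starts at the root and traverses $H$ levels), so the mixture inherits the same asymptotic depth profile and the same recursion applies, possibly with a slightly worse constant.

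The main obstacle I expect is the feature-construction step: one has to simultaneously guarantee \emph{near-perfect representability} of $Q^*$ (so the result is not vacuous) and \emph{enough aliasing} that the weighted projection collapses the depth-$h$ corrections whenever the corresponding weights are $O(\gamma^H)$. In particular, I need to rule out pathological bases under which the least-squares projection serendipitously decouples across depths and escapes the exponential rate. I would handle this by making the corrector features mutually orthogonal but of uniformly small norm, so that the projection coefficient on each corrector is exactly the depth-$h$ mass times the target error divided by the squared feature norm, and then pick the corrector norms so the multiplicative contraction at depth $h$ is bounded above by $1 - c\gamma^{H-h}$. A secondary subtlety is that the greedy policy after each projection may change; I would argue that as long as $\mathcal{E}_k$ remains $\Omega(\varepsilon)$ the greedy policy still places $\Theta(\gamma^h)$ mass at depth $h$, which is enough to keep the recursion running until the claimed iteration count is reached.
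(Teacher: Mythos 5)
Your high-level plan (tree MDP, exponentially small discounted mass at the deep nodes, then turn that into an iteration lower bound) matches the paper's construction in Theorem~\ref{thm:app_exp_lower_bound}, but the step that carries all the weight in your argument --- the feature construction --- has a genuine flaw. You propose per-depth aliasing classes plus ``one very small-magnitude corrector feature per state,'' mutually orthogonal, and you argue that the projection coefficient on each corrector is the depth-$h$ mass times the target error divided by the squared feature norm, so that small mass forces a contraction factor like $(1-c\,\gamma^{H-h})$. That formula describes a single gradient step (or a ridge-regularized regression), not the \emph{exact} projection $\Projmu$ in Equation~\ref{eqn:bellman_projection} that the theorem is about. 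For unconstrained weighted least squares, a feature supported on a single state is fit exactly at that state whenever its weight is strictly positive: the coefficient is the target error divided by the feature value there, and both the mass and the small feature norm cancel out of the argmin. Worse, once you include one corrector per state the linear class spans essentially all functions on the state space, so each exact Bellman projection reproduces $\bellmanopt Q_k$ wherever the distribution has support, and the algorithm converges in $O(H)$ iterations like value iteration --- the exponential lower bound is simply false for that feature class. The smallness of the corrector coefficients in the representation of $Q^*$ does not constrain the projection, because the projection is free to use arbitrarily large coefficients. To make the mass matter, the ``correctors'' must be shared among exponentially many states (total feature dimension $\mathrm{poly}(H)$, so the projection genuinely trades off errors according to $d^{\pi_k}$); this is exactly what the paper's construction via Corollary~\ref{cor:epsilon_rank_lemma} does, using near-orthonormal features of dimension $O(H^2/\varepsilon^2) \ll 2^H$ satisfying $\|I_{2^H}-\Phi\Phi^T\|_\infty\le\varepsilon$ together with Assumption~\ref{assumption:optimal_features}.

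Beyond that, the mechanisms differ: the paper puts a single nonzero reward at one leaf pair $(s^*,a^*)$ and never does a per-level contraction analysis; instead it bounds the per-iteration growth of the weight vector, $\|w_{k+1}\|_2 \le \|w_k\|_2 + O\bigl(\gamma^H (1-\bar p)^{H+1}\bigr)\cdot\mathrm{const}$, using $d^{\pi_k}(s^*,a^*)\le \gamma^H(1-\bar p)^{H+1}$, and lower-bounds $\|w^*\|_2$ from the fixed-point equation, so starting from $w_0=0$ at least $\Omega(\gamma^{-H})$ exact projections are needed; the replay-buffer case follows because the mixture $\tfrac1k\sum_i d^{\pi_i}$ obeys the same bounds. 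This sidesteps the two issues your route must confront head-on: proving a \emph{lower} bound on the error after each exact projection level-by-level (your $\mathcal{E}_k \ge \mathcal{E}_0(1-c\gamma^H)^k$ claim, whose exponent indexing $\gamma^{H-h}$ vs.\ the mass $\propto\gamma^h$ at depth $h$ is also inconsistent as written), and controlling how the greedy policy's changes interact with that recursion. If you want to salvage your per-level contraction picture, you must first replace the per-state correctors with heavily aliased low-dimensional features and then prove the contraction bound for the exact weighted least-squares minimizer under that aliasing; otherwise the norm-growth argument is the more robust path.
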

Thus, on-policy or replay buffer distributions can induce extremely slow learning in certain environments, even when all transitions are available for training (i.e., without any exploration issues), requiring exponentially many Bellman backups. When function approximation is employed, the smaller the frequency of a transition, the less likely Q-learning is to correct the Q-value of the state-action pair corresponding to this transition. In contrast, we show in Appendix~\ref{thm:discor_suboptimal} that the method we propose in this paper requires only $\mathrm{poly}(H)$ iterations in this scenario, and it behaves similarly to a method that learns Q-values from the leaf nodes, gradually moving upwards toward the root of the tree.

\subsection{Consequences of Absent Corrective Feedback}
\label{sec:consequences}
{In this section, we investigate the phenomenon of an absence of corrective feedback in a number of RL tasks.} We first plot the value error $\mathcal{E}_k$ for the Q-function by assuming access to $Q^*$ for analysis, but this is not available while learning from scratch. We first show that an absence of corrective feedback happens in practice. In Figures~\ref{fig:suboptimal_conv}, \ref{fig:instability} and \ref{fig:sparse_reward}, we plot $\mathcal{E}_k$ for on-policy and replay buffer sampling. 
Observe that ADP methods can suffer from prolonged periods where $\mathcal{E}_k$ is increasing or fluctuating, and returns degrade or stagnate (Figure~\ref{fig:instability}).
Next, we empirically analyze a number of pathological outcomes that can occur when corrective feedback is absent.

\begin{enumerate}
\item \textbf{Convergence to suboptimal Q-functions.} We find that on-policy sampling can cause ADP to converge to a suboptimal
solution, even in the absence of sampling error. {This is not an issue with the capacity of the function approximator -- even when the optimal Q-function $Q^*$ can be represented in the function class~\citep{fu19diagnosing}, learning converges to a suboptimal fixed point far from $Q^*$.} 
Figure~\ref{fig:suboptimal_conv} shows that the value error $\mathcal{E}_k$ rapidly decreases initially, and eventually converges to a value significantly greater than $0$, from which the learning process never recovers.

\item \textbf{Instability in the learning process.}
Q-learning with replay buffers may not converge to sub-optimal solutions as often as on-policy sampling (Figure~\ref{fig:exact_fqi_runs}). However, we observe that even then (Figure~\ref{fig:instability}), ADP with replay buffers can be
unstable. {For instance, the algorithm is prone to degradation even if the latest policy obtains returns that are very close to optimal returns  (Figure~\ref{fig:instability}).} This instability is often correlated with a lack of corrective feedback, and exists even with all transitions present in the buffer controlling for sampling error. 

\item \textbf{Inability to learn with low signal-to-noise ratio.} 
Lack of corrective feedback can also prevent ADP algorithms from learning quickly in scenarios with \textit{low} \textit{signal-to-noise}
ratio, such as tasks with sparse or noisy rewards (Figure~\ref{fig:sparse_reward}). For efficient learning, Q-learning needs to effectively ``exploit'' the reward signal even in the presence of noise and delay, and we find that the learning becomes significantly worse in the presence of these factors. {This is not an exploration issue, since all transitions in the MDP are provided to the algorithm}.
\end{enumerate}

\begin{figure}
    \begin{subfigure}[l]{0.3\linewidth}
    \centering
      \includegraphics[width=0.96\linewidth]{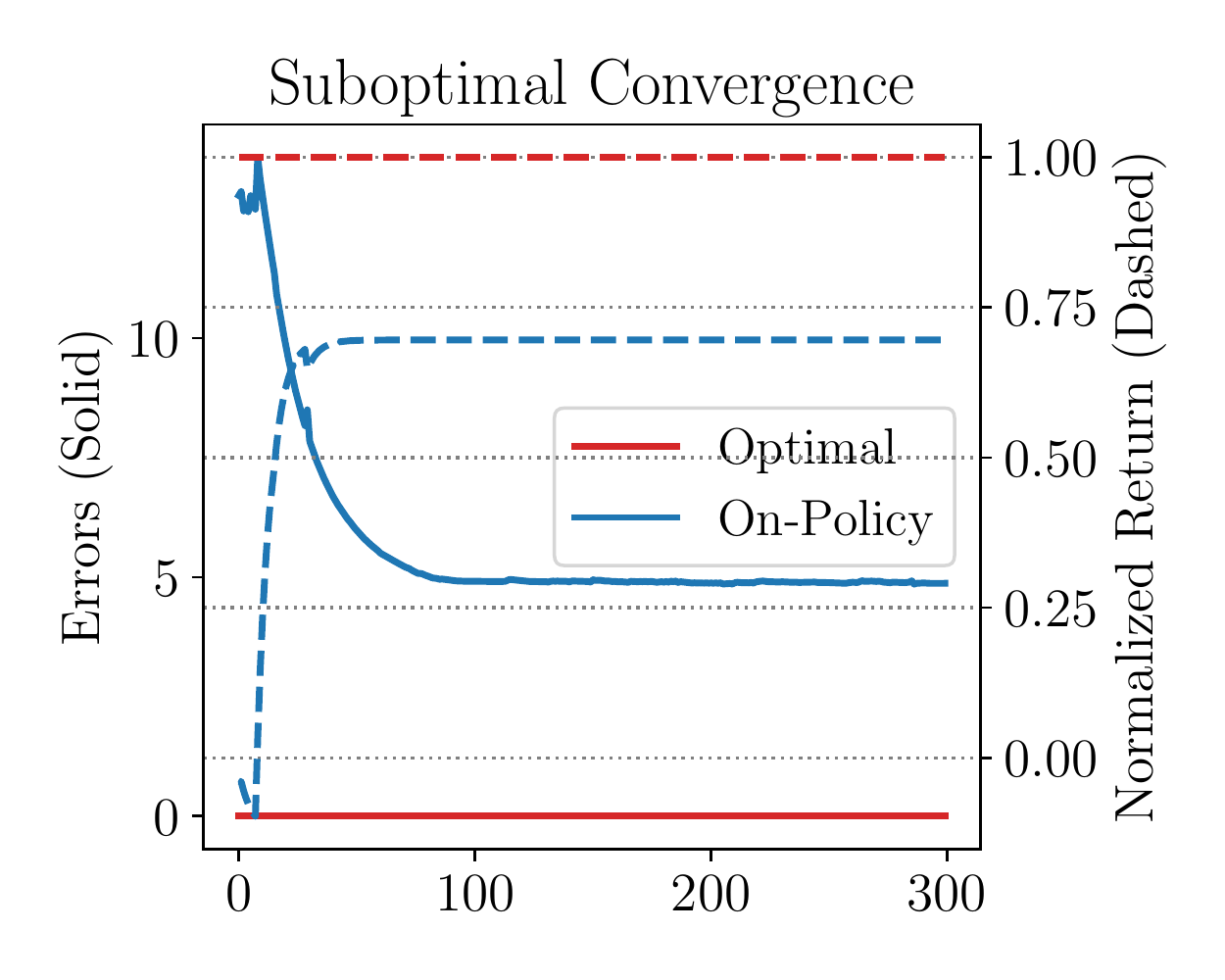}
      \caption{\footnotesize{Sub-optimal convergence for on-policy distributions: return (dashed) and value error (solid). Note that value error decreases rapidly at the start and finally converges to a nonzero value, leading to sub-optimal return.}}
      \label{fig:suboptimal_conv}
    \end{subfigure}
    ~
    \begin{subfigure}[l]{0.3\linewidth}
    \centering
    \includegraphics[width=0.95\linewidth]{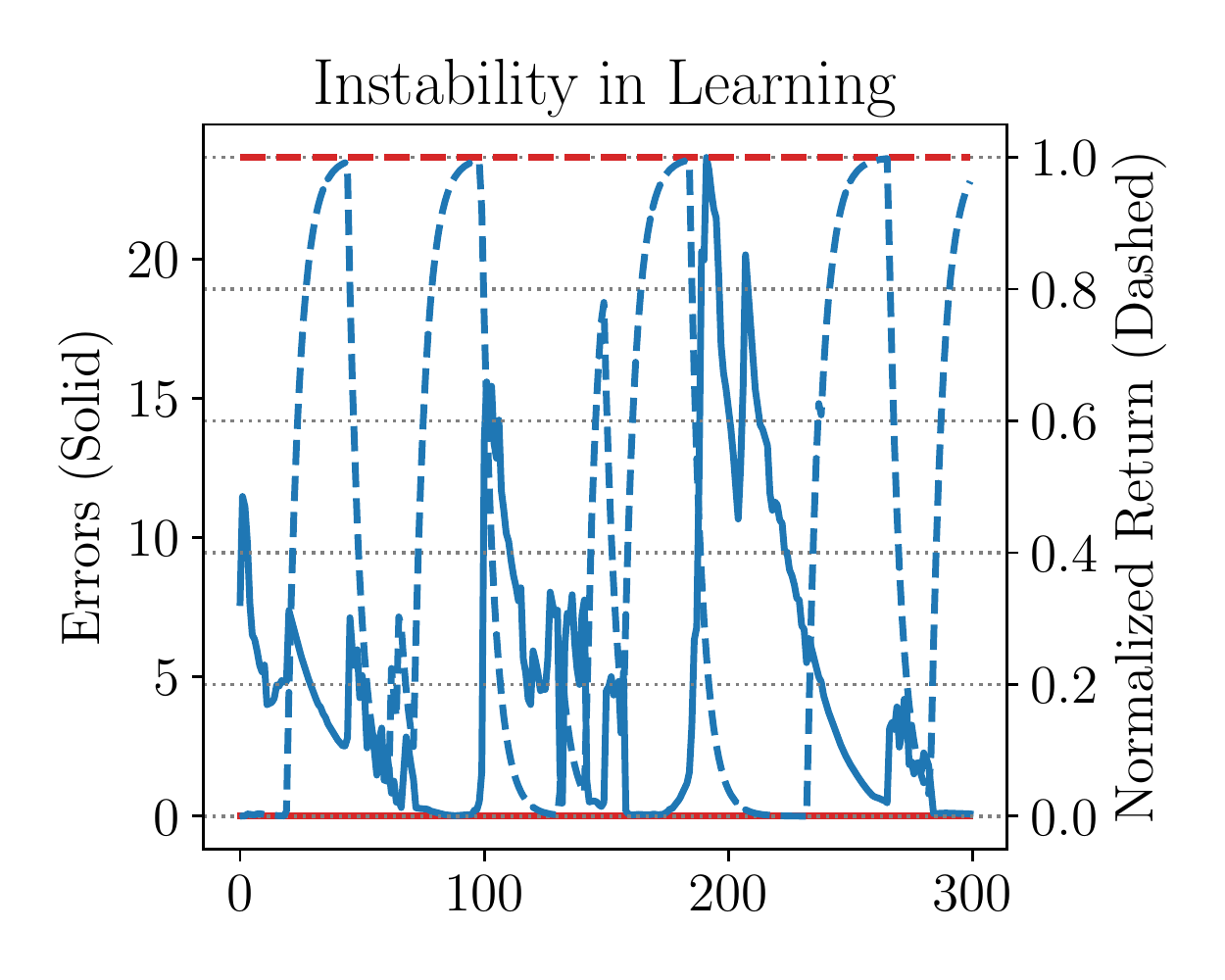}
    \caption{\footnotesize{Instability for replay buffer distributions: return (dashed) and value error (solid) over training iterations. Note the rapid increase in value error at multiple points, which co-occurs with instabilities in returns.}}
    \label{fig:instability}
    \end{subfigure}
    ~
    \begin{subfigure}[l]{0.3\linewidth}
    \centering
      \includegraphics[width=0.95\linewidth]{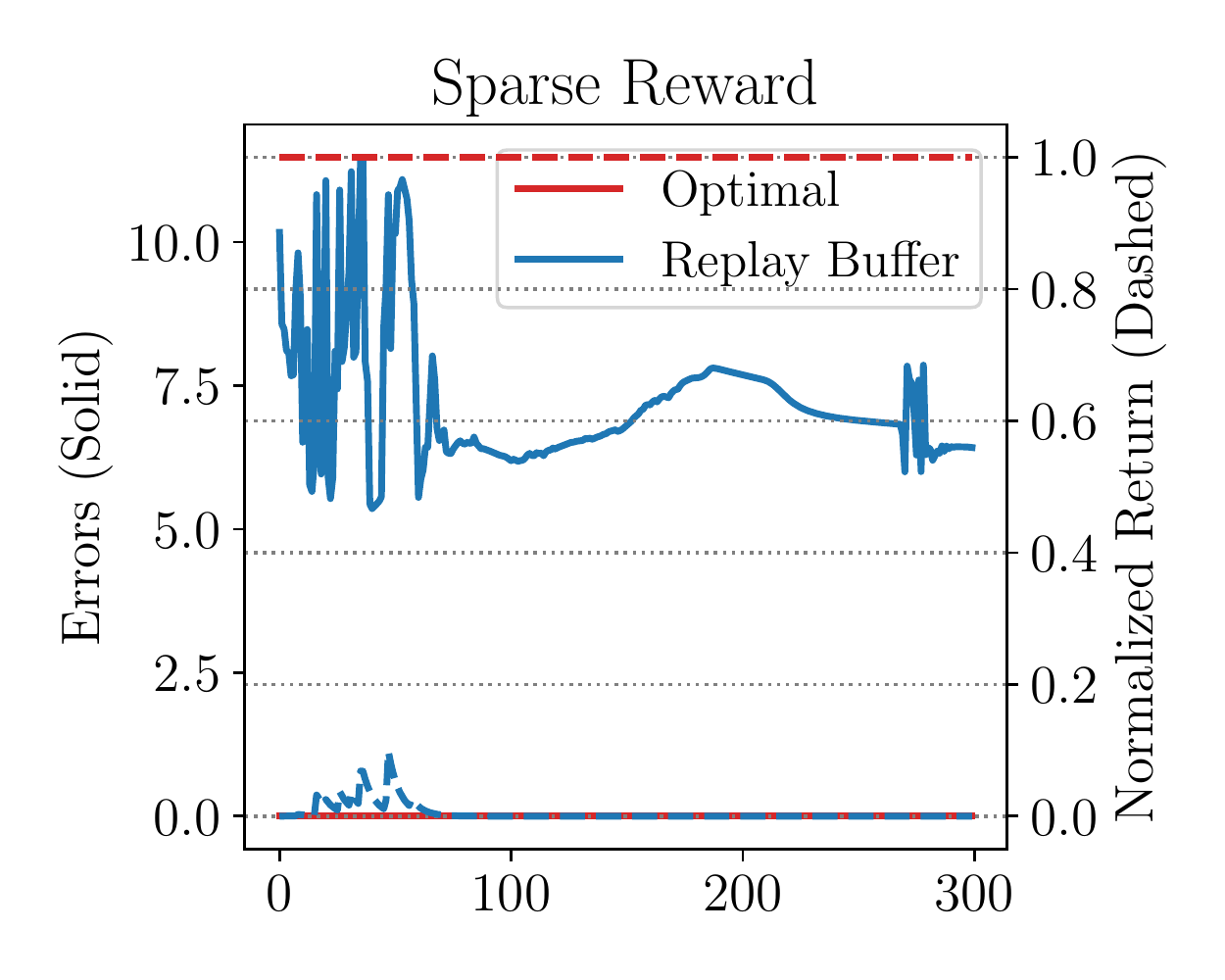}
    \caption{\footnotesize{Error (left) and returns (right) for sparse reward MDP with replay buffer distributions. Note the inability to learn, low return, and highly unstable value error $\mathcal{E}_k$, often increasing sharply, destabilizing the learning process.}}
    \label{fig:sparse_reward}
    \end{subfigure} 
    \caption{\footnotesize{Experiments showing various detrimental consequences of an absence of corrective feedback.}}
\end{figure}

\section{Optimal Distributions for Optimizing Corrective Feedback}
\label{sec:theoretical_derivation}
In the previous section, we observed that an absence of corrective feedback can occur when ADP algorithms na\"ively use the on-policy or replay buffer distributions for training Q-functions. However, if we can compute an ``optimal'' data distribution that provides maximal corrective feedback, and train Q-functions using this distribution, then we can ensure that the ADP algorithm always enjoys corrective feedback, and hence makes steady learning progress. In this section, we aim to derive a functional form for this optimal data distribution. 

We first present an optimization problem for this optimal data distribution, which we refer to as $p_k$ (different from $\mu$, which refers to the data distribution in the replay buffer, or the on-policy distribution), for any iteration $k$ of the ADP algorithm, and then present a solution to this optimization problem. 
Proofs from this section can be found in Appendix~\ref{sec:missing_proof_steps}. We will show in this section that the resulting optimization when approximated practically, yields a very simple and intuitive algorithm. A more intuitive description of the resulting algorithm can be found in Section~\ref{sec:method_description}

Our goal is to minimize the value error $\mathcal{E}_k$ at every iteration $k$, with respect to the the distribution $p_k$ used for Bellman error minimization at iteration $k$. 
The resulting optimization problem is:
\begin{align}
    \vspace{-20pt}
    \begin{split}
    \label{eqn:optimization}
        \min_{p_k}~~& \expec_{d^{\pi_{k}}} \left[ |Q_k - Q^*| \right] \\
    \text{~~s.t.~~} & Q_k = \arg\min_{Q} \expec_{p_k} \left[ (Q - \bellmanopt Q_{k-1})^2 \right]. 
    \end{split}
\end{align}
\begin{theorem}
\label{eqn:theorem_solution}
An optimal solution $p_k$ to optimization problem~\ref{eqn:optimization} satisfies the following relationship:
\begin{equation}
    \label{eqn:optimal_p}
    \vspace{-3pt}
    p_k(s, a) \propto \exp\left(-|Q_k - Q^*|(s, a)\right) \frac{|Q_k - \bellmanopt Q_{k-1}|(s, a)}{\lambda^*}
\end{equation}
where $\lambda^* \in \mathbb{R}^{+}$, is an optimal Lagrange dual variable that ensures $p_k$ is a valid distribution in Problem~\ref{eqn:optimization}. 
\end{theorem}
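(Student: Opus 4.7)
The plan is to derive the claimed form of $p_k$ by applying Lagrangian duality to the bilevel optimization in (\ref{eqn:optimization}). First I would view the outer objective as a function of $p_k$ alone by treating $Q_k$ as defined implicitly through the inner problem, so $Q_k = Q_k(p_k)$. The inner problem is an unconstrained weighted least-squares fit, so its first-order optimality condition
\[
\sum_{s,a} p_k(s,a)\bigl(Q_k(s,a) - \bellmanopt Q_{k-1}(s,a)\bigr)\grad{Q_k(s,a)} = 0
\]
holds at the implicitly defined $Q_k(p_k)$. I would then introduce a Lagrange multiplier $\lambda$ for the normalization constraint $\sum_{s,a} p_k(s,a) = 1$ together with KKT multipliers for the non-negativity constraints $p_k(s,a)\ge 0$.

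Next, I would compute the derivative of the outer objective with respect to $p_k(s,a)$ via implicit differentiation of the inner optimality condition. Perturbing $p_k$ at a fixed $(s,a)$ induces a first-order change in $Q_k$ whose magnitude at that coordinate is proportional to the Bellman residual $|Q_k - \bellmanopt Q_{k-1}|(s,a)$, since the projection actively ``pulls'' $Q_k$ toward $\bellmanopt Q_{k-1}$ with a strength set by the local density. Chaining this with the sensitivity of $\expec_{d^{\pi_k}}[|Q_k - Q^*|]$ to $Q_k$ then yields a stationarity condition in which each active coordinate $p_k(s,a)$ is pinned by a data-dependent factor involving both $|Q_k - Q^*|(s,a)$ and $|Q_k - \bellmanopt Q_{k-1}|(s,a)$.

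Finally, I would exploit KKT slackness: on the support of $p_k$, the stationarity equation determines $p_k(s,a)$ up to the normalizing constant $\lambda^*$, and the resulting fixed-point relation -- after recasting stationarity as an equation in $\log p_k$ and collecting the contribution of $|Q_k - Q^*|(s,a)$ that enters as a linear penalty -- produces the Gibbs form
\[
p_k(s,a)\propto \exp\bigl(-|Q_k-Q^*|(s,a)\bigr)\,\frac{|Q_k-\bellmanopt Q_{k-1}|(s,a)}{\lambda^*}.
\]
Normalization then identifies $\lambda^*$ as the positive dual variable enforcing $\sum_{s,a} p_k(s,a)=1$, and non-negativity is automatic from the exponential. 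The step I expect to be the main obstacle is making the appearance of the exponential factor rigorous: extracting a multiplicative (rather than additive) dependence of the optimal $p_k$ on $|Q_k - Q^*|$ requires careful handling of the implicit function theorem applied to the inner projection and of the log-barrier structure induced by simplex constraints. I would handle this by linearizing around the current iterate, treating the $|Q_k - Q^*|$ contribution as a penalty on $\log p_k$, and verifying that the exponential form is the unique stationary point consistent with both the inner projection condition and the simplex constraint.
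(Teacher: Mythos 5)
Your overall scaffolding (Lagrangian with simplex constraints, implicit differentiation of the inner weighted least-squares fit, stationarity plus complementary slackness) matches the paper's Steps 2--4, and your IFT intuition is exactly right: in the tabular/non-parametric case the paper computes $\frac{\partial Q_k}{\partial p_k} = -\diag\!\left(\frac{Q_k - \bellmanopt Q_{k-1}}{p_k}\right)$, and it is this $1/p_k$ factor, combined with stationarity against the multiplier $\lambda$, that produces the multiplicative term $|Q_k - \bellmanopt Q_{k-1}|/\lambda^*$.

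However, there is a genuine gap in how you propose to obtain the exponential factor $\exp\left(-|Q_k - Q^*|(s,a)\right)$, and you have (correctly) flagged it yourself as the weak point. If you differentiate the \emph{original} objective $\expec_{d^{\pi_k}}[|Q_k - Q^*|]$ through the chain rule, its sensitivity to $Q_k(s,a)$ is $d^{\pi_k}(s,a)\,\mathrm{sgn}(Q_k - Q^*)(s,a)$, which is linear in the visitation weights and contains no exponential; chaining with the IFT gradient and stationarity would yield something like $p_k(s,a) \propto d^{\pi_k}(s,a)\,|Q_k - \bellmanopt Q_{k-1}|(s,a)/\lambda$, not the claimed form. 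Your proposed remedy --- ``treating the $|Q_k - Q^*|$ contribution as a penalty on $\log p_k$'' --- is not available from the Lagrangian you set up: the simplex constraints contribute only the linear multiplier $\lambda$ and the nonnegativity multipliers, and there is no entropy or log-barrier term in the original problem that would convert a linear penalty into a Gibbs/exponential weighting, so this step amounts to assuming the conclusion. The missing idea in the paper is to first replace the objective by a surrogate upper bound via the Fenchel--Young inequality with $f$ chosen as the soft-min, $f(x) = -\log\big(\sum_i e^{-x_i}\big)$ and $f^*= \mathcal{H}$, bounding the entropy of $d^{\pi_k}$ by a constant; the optimization is then over $-\log\big(\sum_{s,a}\exp(-|Q_k - Q^*|(s,a))\big)$, whose gradient with respect to $Q_k(s,a)$ is the softmin weight proportional to $\exp(-|Q_k - Q^*|(s,a))$ --- this is precisely where the exponential enters, and it also removes the awkward dependence of the objective on $d^{\pi_k}$ (which itself depends on $Q_k$). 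Without this surrogate step, or some equivalent substitute, your derivation would terminate at a different (non-exponential) stationarity condition and would not establish the theorem as stated.
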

\begin{proof}
\textbf{(Sketch)} A complete proof is provided in Appendix~\ref{sec:missing_proof_steps}. Here we provide a rough sketch of the steps involved for completeness. We first use the Fenchel-Young inequality~\cite{rockafellar-1970a} to obtain an upper bound on the true objective, in terms of the ``soft-min'' of the errors $|Q_k - Q^*|$ and then minimize this upper bound. Formally, this is given by the RHS of the equation below, $\mathcal{H}$ denotes Shannon entropy.
\begin{equation*}
    \small{
    \expec_{d^{\pi_k}} \left[ |Q_k \!-\! Q^*| \right] \leq  \mathcal{H} \left( d^{\pi_k} \right) \!-\!\log \left(\!\sum_{s, a}\exp(-|Q_k \!-\! Q^*|) \!\right).}
\end{equation*}
Then, we solve for $p_k$ by setting the gradient of the Lagrangian for Problem~\ref{eqn:optimization} to $0$, which requires an addition of constraints $\sum_{s, a} p_k(s, a) = 1$ and $p_k(s, a) > 0$ to ensure that $p_k$ is a valid distribution. We then use the implicit function theorem (IFT)~\citep{Krantz2002TheIF} to compute implicit gradients of $Q_k$ with respect to $p_k$. IFT is required for this step since $Q_k$ is an output of a minimization procedure that uses $p_k$ as an input. Rearranging the expression gives the above result.    
\end{proof}

Intuitively, $p_k$, shown in Equation~\ref{eqn:optimal_p}, assigns higher probability to state-action tuples with high Bellman error $|Q_k - \bellmanopt Q_{k-1}|$, but only when the resulting Q-value $Q_k$ is close to $Q^*$. However, the expression for $p_k$ contains terms that depend on $Q^*$. Since both $Q_k$ and $Q^*$ are observed only \emph{after} $p_k$ is chosen, we need to estimate these quantities using surrogate quantities which we discuss in the following section.

\vspace{-5pt}
\subsection{Tractable Approximation to Optimal Distribution}
\label{sec:minimax}
\vspace{-2pt}
The expression for $p_k$ in Equation~\ref{eqn:optimal_p} contains terms dependent on $Q^*$ and $Q_k$, namely $|Q_k - Q^*|$ and $|Q_k - \bellmanopt Q_{k-1}|$. As described previously, since both $Q_k$ and $Q^*$ are observed only \emph{after} $p_k$ is chosen, in this section, we develop surrogates to estimate these quantities. For error against $Q^*$, we show that the cumulative sum of discounted Bellman errors over the past iterations of training, denoted as $\Delta_k$, shown in Equation~\ref{eqn:delta_k}, is a valid proxy (equivalent to an upper bound) for $|Q_k - Q^*|$. In fact, Theorem~\ref{thm:delta_k_upper_bound} shows that $\Delta_k$, offset by a state-action independent constant, is a tractable upper bound on $|Q_k - Q^*|$ constructed only from prior Q-function iterates, $Q_0, \cdots, Q_{k-1}$. 
\vspace{-2pt}
\begin{align}
\begin{split}
    \Delta_k = & \sum_{i=1}^{k} \gamma^{k-i} \left(\prod_{j=i}^{k-1} P^{\pi_{j}} \right) |Q_{i} - \bellmanopt Q_{i-1}| \\
   \implies  \Delta_k = & |Q_k - \bellmanopt Q_{k-1}| + \gamma P^{\pi_{k-1}} \Delta_{k-1} .
    \label{eqn:delta_k}
\end{split}
\end{align}
The following theorem formally states this result. 
\begin{theorem}
\label{thm:delta_k_upper_bound}
There exists a $k_0 \in \mathbb{N}$, such that  $\forall~ k \geq k_0$ and $\Delta_k$ from Equation~\ref{eqn:delta_k}, $\Delta_k$ satisfies the following inequality, pointwise, for each $s, a$:
\vspace{-10pt}
\begin{equation*}
    \Delta_k + \sum_{i=1}^k \gamma^{k-i} \alpha_i \geq |Q_k - Q^*|, \text{~~} \alpha_i = \frac{2 R_{\text{max}}}{1 - \gamma} \tvd(\pi_i, \pi^*)
\end{equation*}
\end{theorem}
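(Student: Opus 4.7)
The plan is to establish the bound inductively in $k$, peeling off one Bellman backup at a time so that the gap $|Q_k - Q^*|$ decomposes into a one-step Bellman error term (which will eventually coalesce into $\Delta_k$) plus a propagated version of $|Q_{k-1} - Q^*|$ plus a policy-mismatch correction that is exactly of the form $\gamma \alpha_{k-1}$. First I would use $Q^* = \bellmanopt Q^*$ together with the triangle inequality to write
\begin{equation*}
|Q_k - Q^*| \;\leq\; |Q_k - \bellmanopt Q_{k-1}| \;+\; |\bellmanopt Q_{k-1} - \bellmanopt Q^*|.
\end{equation*}
The first summand is precisely the fresh Bellman-error term that will appear as the $i=k$ contribution in the definition of $\Delta_k$ from equation~\eqref{eqn:delta_k}.

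Next I would bound $|\bellmanopt Q_{k-1} - \bellmanopt Q^*|$ via the greedy-policy representation $\bellmanopt Q = r + \gamma P^{\pi_Q} Q$, where $\pi_Q$ is greedy with respect to $Q$. Inserting the bridge policy $\pi_{k-1}$ (greedy in $Q_{k-1}$) gives the decomposition
\begin{equation*}
\bellmanopt Q_{k-1} - \bellmanopt Q^* \;=\; \gamma\, P^{\pi_{k-1}}(Q_{k-1} - Q^*) \;+\; \gamma\bigl(P^{\pi_{k-1}} - P^{\pi^*}\bigr) Q^*.
\end{equation*}
Taking absolute values, the first summand yields $\gamma P^{\pi_{k-1}} |Q_{k-1} - Q^*|$, while the second is controlled by the standard inequality $\bigl|(P^{\pi_{k-1}} - P^{\pi^*}) Q^*\bigr| \leq 2\,\norminf{Q^*}\, \tvd(\pi_{k-1}, \pi^*)$ combined with $\norminf{Q^*} \leq R_{\max}/(1-\gamma)$, giving exactly $\gamma \alpha_{k-1}$. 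Putting the two pieces together produces the one-step recurrence
\begin{equation*}
|Q_k - Q^*| \;\leq\; |Q_k - \bellmanopt Q_{k-1}| \;+\; \gamma P^{\pi_{k-1}} |Q_{k-1} - Q^*| \;+\; \gamma \alpha_{k-1}.
\end{equation*}

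I would then unroll this recurrence down to $|Q_0 - Q^*|$. The accumulated Bellman-error contributions assemble into precisely the expression for $\Delta_k$ in equation~\eqref{eqn:delta_k}, and the policy-mismatch contributions accumulate into $\sum_i \gamma^{k-i} \alpha_i$, using $P^{\pi} c = c$ for any scalar $c$ so that the Markov kernels drop out of the correction sum. The only leftover is the initialization residue $\gamma^k \bigl(\prod_{j=0}^{k-1} P^{\pi_j}\bigr) |Q_0 - Q^*| \leq \gamma^k \cdot 2R_{\max}/(1-\gamma)$, which decays geometrically in $k$; this is exactly why a threshold $k_0$ is needed, chosen large enough that the residue is dominated by (or absorbable into) some term of the $\alpha$-sum.

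The main obstacle to a fully clean proof is the policy-mismatch step: making the inequality $\bigl|(P^{\pi_{k-1}} - P^{\pi^*}) Q^*\bigr| \leq \tfrac{2R_{\max}}{1-\gamma}\tvd(\pi_{k-1}, \pi^*)$ hold \emph{pointwise} in $(s,a)$ (as the theorem states) effectively requires interpreting $\tvd(\pi_i, \pi^*)$ as a uniform (sup-over-states) TV distance, or else promoting $\alpha_i$ to a state-dependent quantity and carrying it through the unrolling. A related bookkeeping issue is the index shift between the recursion's natural range $\alpha_{k-1},\ldots,\alpha_0$ and the stated range $\alpha_1,\ldots,\alpha_k$; absorbing the $\alpha_0$ term and the initialization residue into the stated sum is what ultimately fixes the value of $k_0$.
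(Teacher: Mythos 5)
Your proposal is correct and follows essentially the same route as the paper: the identical triangle-inequality decomposition through $\bellmanopt Q_{k-1}$, the same bridge-policy splitting with the bound $\gamma\,|(P^{\pi_{k-1}}-P^{\pi^*})Q^*| \leq \gamma\,\tfrac{2R_{\max}}{1-\gamma}\max_s \tvd(\pi_{k-1},\pi^*)$, and then unrolling the resulting recursion while letting the $\gamma^k|Q_0-Q^*|$ initialization residue decay geometrically to justify the threshold $k_0$ (the paper packages the unrolling as an induction on an auxiliary estimator $\Delta'_k$ and then compares it with the randomly initialized $\Delta_k$, which is the same argument). Your caveat about interpreting $\tvd(\pi_i,\pi^*)$ as a sup-over-states quantity for the pointwise claim is well taken and is in fact how the paper's own lemma states it.
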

\begin{proof}
\textbf{(Sketch)} A full proof is provided in Appendix~\ref{app:other_proofs}. The key insight in this argument is to use a recursive inequality, presented in Lemma~\ref{thm:worst_case_estimator}, App.~\ref{app:other_proofs}, to decompose $|Q_k - Q^*|$, which allows us to show that $\Delta_k + \sum_i \gamma^{k-i} \alpha_i$ is a solution to the corresponding recursive equality, and hence, an upper bound on $|Q_k - Q^*|$. Note that, the initialization $|Q_0 - Q^*|$ matters only infinitesimally once, $k \geq k_0$, with $k_0$ being such that $\gamma^{k_0} |Q_k - Q^*| < 1$, therefore, agnostic to the initialization of $\Delta$, $\Delta_0$, we note that the statement in the theorem holds true for large-enough $k$. 
\end{proof}

\paragraph{Estimating $|Q_k - \bellmanopt Q_{k-1}|$.} The expression for $p_k$ in Equation~\ref{eqn:optimal_p} also includes an unobserved Bellman error multiplier term, $|Q_k - \bellmanopt Q_{k-1}|$ as well. With no available information about $Q_k$ -- which will only be observed \textit{after} Bellman error minimization under $p_k$ -- a viable approximation is to bound this term $|Q_k - \bellmanopt Q_{k-1}|$ between the minimum and maximum Bellman errors obtained at the previous iteration, $c_1 = \min_{s, a} |Q_{k-1} - \bellmanopt Q_{k-2}|$ and $c_2 = \max_{s, a} |Q_{k-1} - \bellmanopt Q_{k-2}|$. We can simply restrict the support of state-action pairs $(s, a)$ used to compute $c_1$ and $c_2$ to come from transitions observed in the replay buffer used for the Q-function update, to ensure that both $c_1$ and $c_2$ are finite. 

\paragraph{Re-weighting the replay buffer $\mu$.} Since it is challenging to directly obtain samples from $p_k$ via online interaction, a practically viable alternative is to instead perform weighted Bellman updates by re-weighting transitions drawn from the a regular replay buffer $\mu$ using importance weights given by $w_k = \frac{p_k(s, a)}{\mu(s, a)}$. However, na\"ive importance sampling often suffers from high-variance of these importance weights, leading to instabilities in learning. To prevent such issues, instead of directly re-weighting to $p_k$, we re-weight samples from $\mu$ to a projection of $p_k$, denoted as $q_k$, that is still close to $\mu$ under the KL-divergence metric: ${q_k = \arg \min_q \kldiv(q||p_k) + (\tau - 1) \kldiv(q||\mu)}$, where $\tau$ is a scalar, $\tau > 1$.
The equation for weights $w_k$, in this case, is thus given by:
\begin{equation}
    \label{eqn:optimal_wk}
    w_k(s, a) \propto \exp\left(\frac{-|Q_k - Q^*|(s, a)}{\tau}\right) \frac{|Q_k - \bellmanopt Q_{k-1}|}{\lambda^*}
\end{equation}
A derivation is provided in Appendix~\ref{app:other_proofs}.
Having described all approximations, we now discuss how to obtain a tractable and practically usable data distribution for training the Q-function that maximally mitigates error accumulation.

\subsection{Putting it All Together}
\label{sec:putting_all}
We have noted all practical approximations to the expression for optimal $p_k$ (Equation~\ref{eqn:optimal_p}), including estimating surrogates for $Q_k$ and $Q^*$, and the usage of importance weights to develop a method that can achieve the benefits of the optimal distribution, simply by \textit{re-weighting transitions} in the replay buffer, \textit{rather than altering the exploration strategy}. We also discussed a technique to reduce the variance of weights used for this reweighting. We now put these techniques together to obtain the final, practically tractable expression for the weights used for our practical approach.

We note that the term $|Q_k - Q^*|$, appearing inside the exponent in the expression for $w_k$ in Equation~\ref{eqn:optimal_wk} can be approximated by the tractable upper bound $\Delta_k$. However, computing $\Delta_k$ requires the quantity $|Q_k - \bellmanopt Q_{k-1}|$ which also is unknown when $w_k$ is being chosen. Combining the upper bound on $|Q_k - \bellmanopt Q_{k-1}| \leq c_2$, Theorem~\ref{thm:delta_k_upper_bound} and Equation~\ref{eqn:delta_k}, we obtain the following bound:
\begin{equation}
    \label{eqn:optimal_p_approx}
    |Q_k - Q^*| \leq \gamma P^{\pi_{k-1}} \Delta_{k-1} + c_2 + \sum_i \gamma^i \alpha_i
\end{equation}
Using this bound in the expression for $w_k$, along with the lower bound, $|Q_k - \bellmanopt Q_{k-1}| \geq c_1$, we obtain the following lower bound on weights $w_k$:
\begin{equation}
    \label{eqn:worst_case_weights}
    w_k \propto \exp\left(\frac{-c_2 -\gamma \left[P^{\pi_{k-1}} \Delta_{k-1}\right](s, a)}{\tau}\right) \frac{c_1}{\lambda^*}
\end{equation}
Finally, we note that using a worst-case lower bound for $w_k$ (Equation~\ref{eqn:worst_case_weights}) will down-weight some additional transitions which in reality lead to low error accumulation, but this scheme will never up-weight a transition with high error, thus providing for a ``conservative'' distribution. A less conservative expression for getting these weights is a subject of future work.
Simplifying the constants $c_1$, $c_2$ and $\lambda^*$, the final expression for the practical choice of $w_k$ is:
\begin{equation}
    w_k(s, a) \propto \exp{\left(-\frac{\gamma \left[P^{\pi_{k-1}} \Delta_{k-1}\right](s, a)}{\tau} \right)}. %
    \label{eqn:importance_weights}
\end{equation}

\section{Distribution Correction (DisCor) Algorithm}
\label{sec:method_description}
In this section, we present the resulting algorithm, that uses weights $w_k$ from Equation~\ref{eqn:importance_weights} to re-weight the Bellman backup in order to induce corrective feedback. We first present an intuitive explanation of our algorithm, and then describe the implementation details.

\begin{algorithm}[t]
\small
\caption{\textbf{DisCor (Distribution Correction)}}
\label{alg:discor}
\begin{algorithmic}[1]
    \STATE Initialize Q-values $Q_\theta(s, a)$, initial distribution $p_0(s, a)$, a replay buffer $\mu$, and an \textdiff{error model} $\Delta_\phi(s, a)$.
    \FOR{step $k$ in \{1, \dots, N\}}
        \STATE Collect $M$ samples using $\pi_k$, add them to replay buffer $\mu$, sample $\{(s_i, a_i)\}_{i=1}^N \sim \mu$
        \STATE Evaluate $Q_\theta(s,a)$ and $\Delta_\phi(s, a)$ on samples $(s_i, a_i)$.
        \STATE Compute target values for $Q$ and $\Delta$ on samples:\\
        ${y}_i = r_i + \gamma \max_{a'} Q_{k-1}(s'_i, a')$\\ $\hat{a}_i = \arg \max_{a} Q_{k-1}(s'_i, a) $\\
        $\hat{\Delta}_{i} = |Q_\theta(s, a) - y_i| + \gamma \Delta_{k-1}(s'_i, \hat{a}_i)$
        \STATE \textdiff{Compute $w_k$ using Equation~\ref{eqn:importance_weights}}. 
        \STATE Minimize Bellman error for training $Q_\theta$ weighted by $w_k$. \\
        $ \theta_{k+1} \leftarrow \argmin{\theta} \frac{1}{N}\sum_{i=1}^N \textdiff{w_k(s_i, a_i)} \cdot (Q_\theta(s_i,a_i) - y_i)^2$
        \STATE \textdiff{Minimize ADP error for training $\phi$. \\
        $\phi_{k+1} \leftarrow \argmin{\phi} \frac{1}{N} \sum_{i=1}^N (\Delta_\theta(s_i, a_i) - \hat{\Delta}_i)^2$}
    \ENDFOR
\end{algorithmic}
\end{algorithm}

\paragraph{Intuitive Explanation.} Using weights $w_k$ in Equation~\ref{eqn:importance_weights} for weighting Bellman backups possess a very clear and intuitive explanation. $P^{\pi_{k-1}} \Delta_{k-1}$ corresponds to the estimated upper bound on the error of the target values for the current transition, due to the backup operator $P^{\pi_{k-1}}$. Intuitively, this implies that weights $w_k$ \textit{downweight} those transitions for which the bootstrapped \emph{target} Q-value estimate has a high estimated error to $Q^*$, 
effectively focusing the learning on samples where the supervision (target value) is accurate,
which are precisely the samples that we expect maximally improve the accuracy of the Q function. 
This prevents error accumulation, and hence provides correct feedback. Such a scheme also resembles prior methods for learning with noisy labels by ``abstention'' from training on labels that are likely to be inaccurate~\citep{absention}.

\paragraph{Details.} Pseudocode for our approach, which we call \textbf{DisCor} (\textbf{Dis}tribution \textbf{Cor}rection), is presented in Algorithm~\ref{alg:discor}, with the main differences from standard ADP methods highlighted in red. In addition to a standard Q-function, DisCor trains another parametric model, $\Delta_\phi$, to estimate $\Delta_k(s, a)$ at each state-action pair. The recursion in Equation~\ref{eqn:delta_k} is used to obtain a simple approximate dynamic programming update rule for the parameters $\phi$ (Line 8).
The second change is the introduction of a weighted Q-function backup with weights $w_k(s, a)$, as shown in Equation~\ref{eqn:importance_weights}, on Line 7. 
We also present a practical implementation of the DisCor algorithm, built on top of standard DQN/SAC algorithm pseudocodes in Algorithm~\ref{alg:practical_alg}, Appendix~\ref{app:exp_details}.

\section{Related Work}
\vspace{-2pt}
Prior work has pointed out a number of issues arising when dynamic programming is used with function approximation. Some work~\citep{munos2005error,farahmand2010error,munos2008finite} focused on analysing error induced from the typically used projected Bellman operator under the assumption of an abstract error model. Further, fully gradient-based objectives for Q-learning~\citep{Sutton09a,Sutton09b,maei09nonlineargtd} gave rise to
convergent ADP algorithms with function approximation. 
In contrast to these works, which mostly focus on ensuring {convergence of the Bellman backup}, we focus on the interaction between the ADP update and the data distribution $\mu$. 
On the other hand, prior work on fully online Q-learning from a stochastic approximation viewpoint analyzes time-varying $\mu$, but in the absence of function approximation~\citep{Watkins92,Tsitsiklis1994,devraj2017zap}, or at the granularity of a single time-step in the environment~\citep{Tsitsiklis97ananalysis}, unlike our setting. Our setting involves both a time-varying $\mu$ depending on the prior and latest Q-functions as well as function approximation.  %

A number of prior works have focused on studying non-convergence and generalization effects in ADP methods with deep net function approximators, both theoretically~\citep{Achiam2019TowardsCD} and empirically~\citep{fu19diagnosing,martha2018sparse,kumar19bear}. In this paper, we study a different issue that is distinct from non-convergence and generalization {issues due to the specific choice of deep net function approximators:} the interaction between the data distribution and the fitting error in the value function.

In the fully offline setting, prior works have noted that sampling distributions can affect performance of ADP methods~\citep{fu19diagnosing}. This has been generally aimed at resolving what is typically known as the ``deadly triad''~\citep{suttonrlbook}: {divergence} caused by an interaction between function approximation, bootstrapped updates and off-policy sampling, resulting in the development of batch RL algorithms that choose sampling schemes~\citep{Kolter2011TheFP,sutton16emphatic} {for guaranteed convergence}. However, we show that sampling distributions can have a drastic impact on the learning process even where the algorithm performs online (on-policy) interaction to collects its own data.
\cite{schaul2019ray} studies the interaction between data collection and training for multi-objective policy gradient methods and note the bias towards optimizing only a few components of the objective that arises.

Our proposed algorithm weights the transition in the buffer based on an estimate of their error to the true optimal value function.
Related to our approach, prioritized sampling has been used previously in ADP methods, such as PER~\citep{Schaul2015}, to prioritize transitions with higher Bellman error. We show in Section~\ref{sec:experiments}, that this choice may fail to perform in many cases. 
Recent work~\citep{du2019distributioncheck} has attempted to use a distribution-checking oracle to control the amount of exploration performed. Our work aims at ensuring corrective feedback, solely by re-weighting the data distribution for Q-learning with bootstrapped backups. 

We further discuss the relationship with other prior works in an extended related work section in  Appendix~\ref{app:related_work_extended}.

\section{Experimental Evaluation of DisCor}
\label{sec:experiments}
The goal of our empirical evaluation is to study the following questions: 
\begin{enumerate}
    \item Can DisCor ensure continuous corrective feedback in RL tasks, mitigating the issues raised in Section~\ref{sec:consequences}? 
    \item How does DisCor compare to prior methods, including those that also reweight the data in various ways?
    \item Can DisCor attain good performance in challenging settings, such as multi-task RL or noisy reward signals?
    \item How do approximations from Section~\ref{sec:theoretical_derivation} affect the efficacy of DisCor in mitigating error accumulation?
\end{enumerate}
We start by presenting a detailed analysis on tabular MDPs with function approximation, studying each component of DisCor in isolation,
and then study six challenging robotic manipulation tasks, the multi-task MT10 benchmark from MetaWorld~\citep{yu2019meta}, and three Atari games~\citep{bellemare2013ale}.

\begin{figure}
    \centering
    \begin{subfigure}[t]{.99\linewidth}
        \centering
        \includegraphics[width=0.17\linewidth]{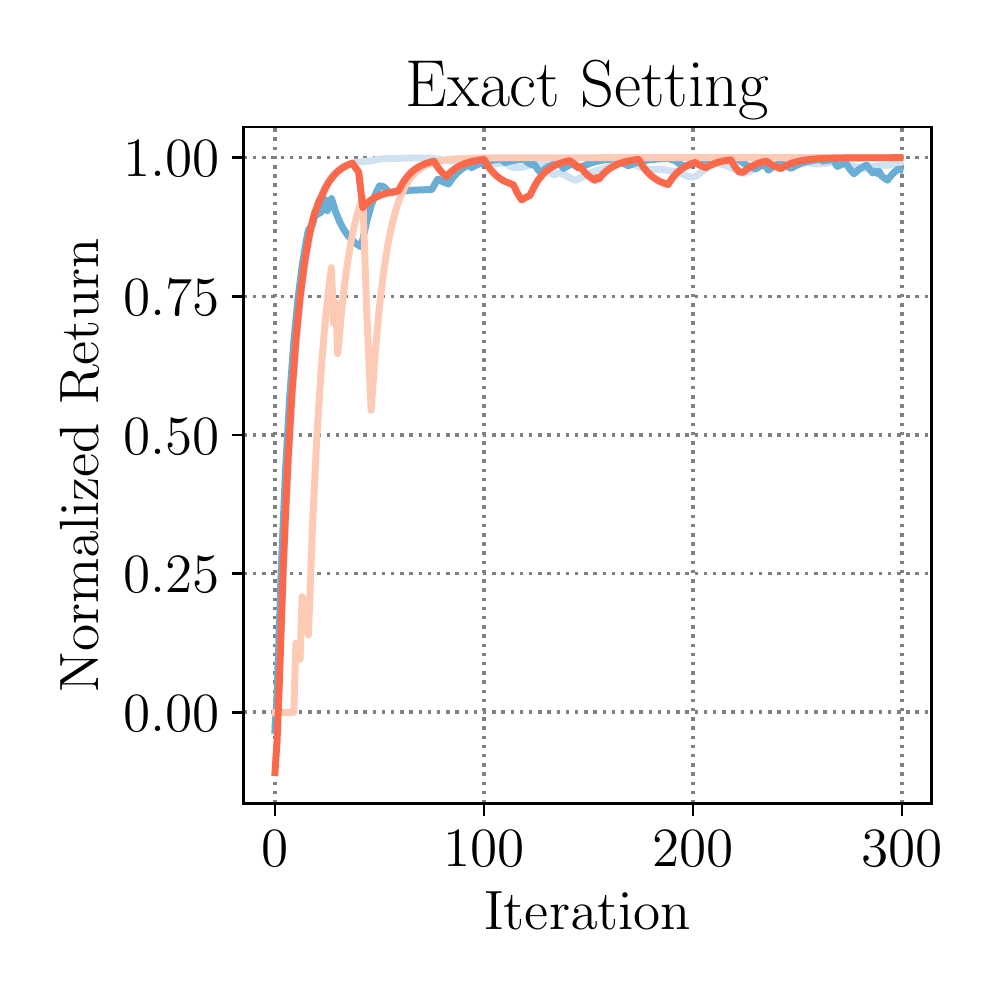}
        \includegraphics[width=0.17\linewidth]{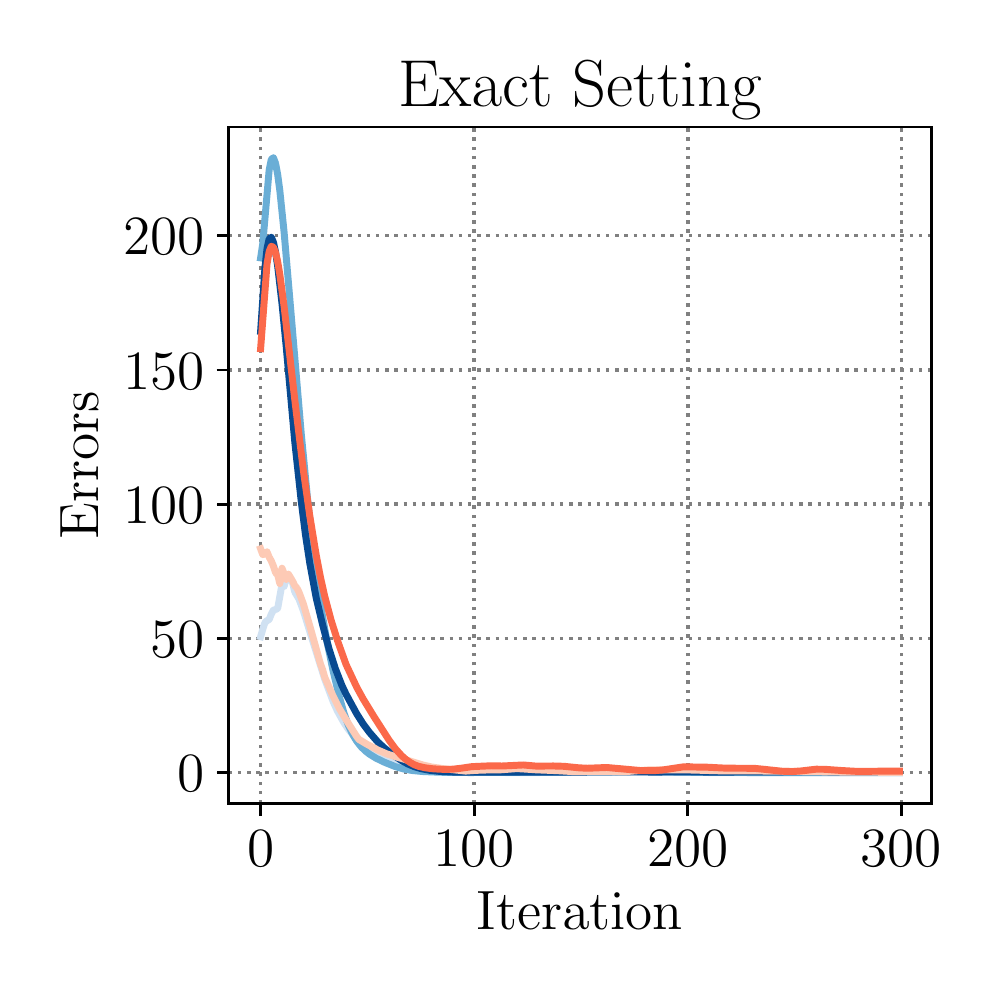}
        ~\vline~
        \includegraphics[width=0.17\linewidth]{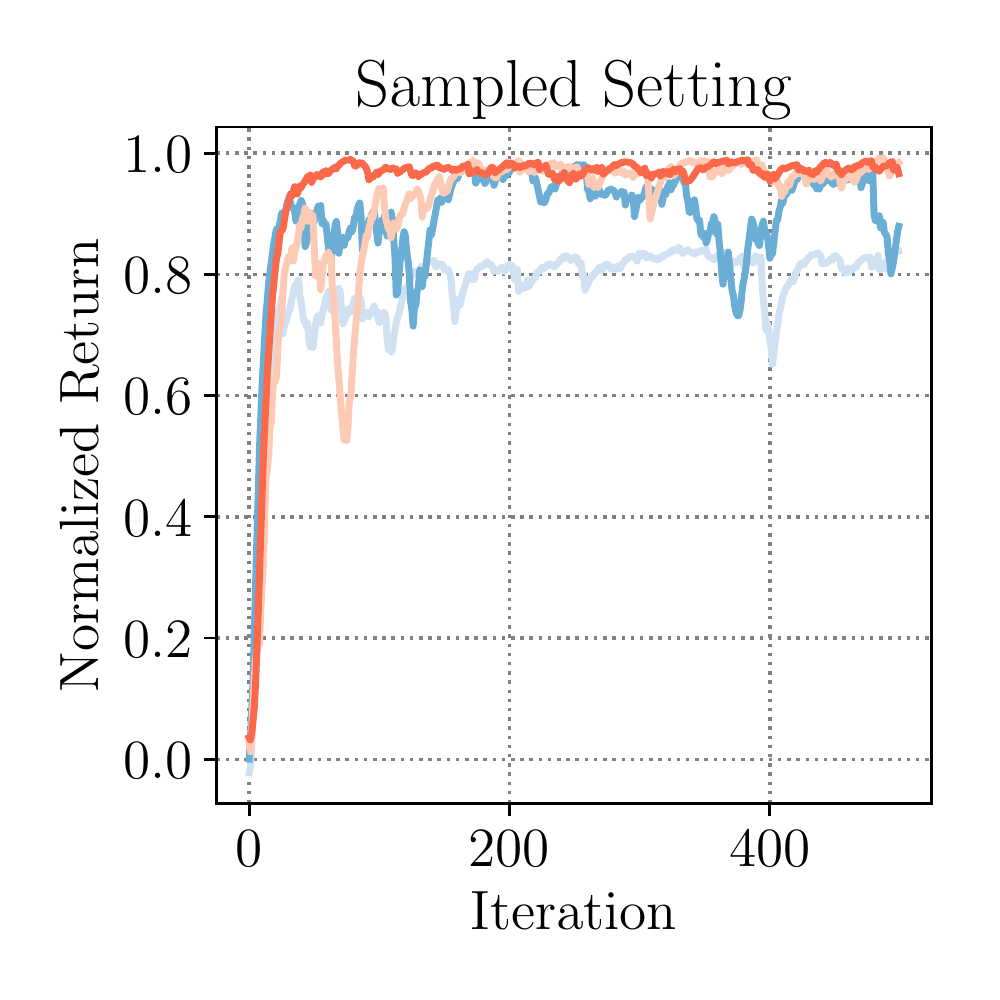}
        \includegraphics[width=0.17\linewidth]{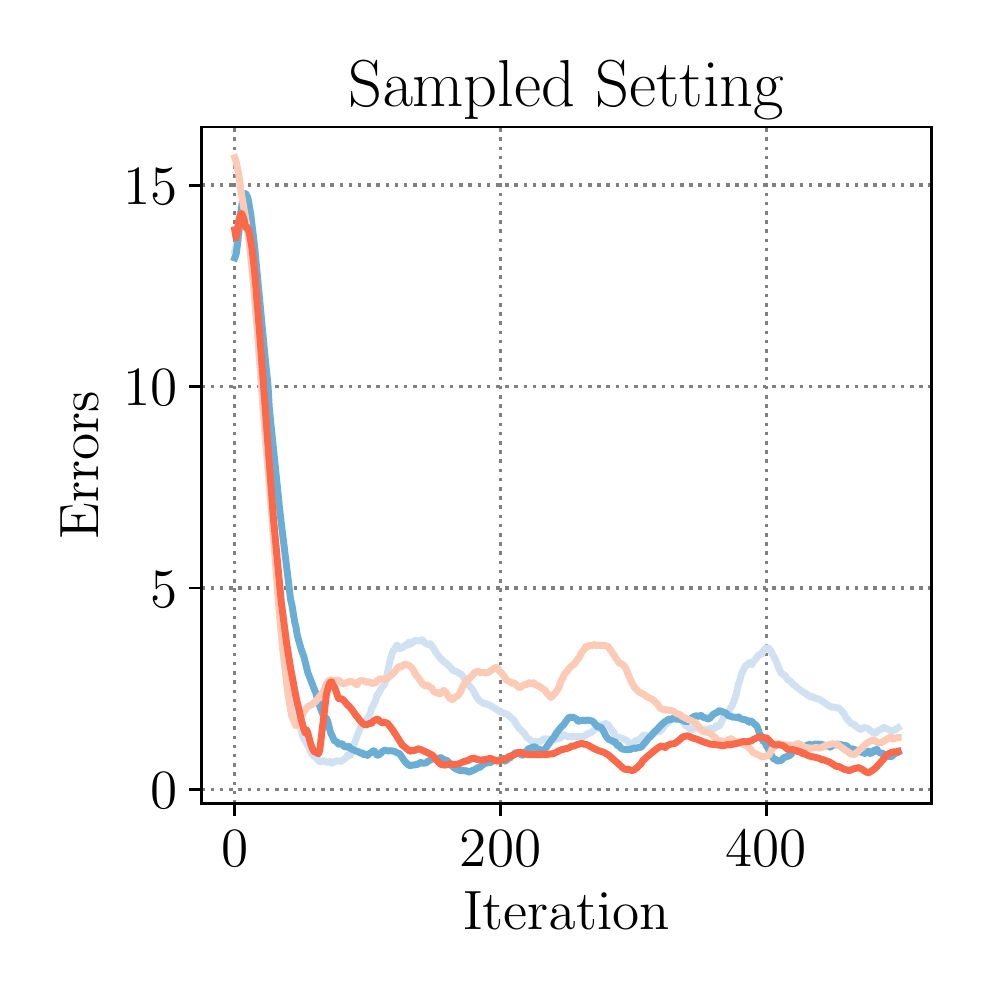}
        \caption{\footnotesize{Value Error $\mathcal{E}_k$/ return for \textit{two} runs of DisCor (blue) and DisCor(oracle) (red) in exact (left) and sampled (right) settings. Observe that (i) DisCor achieves similar performance as DisCor (oracle) generally, (ii) DisCor provides corrective feedback: value error decreases with both DisCor and DisCor(oracle).}}
    \end{subfigure}
    ~
    \begin{subfigure}[t]{.95\linewidth}
        \centering
        \includegraphics[width=0.19\linewidth]{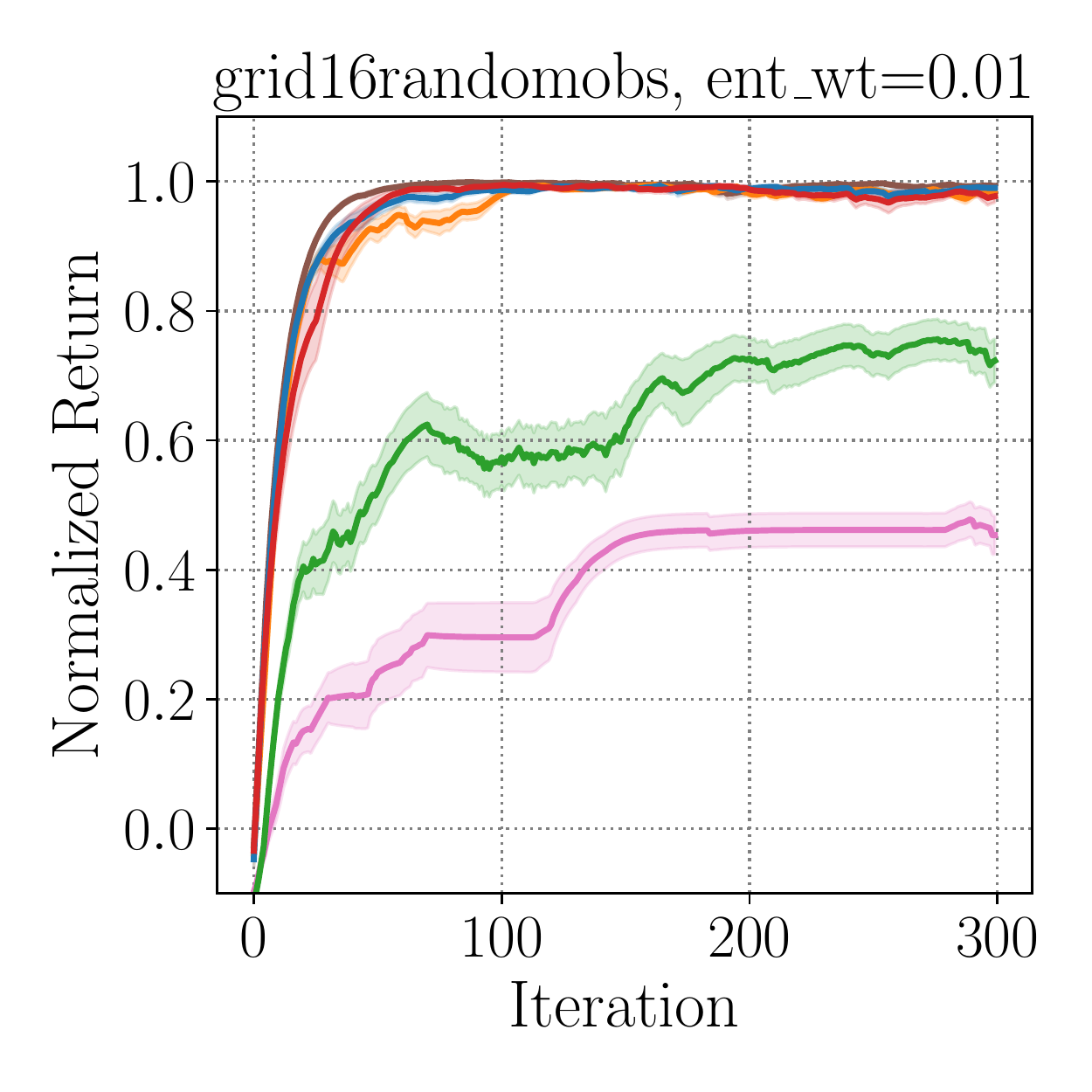}
        \includegraphics[width=0.19\linewidth]{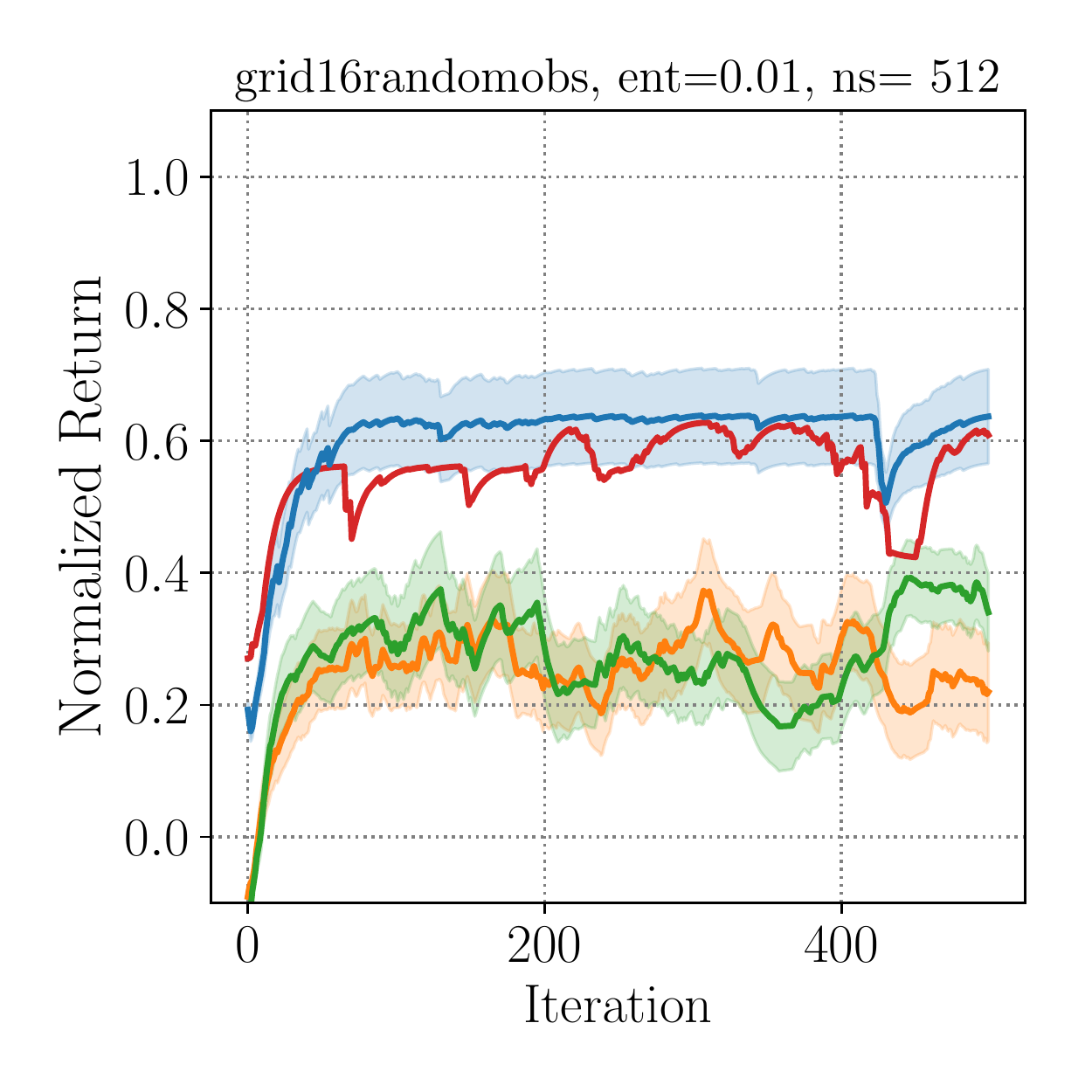}
        \includegraphics[width=0.23\linewidth,trim={2cm 0cm 3cm 2.0},clip]{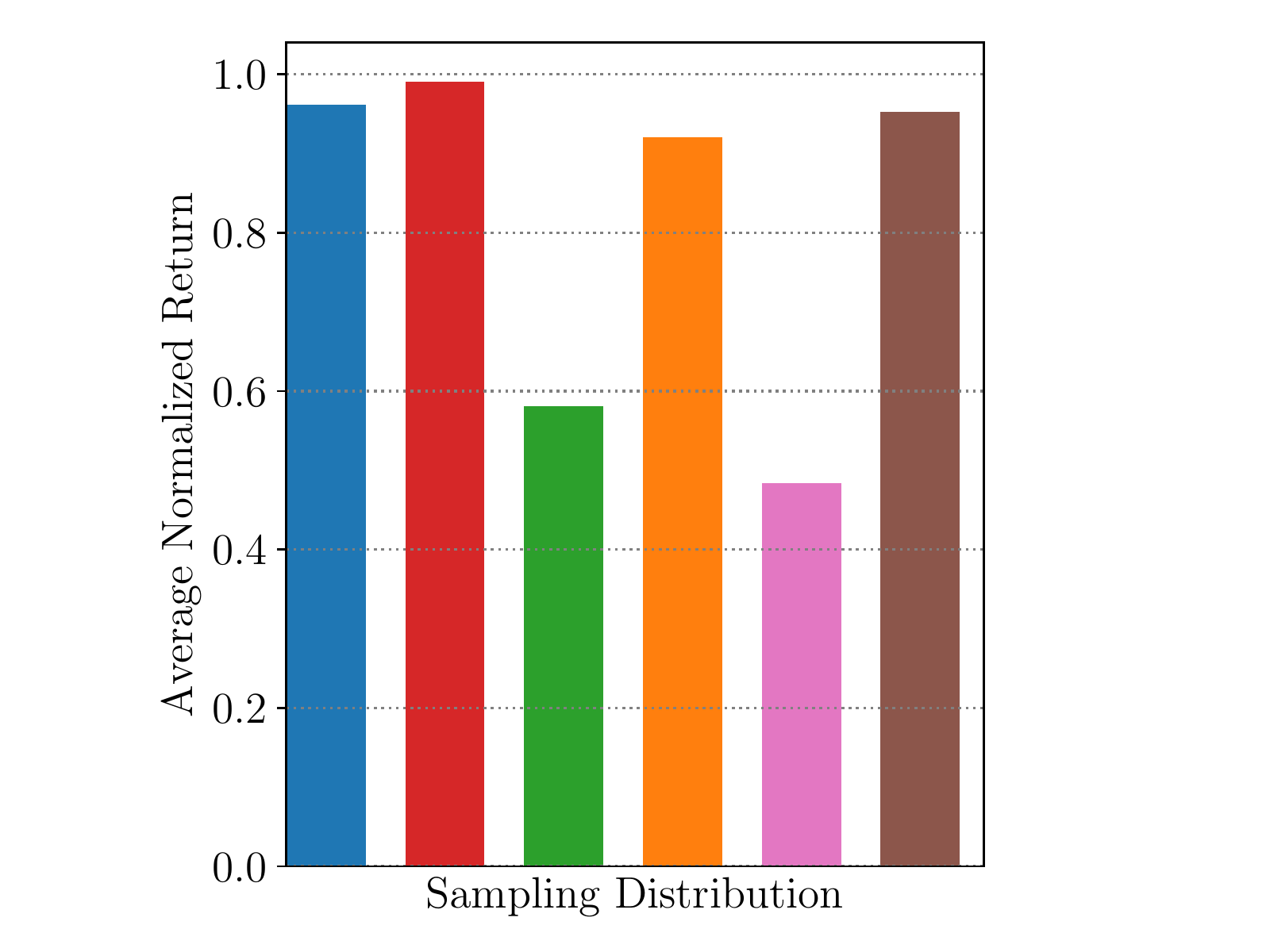}
        \includegraphics[width=0.194\linewidth,trim={3.4cm 1cm 4.0cm 3.5},clip]{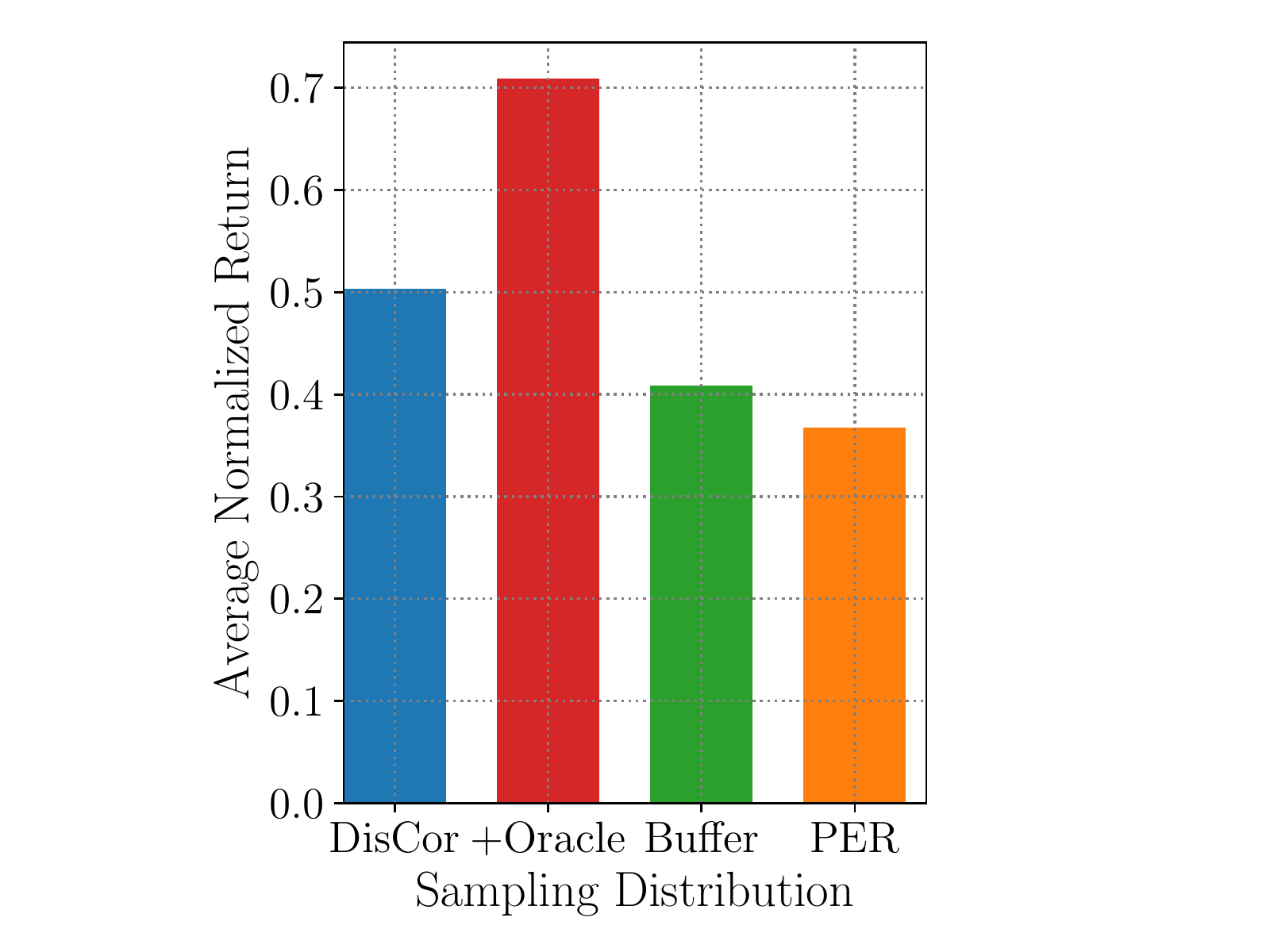}
        \includegraphics[width=0.15\linewidth]{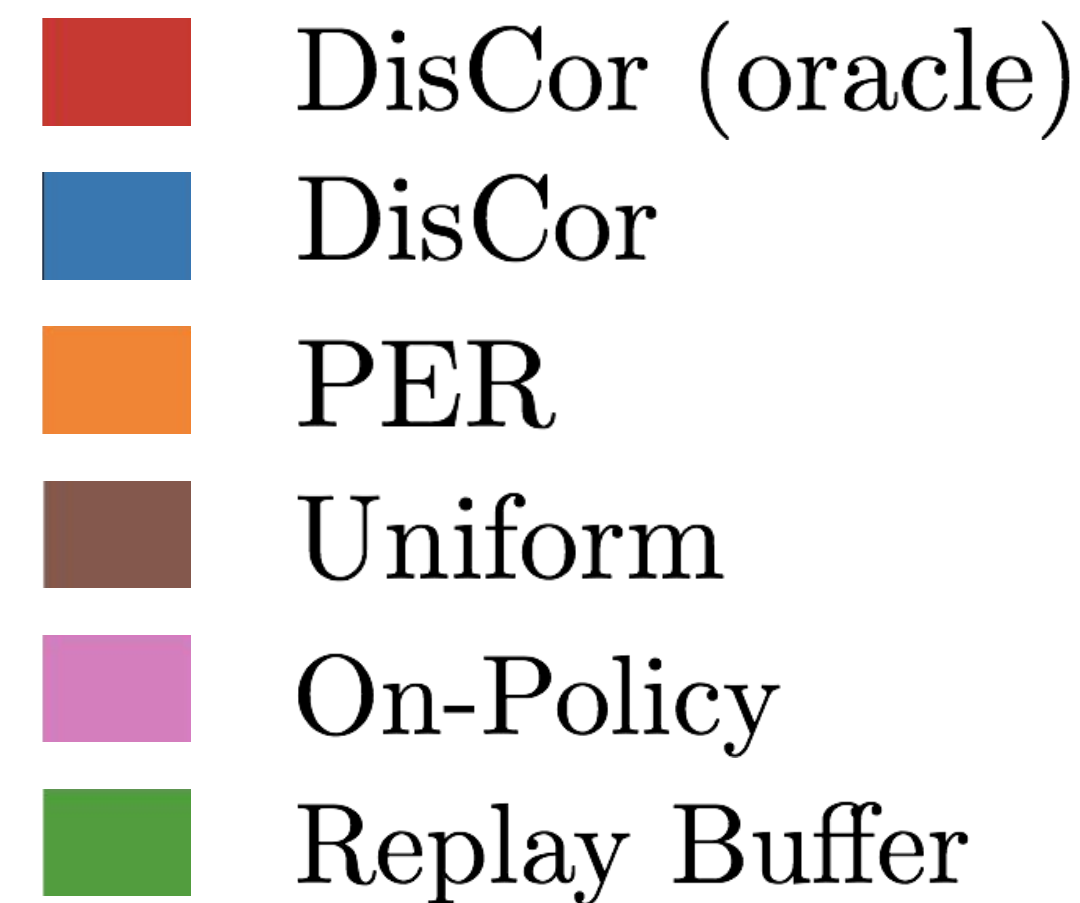}
        \caption{\footnotesize{Comparative Performance on tabular domains}}
    \end{subfigure}
    \caption{\footnotesize{Performance of \textbf{DisCor}, \textbf{DisCor (oracle)}, replay buffer, PER, on-policy and uniform sampling averaged across tabular domains with (right) and without (middle) sampling error. Note that: (1) DisCor generally ensures corrective feedback, (2) DisCor is generally comparable to DisCor (oracle), however, DisCor (oracle) outperforms DisCor, as expected, and (3) DisCor (DisCor and DisCor (oracle) generally outperform all distributions. Exact setup for these domains is described in Appendix~\ref{sec:app_exps_gridworld}.}}
    \label{fig:gridworls_samling}
    \vspace{-0.5cm}
\end{figure}

\subsection{Analysis of DisCor on Tabular Environments}
\label{sec:gridworlds}

We first use the tabular domains from Section~\ref{sec:problem_description}, described in detail in Appendix~\ref{sec:app_exps_gridworld}, to analyze the corrective feedback induced by DisCor and evaluate the effect of the approximations used in our method, such as the upper bound estimator $\Delta_k$. We first study the setting without sampling error, where all transitions in the MDP are added to the training buffer, and then consider the setting with sampling error, where the algorithm also needs to explore the environment and collect transitions.

\paragraph{Results.} In both settings, shown in Figure~\ref{fig:gridworls_samling}, DisCor consistently provides correct feedback (Figure~\ref{fig:gridworls_samling}(a)). An oracle version of the algorithm (labeled DisCor (oracle); Equation~\ref{eqn:optimal_wk}), which uses the true error $|Q_k - Q^*|$ in place of $\Delta_k$, is somewhat better than DisCor (Figure~\ref{fig:gridworls_samling}(b) histograms, red vs blue), but DisCor still outperforms on-policy and replay buffer sampling schemes, which often suffer from an absence of corrective feedback as shown in Section~\ref{sec:problem_description}.
Prioritizing based on high Bellman error struggles on domains with sparse rewards (App.~\ref{sec:app_exps_gridworld}, Fig.~\ref{fig:app_fig_sampled}). 

\paragraph{Analysis.} While DisCor (oracle) consistently performs better than DisCor, as we would expect, the approximate DisCor method still attains better performance than na\"{i}ve uniform weighting and prioritization similar to PER. This shows that the principle behind DisCor is effective when applied exactly, and that even the approximation that we utilize to make the method practical still improves performance considerably.

\begin{figure}[ht]
    \centering
    \includegraphics[width=0.4\linewidth]{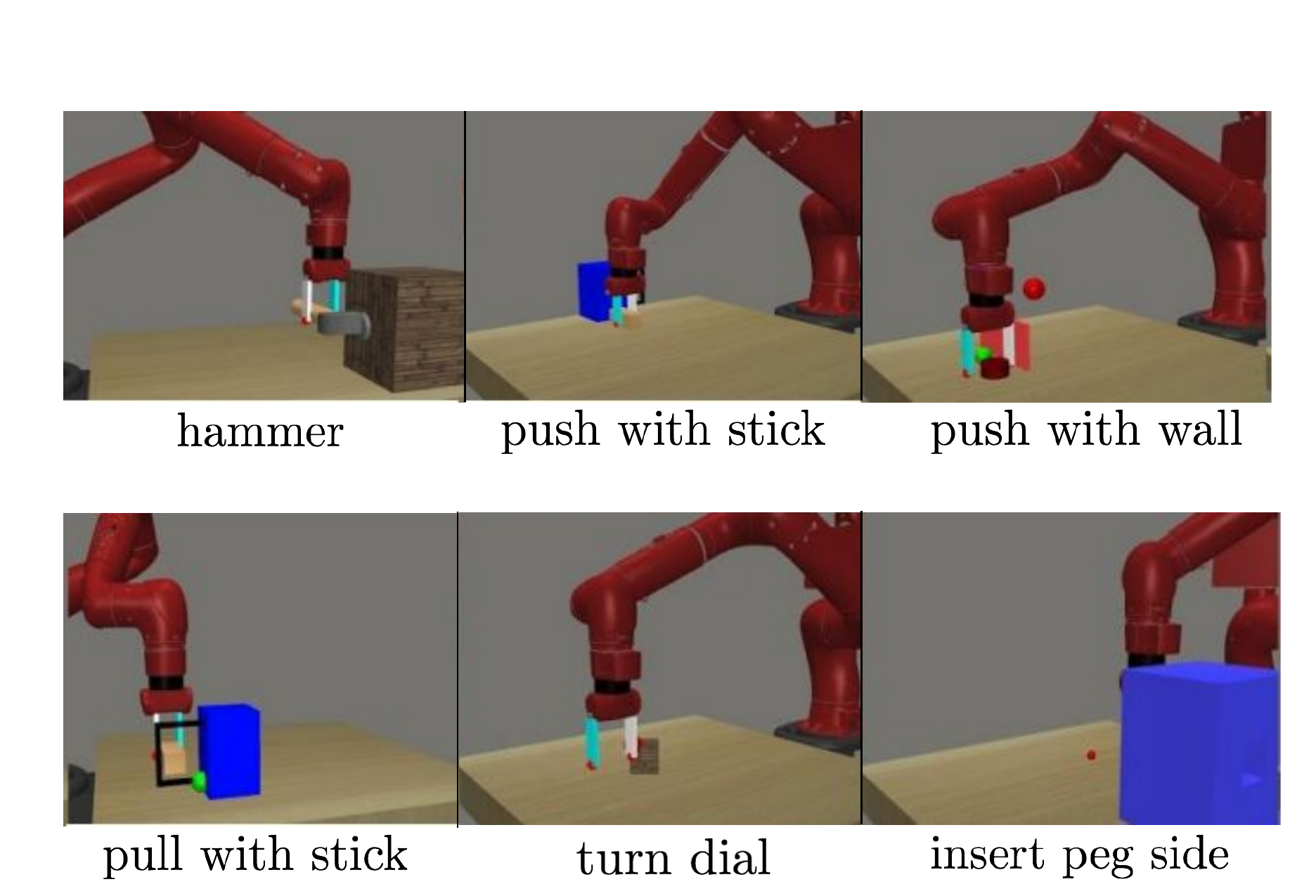}
    \caption{\footnotesize{Visual description of the six MetaWorld tasks used in our experiments in Section~\ref{sec:experiments}. Figures taken from \cite{yu2019meta}.}}
    \label{fig:metaworld_tasks_main}
\end{figure}

\subsection{Continuous Control Experiments}
\label{sec:continuous_control}
We next perform a comparative evaluation of DisCor on several continuous control tasks, using six robotic manipulation tasks from the Meta-World suite (pull stick, hammer, insert peg side, push stick, push with wall and turn dial). A pictorial description of these tasks is provided in Figure~\ref{fig:metaworld_tasks_main}. These domains were chosen because they are challenging for state-of-the-art RL methods, such as soft actor-critic (SAC).

\begin{figure*}[t!]
    \centering
    \includegraphics[width=0.25\linewidth]{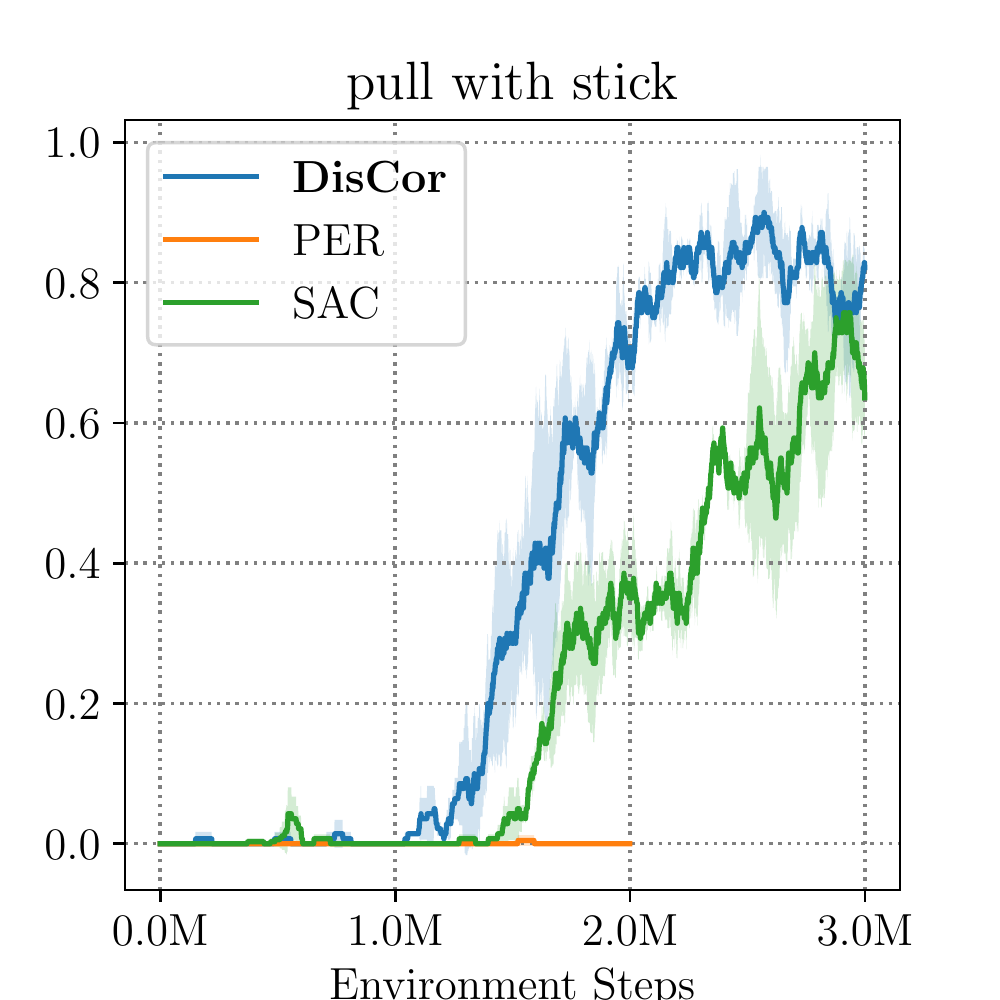}
    \includegraphics[width=0.25\linewidth]{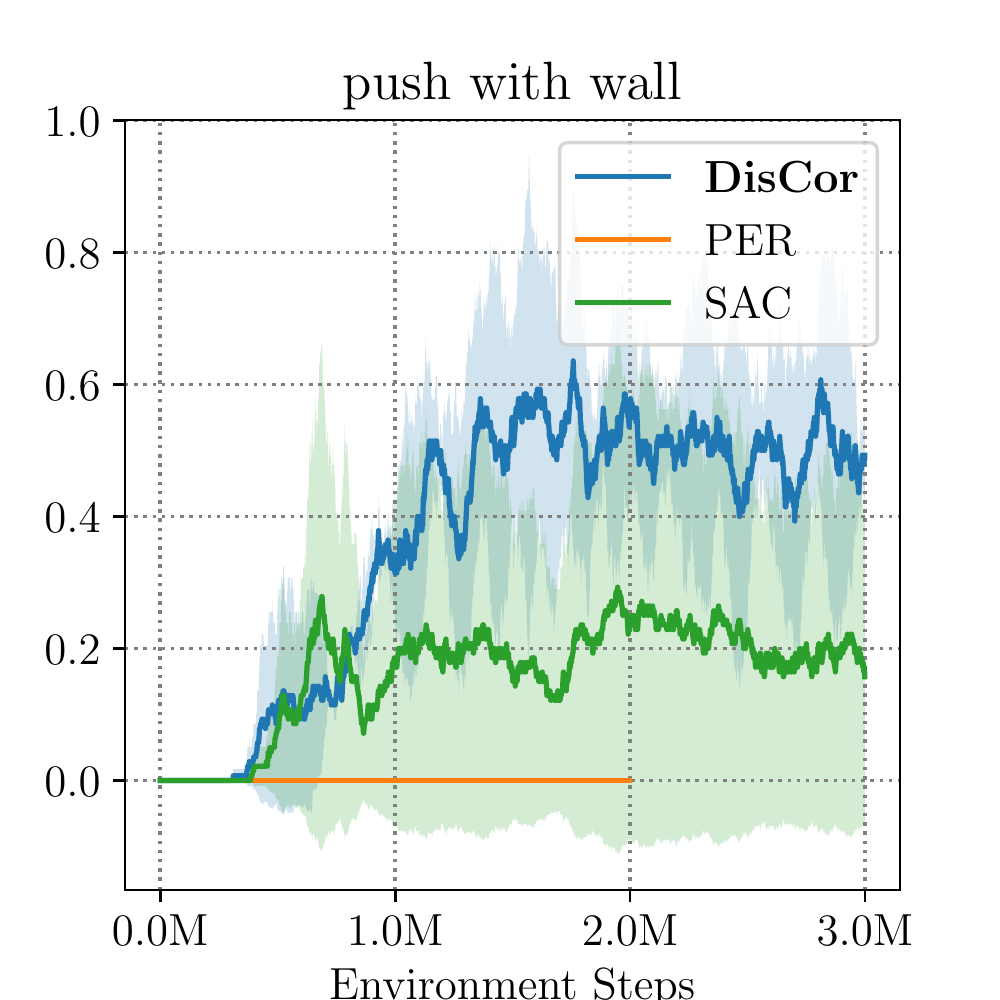}
    \includegraphics[width=0.25\linewidth]{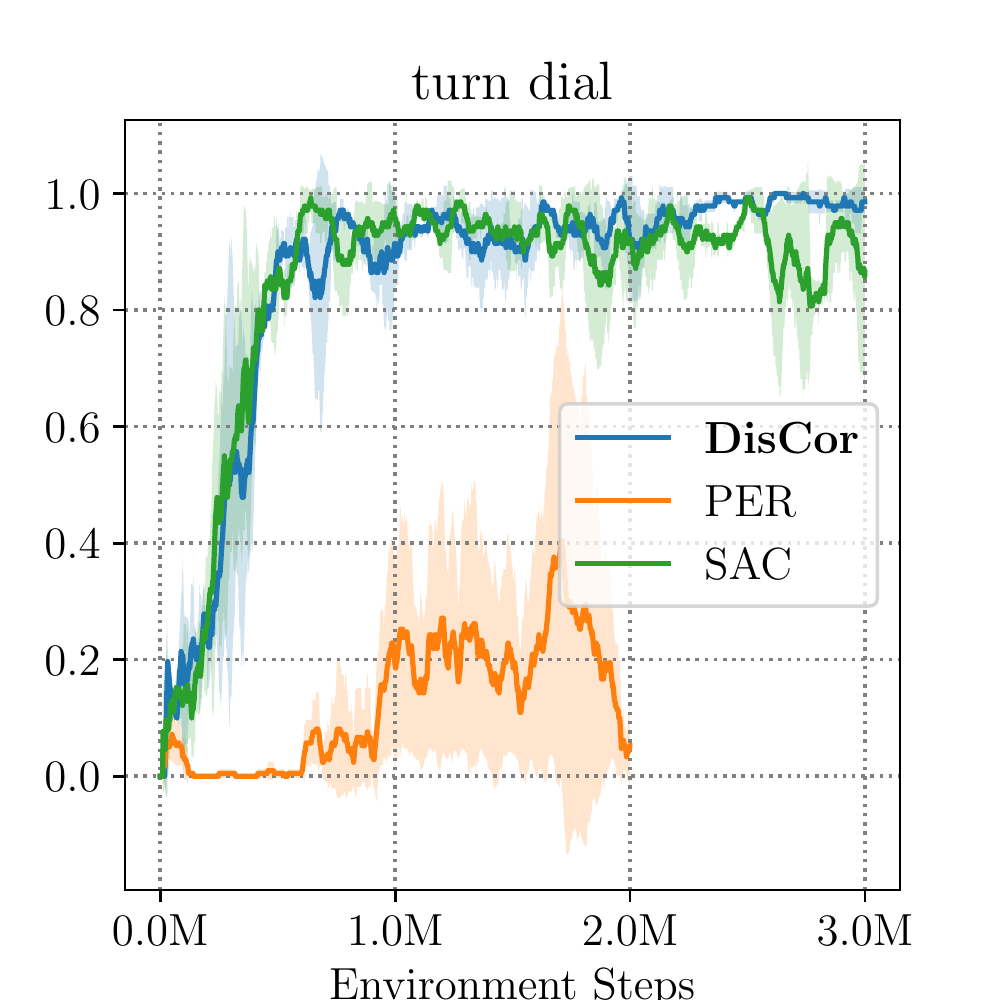} \\
    \includegraphics[width=0.25\linewidth]{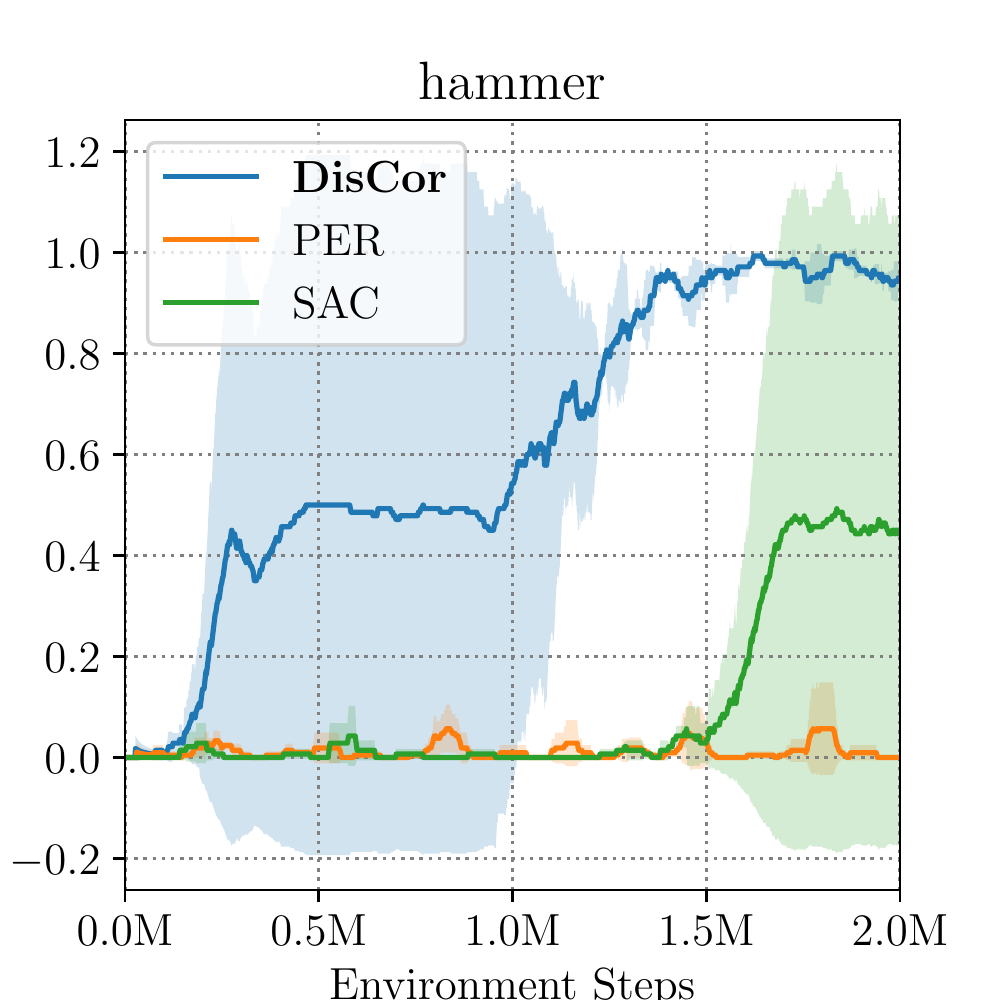}
    \includegraphics[width=0.25\linewidth]{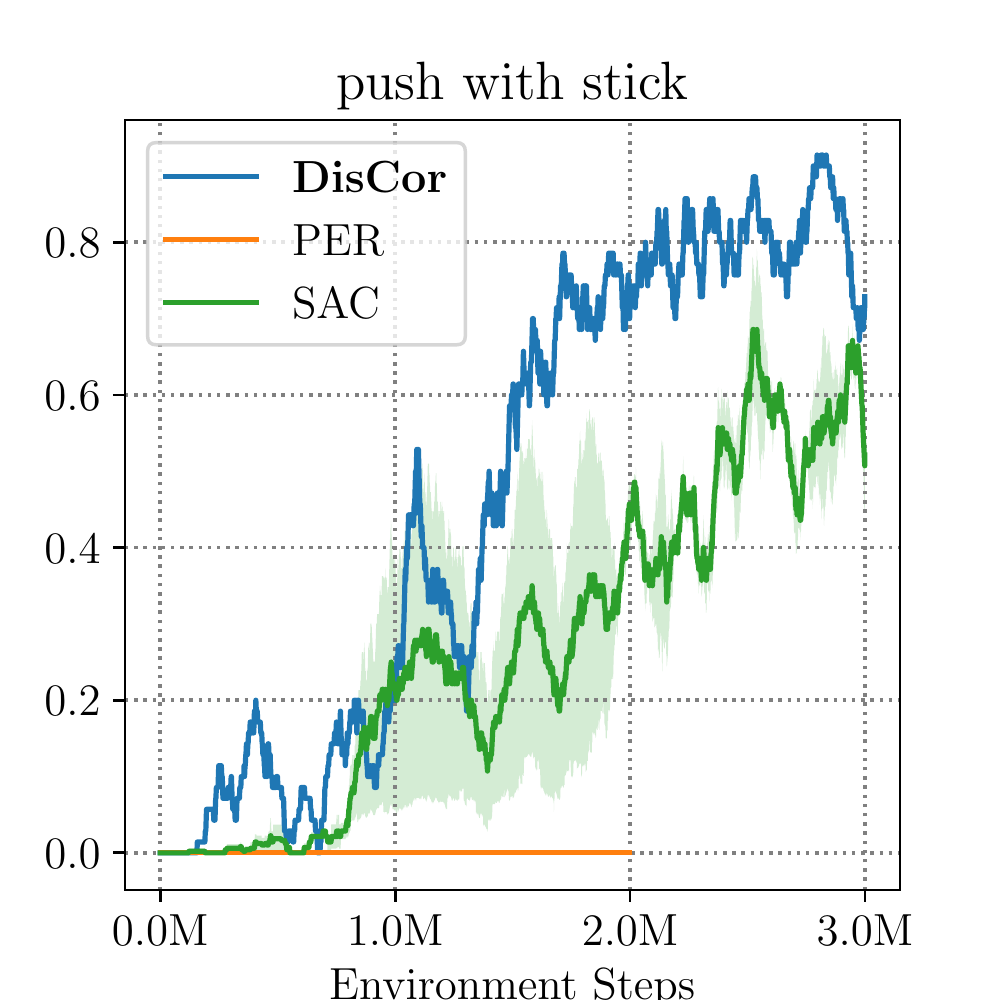}
    \includegraphics[width=0.25\linewidth]{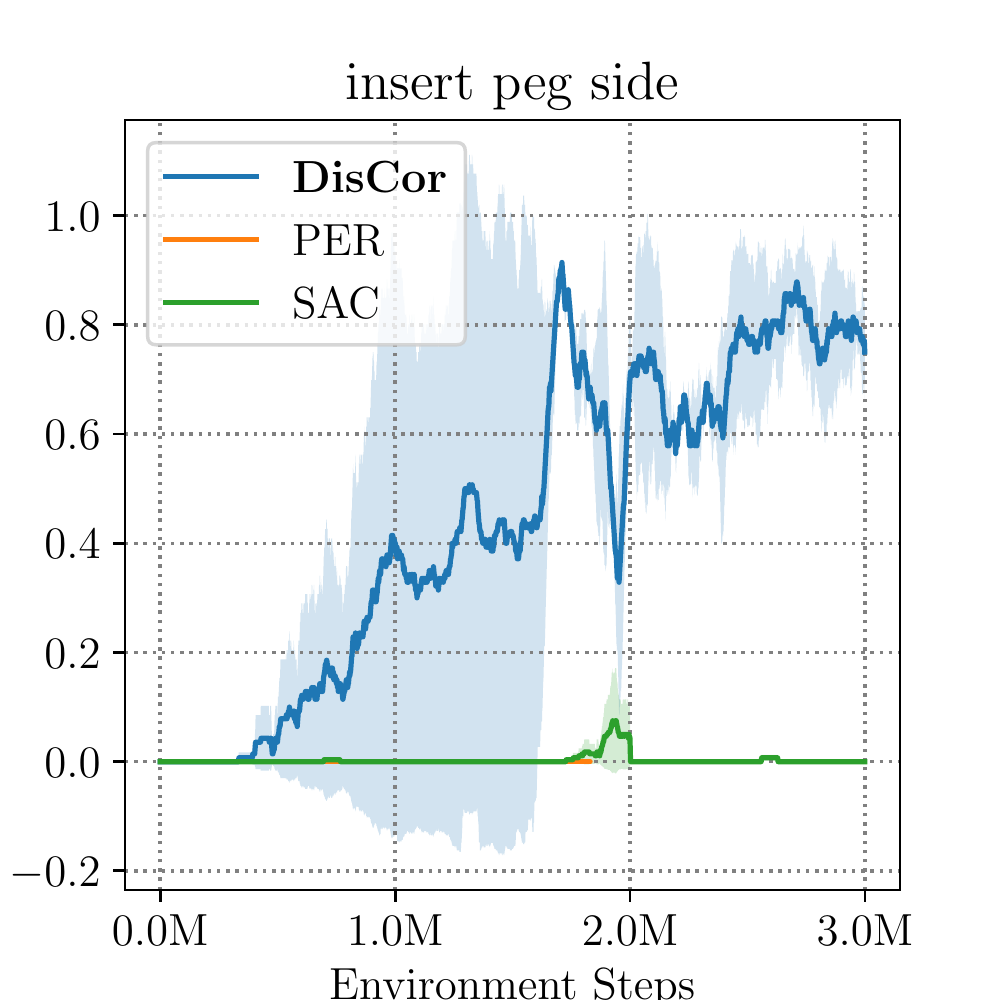}
    \caption{\footnotesize{Evaluation success of DisCor, unweighted SAC and PER on six MetaWorld tasks. From left to right: pull stick, push with wall, push stick, turn dial, hammer and insert peg side. Note that DisCor achieves better final success rates or learns faster on most of the tasks. }}
    \label{fig:success_rates_metaworld}
\end{figure*}

\paragraph{Meta-World.} We applied DisCor to these tasks by modifying the weighting of samples in soft actor-critic (SAC)~\cite{Haarnoja18}. DisCor does not alter any hyperparameter from SAC, and requires minimal tuning.
There is only one additional temperature hyperparameter, which is also automatically chosen across all domains.
More details are presented in Appendix~\ref{sec:app_exp_details}.

We compare DisCor to standard SAC without weighting, as well as prioritized experience replay (PER)~\cite{Schaul2015}, which uses weights based on the last Bellman error. On these tasks, DisCor often attains better performance, as shown in Figures \ref{fig:success_rates_metaworld} and \ref{fig:app_returns_metaworld}, achieving better success rates than standard SAC on several tasks. 
DisCor is also more efficient, achieving good performance earlier than the other methods on these tasks. %
PER, on the other hand, performs poorly, as prioritizing high Bellman error states may lead to higher error accumulation if the target values are incorrect. 

\paragraph{Gym.} We also performed comparisons on the conventional OpenAI gym benchmarks, where we see a small but consistent benefit from DisCor reweighting. Since prior methods, such as SAC already solves these tasks easily, and have been tuned well for them, the room for improvement is very small. We include these results in Appendix~\ref{sec:app_mujoco_benchmarks} for completeness. We also modified the gym environments to have \textit{stochastic} rewards, thus lowering the signal-to-noise ratio, and compare different algorithms on these domains. The results, along with a description of the noise added, are shown in Appendix~\ref{sec:app_mujoco_benchmarks}. In these experiments, DisCor generally exhibits better sample efficiency as compared to other methods.

\vspace{-2pt}
\subsection{Multi-Task Reinforcement Learning}
\label{sec:multi-task}
\vspace{-2pt}
Another challenging setting for current RL methods is the multi-task setting. This is known to be difficult, to the point that oftentimes learning completely separate policies for each of the tasks is actually faster, and resulting in better performance, than learning the tasks together~\citep{yu2019meta,schaul2019ray}. 

\begin{figure}[h]
    \centering
    \begin{subfigure}[t]{.4\linewidth}
        \centering
        \includegraphics[width=0.74\linewidth]{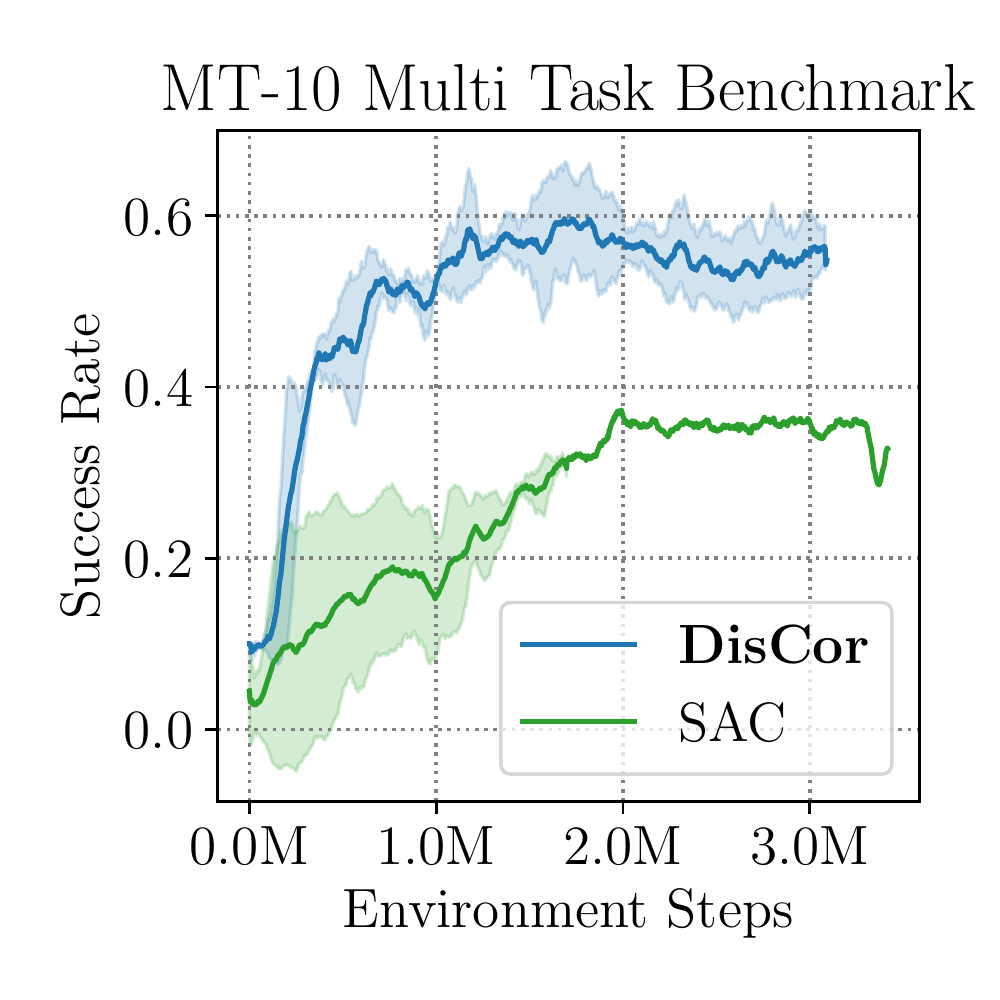}
        \caption{\footnotesize{Average success rate}}
    \end{subfigure}
    ~
    \begin{subfigure}[t]{.4\linewidth}
        \centering
        \includegraphics[width=0.95\linewidth]{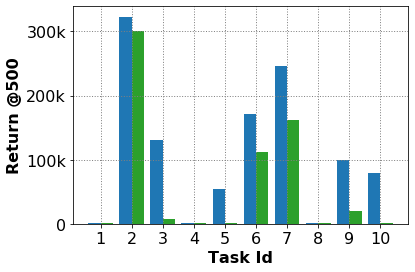}
        \caption{\footnotesize{Per-task return at 500k environment steps}} 
    \end{subfigure}
    \caption{\footnotesize{Performance of DisCor (blue) and unweighted SAC (green) on the MT10 benchmark. We observe that: (1) DisCor outperforms unweighted SAC by a factor of \textbf{1.5} in terms success rate; (2) DisCor achieves a non-trivial return on \textbf{7/10} tasks after 500k environment steps, as compared to \textbf{3/10} for unweighted SAC.}}
    \label{fig:mt_results}
\end{figure}

\paragraph{MT10 multi-task benchmark.} We evaluate on the MT10 MetaWorld benchmark~\citep{yu2019meta}, which consists of ten robotic manipulation tasks to be learned jointly. We follow the protocol from \cite{yu2019meta}, and append task ID to the state. As shown in Figure~\ref{fig:mt_results}(a), DisCor applied on top of SAC outperforms standard unweighted SAC by a large margin, achieving \textbf{50\%} higher success rates compared to SAC, and a high overall return (Fig \ref{fig:app_mt10}). Figure~\ref{fig:mt_results}(b) shows that DisCor makes progress on \textbf{7/10} tasks, as compared to \textbf{3/10} for SAC. 

\paragraph{MT50 multi-task benchmark.} We further compare the performance of DisCor and SAC on the more challenging MT50 benchmark~\citep{yu2019meta}, which is shown in Figure~\ref{fig:app_mt50}, and observe a similar benefit as compared to MT10, where the standard unweighted SAC algorithm tends to saturate at a suboptimal success rate for about 4M environment steps, whereas DisCor continuously keeps learning, and achieves asymptotic performance faster than SAC.

\subsection{Arcade Learning Environment}
\label{sec:atari_exps}
Our final experiments were aimed at testing the efficacy of DisCor on stochastic, discrete-action environments. To this end, we evaluated DisCor on three commonly reported tasks from the Atari suite -- Pong, Breakout and Asterix.
We compare to the baseline DQN~\cite{Mnih2015}, all our implementations are built off of Dopamine~\cite{castro18dopamine} and use the evaluation protocol with sticky actions~\cite{machado18sticky}. We build DisCor on top of DQN by simply replacing the standard replay buffer sampling scheme in DQN with the DisCor weighted update. We show in Figure~\ref{fig:atari_results} that DisCor usually outperforms unweighted DQN in learning speed and performance. 
\begin{figure}[ht]
    \centering
    \includegraphics[width=0.25\linewidth]{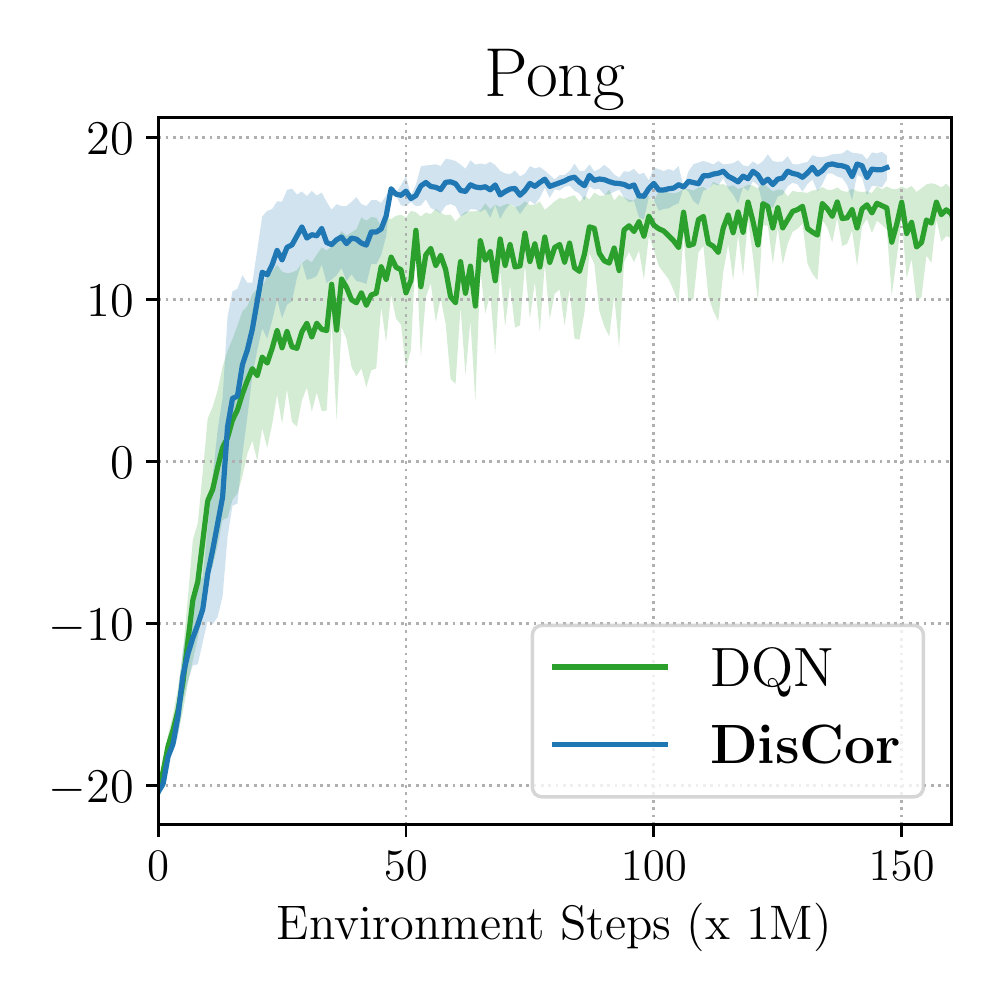}
    \includegraphics[width=0.25\linewidth]{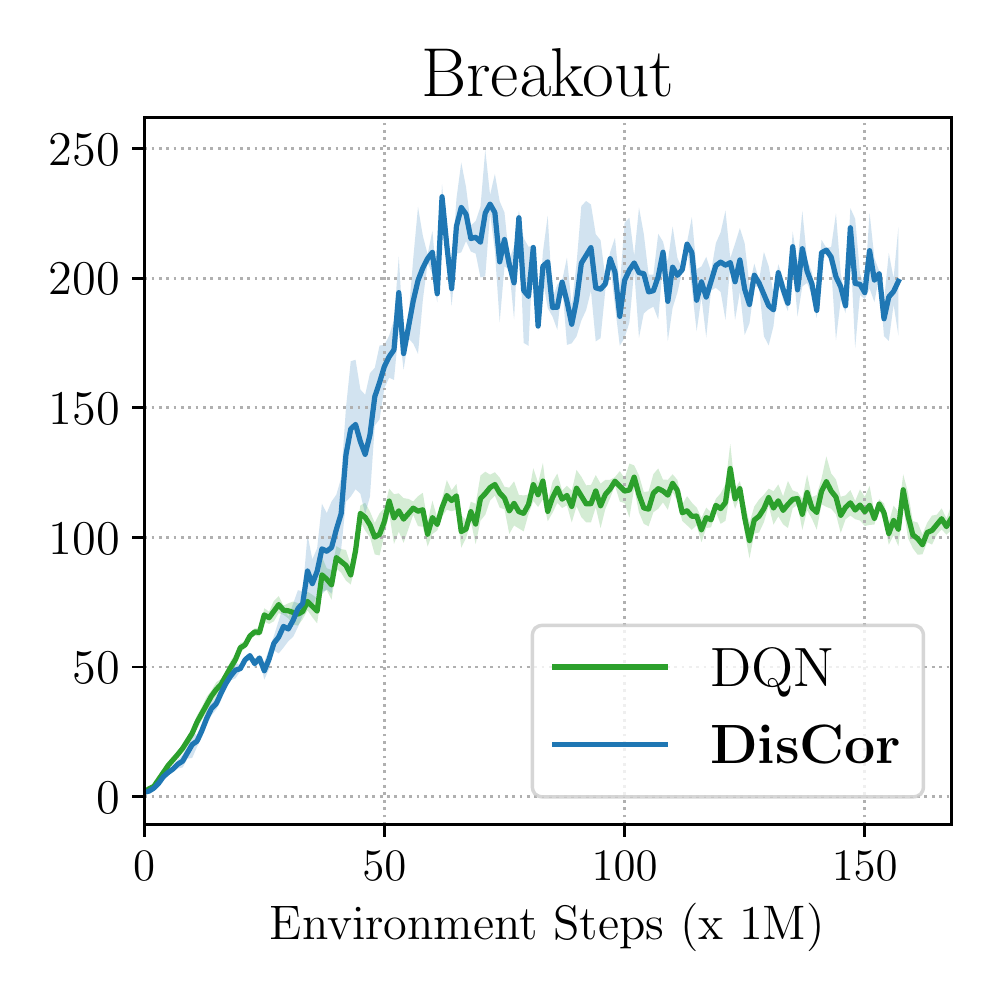}
    \includegraphics[width=0.25\linewidth]{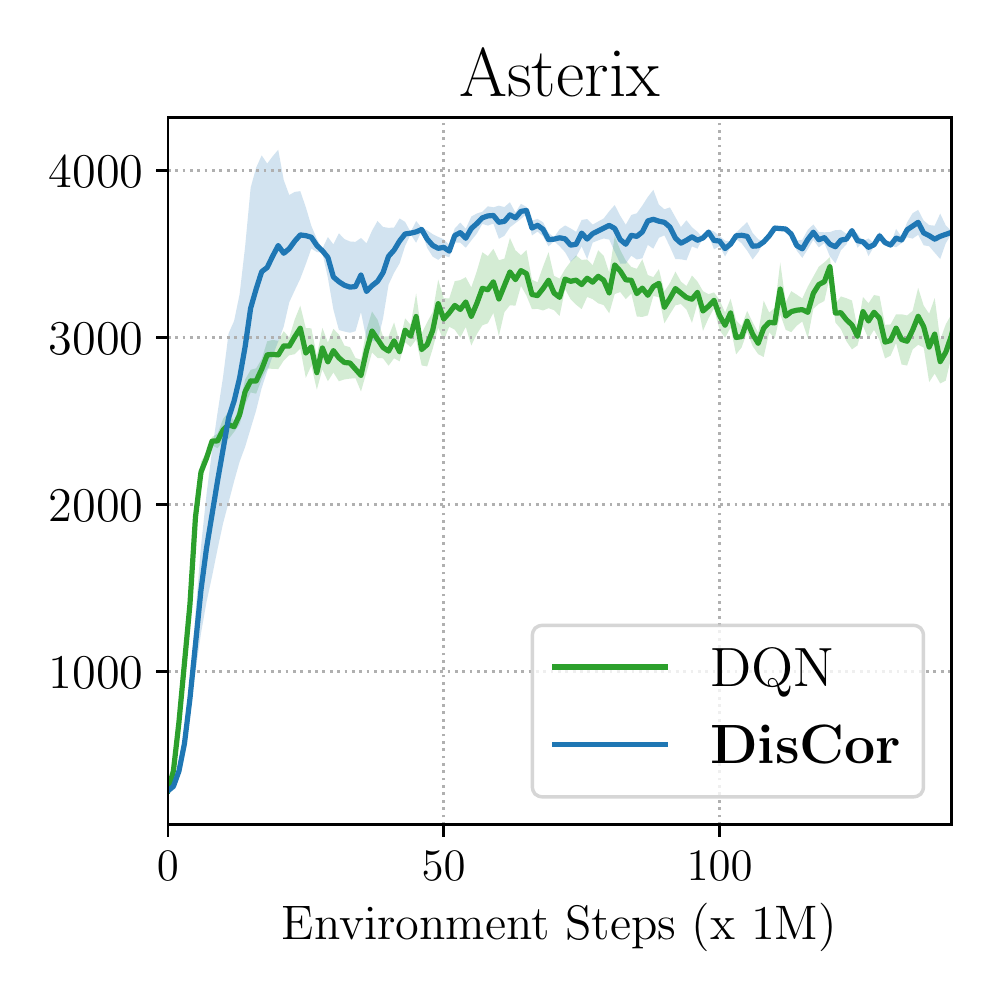}
    \caption{\footnotesize{DQN vs DisCor on Atari. Note that DisCor generally improves learning speed and asymptotic performance on all three tasks that we evaluated.}}
    \label{fig:atari_results}
\end{figure}
\section{Discussion, Future Work and Open Problems}
In this work, we show that deep RL algorithms suffer from the absence of a corrective feedback mechanism in scenarios with na\"ive online data collection. This results in a number of problems during learning, including slow convergence and oscillation. We propose a method to compute the optimal data distribution in order to achieve corrective feedback, and design a practical algorithm, DisCor, that applies this distribution correction by re-weighting the transitions in the replay buffer based on an estimate of the accuracy of their target values. DisCor yields improvements across a wide range of RL problems, including challenging robotic manipulation tasks, multi-task reinforcement learning and Atari games and can be easily combined with a variety of ADP algorithms. 

As shown through our analysis and experiments, studying the connection between data distributions, function approximation, and approximate dynamic programming can allow us to devise stable and efficient reinforcement learning algorithms. This suggests several exciting directions for future work. First, a characterization of the learning dynamics and their interaction with corrective feedback and data distributions in ADP algorithms may lead to even better and more stable algorithms, by better identifying the accuracy of target values. Second, our approach is limited by the transitions that are actually collected by the behavior policy, since it only reweights the replay buffer.
An exciting direction of future investigation would involve studying how we might directly modify the exploration policy to change which transitions are collected, so as to more directly alter the training distribution and produce large gains in sample efficiency and asymptotic performance. 
If we can devise RL methods that are guaranteed to enjoy corrective feedback and thus are stable, robust, and effective, then RL algorithms can be reliably scaled to large open-world settings.

\section*{Acknowledgements}
We thank Xinyang Geng and Aurick Zhou for helpful discussions. We thank Vitchyr Pong, Greg Kahn, Xinyang Geng, Aurick Zhou, Avi Singh, Nicholas Rhinehart, and Michael Janner for feedback on an earlier version of this paper, and all the members of the RAIL lab for their help and support. We thank Tianhe Yu, Kristian Hartikainen, and Justin Yu for help with debugging and setting up the implementations. This research was supported by: the National Science Foundation, the Office of Naval Research, and the DARPA Assured Autonomy program. We thank Google, Amazon and NVIDIA for providing compute resources.

\bibliography{main}
\bibliographystyle{unsrt}

\clearpage
\appendix
\part*{Appendices}

\section{Detailed Proof of Theorem~\ref{eqn:theorem_solution} (Section~\ref{sec:theoretical_derivation})}
\label{sec:missing_proof_steps}
In this appendix, we present a detailed proofs for the theoretical derivation of DisCor outlined in Section~\ref{sec:theoretical_derivation}. To get started, we mention the optimization problem being solved for convenience.
\begin{align}
    \begin{split}
    \min_{p_k}~~& \expec_{d^{\pi_{k}}} \left[ |Q_k - Q^*| \right] \\
    \text{~~s.t.~~} & Q_k = \arg\min_{Q} \expec_{p_k} \left[ (Q - \bellmanopt Q_{k-1})^2 \right]. 
    \end{split}
    \label{eqn:optimization_app}
\end{align}

We break down this derivation in steps marked as relevant paragraphs. The first step is to decompose the objective into a more tractable one via an application of the Fenchel-Young inequality~\cite{rockafellar-1970a}.

\paragraph{Step 1: Fenchel-Young Inequality.} The optimization objective in Problem~\ref{eqn:optimization_app} is the inner product of $d^{\pi_{k-1}}$ and $|Q_k - Q^*|$. We can decompose this objective by applying the Fenchel-Young inequality~\cite{rockafellar-1970a}. For any two vectors, $x, y \in \mathbb{R}^d$, and any convex function $f$ and its Fenchel conjugate $f^*$, we have that, $x^T y \leq f(x) + f^*(y)$. We therefore have:
\begin{equation}
    \label{eqn:convex_conjugate_app}
    \expec_{d^{\pi_k}} \left[ |Q_k - Q^*| \right] \leq f\left( |Q_k - Q^*| \right) + f^* \left( d^{\pi_k} \right) .
\end{equation} 
Since minimizing an upper bound leads to minimization of the original objective, we can replace the objective in Problem~\ref{eqn:optimization_app} with the upper bound in Equation~\ref{eqn:convex_conjugate_app}. As we will see below, a convenient choice of $f$ is the \emph{soft-min} function:
\begin{equation}
    f(x) = - \log \big( \sum_{i} e^{-x_i} \big), ~~ f^*(y) = \mathcal{H}(y).
    \label{eqn:app_f_and_conjugate}
\end{equation}
$f^*$ in this case is given by the Shannon entropy, which is defined as $\mathcal{H}(y) = - \sum_{j} y_j \log y_j$. Plugging this back in problem~\ref{eqn:optimization_app}, we obtain an objective that dictates minimization of the marginal state-action entropy of the policy $\pi$. 

In order to make this objective even more convenient and tractable, we upper bound the Shannon entropy, $\mathcal{H}(y)$ by the entropy of the uniform distribution over states and actions, $\mathcal{H}(\mathcal{U})$. This step ensures that the entropy of the state-action marginal $d^{\pi}$ is not reduced drastically due to the choice of $p$. 
We can now minimize this upper bound, since minimizing an upper bound, leads to a minimization of the original problem, and therefore, we obtain the following new optimization problem shown in Equation~\ref{eqn:final_optimization_app} is:
\begin{align}
\begin{split}
    \label{eqn:final_optimization_app}
    \min_{p_k}~~~& - \log \left( \sum_{s, a} \exp({- |Q_k - Q^*|(s, a)}) \right) \\
    \text{~~s.t.~~} & Q_k = \arg\min_{Q} \expec_{p_k} \left[ (Q - \bellmanopt Q_{k-1})^2 \right].
\end{split}
\end{align}
Another way to interpret this step is to modify the objective in Problem~\ref{eqn:optimization_app} to maximize entropy-augmented $\valuefeedback$: $\valuefeedback(k) + \mathcal{H}(d^{\pi_k})$ as is common in a number of prior works, albeit with entropy over different distributions such as \cite{hazan2019maxent,Haarnoja18}. This also increases the smoothness of the loss landscape, which is crucial for performance of RL algorithms~\citep{ahmed19understanding}.

\paragraph{Step 2: Computing the Lagrangian.} In order to solve optimization problem Problem~\ref{eqn:final_optimization_app}, we follow standard procedures for finding solutions to constrained optimization problems. We first write the Lagrangian for this problem, which includes additional constraints to ensure that $p_k$ is a valid distribution:
\begin{equation}
    \mathcal{L}(p_k; \lambda, \mu) =  - \log \left( \sum_{s, a} \exp({- |Q_k - Q^*|(s, a)}) \right) + \lambda \left( \sum_{s, a} p_k(s, a) - 1 \right) - \mu^T p_k.
    \label{eqn:lagrangian}
\end{equation}
with constraints ${\sum_{s, a} p_k(s, a) = 1}$ and ${p_k(s, a) \geq 0}$ ($ \forall s, a$) and their corresponding Lagrange multipliers, $\lambda$ and $\mu$, respectively, that ensure $p_k$ is a valid distribution.
An optimal $p_k$ is obtained by setting the gradient of the Lagrangian with respect to $p_k$ to 0. This requires computing the gradient of $Q_k$, resulting from Bellman error minimization, i.e. computing the derivative through the solution of another optimization problem,  with respect to the distribution $p_k$. We use the implicit function theorem (IFT)~\citep{Krantz2002TheIF} to compute this gradient. We next present an application of IFT in our scenario.

\paragraph{Step 3: IFT gradient used in the Lagrangian.} We derive an expression for $\frac{\partial Q_k}{\partial p_k}$ which will be used while computing the gradient of the Lagrangian Equation~\ref{eqn:lagrangian} which involves an application of the implicit function theorem. The IFT gradient is given by: 
\begin{equation}
    \label{eqn:ift_gradient}
    \small
    \frac{\partial Q_k}{\partial p_k} \Bigg|_{Q_k, p_k} = - \left[ \diag (p_k)\right]^{-1} \left[ \diag (Q_k - \bellmanopt Q_{k-1}) \right].
\end{equation}
To get started towards showing Equation~\ref{eqn:ift_gradient}, we consider a non-parametric representation for $Q_k$ (a table), so that we can compute a tractable term without going onto the specific calculations for Jacobian or inverse-Hessian vector products for different parametric models. In this case, the Hessians in the expression for IFT and hence, the implicit gradient are given by:
\begin{multline}
    \label{eqn:hessians_regular}
    \quad \quad \quad \quad \quad H_{Q} = 2 ~\diag(p_k) ~~~ \quad H_{Q, p_k} = 2 ~ \diag(Q_k - \bellmanopt Q_{k-1}) \\
    \frac{\partial Q_k}{\partial p_k} = - \left[ H_Q \right]^{-1} H_{Q, p_k} = - \diag\left( \frac{Q_k - \bellmanopt Q_{k-1}}{p_k} \right) \quad \quad \quad \quad.
\end{multline}
provided $p_k \geq 0$ (which is true, since we operate in a full coverage regime, as there is no exploration bottleneck when all transitions are provided). This quantity is $0$ only if the Bellman residuals $Q_k - \bellmanopt Q_{k-1}$ are 0, however, that is rarely the case, hence this gradient is non-zero, and intuitively quantifies the right relationship: Bellman residual errors $Q_k - \bellmanopt Q_{k-1}$ should be higher at state-action pairs with low density $p_k$, indicating a roughly inverse relationship between the two terms -- which is captured by the IFT term.

\paragraph{Step 4: Computing optimal $p_k$.} Now that we have the equation for IFT (Equation~\ref{eqn:ift_gradient}) and an expression for the Lagrangian (Equation~\ref{eqn:lagrangian}), we are ready to compute the optimal $p_k$ via an application of the KKT conditions. At an optimal $p_k$, we have, 
\begin{equation}
    \frac{\partial \mathcal{L}(p_k; \lambda, \mu)}{\partial p_k} = 0 \implies
    \frac{ \mathrm{sgn}(Q_k - Q^*) \exp({- |Q_k - Q^*|(s, a)})}{\sum_{s', a'}\exp({- |Q_k - Q^*|(s', a')})} \cdot \frac{\partial Q_k}{\partial p_k} + \lambda - \mu_{s, a} = 0.
    \label{eqn:lagrangian_simplify}
\end{equation}
Now, re-arranging Equation~\ref{eqn:lagrangian_simplify} and plugging in the expression for $\frac{\partial Q_k}{\partial p_k}$ from Equation~\ref{eqn:ift_gradient} in this Equation to obtain an expression for $p_k(s, a)$, we get:
\begin{equation}
    \label{eqn:optimal_p_app}
    p_k(s, a) \propto \exp\left(-|Q_k - Q^*|(s, a)\right) \frac{|Q_k - \bellmanopt Q_{k-1}|(s, a)}{\lambda^*}.
\end{equation}
Provided, each component of $p$ is positive, i.e. $p_k(s, a) \geq 0$ for all $s, a$, the optimal dual variable $\mu^*_{s, a} = 0$, satisfies $\mu^*(s, a) p_k(s, a) = 0$ by KKT-conditions, and $\mu^* \geq 0$ (since it is a Lagrange dual variable), thus implying that $\mu^* = \mathbf{0}$.

Intuitively, the expression in Equation~\ref{eqn:optimal_p_app} assigns higher probability to state-action tuples with high Bellman error $|Q_k - \bellmanopt Q_{k-1}|$, but only when the \textit{post-update} Q-value $Q_k$ is close to $Q^*$. Hence we obtain the required theorem. 
\paragraph{Summary of the derivation.} To summarize, our derivation for the optimal $p_k$ consists of the following key steps:
\begin{itemize}
    \item Use the Fenchel-Young inequality to get a convenient form for the objective.
    \item Compute the Lagrangian, and use the implicit function theorem to compute gradients of the Q-function $Q_k$ with respect to the distribution $p_k$.
    \item Compute the expression for optimal $p_k$ by setting the Lagrangian gradient to $0$. 
\end{itemize}

\section{Proofs for Tractable Approximations in Section~\ref{sec:minimax}}
\label{app:other_proofs}
Here we present the proofs for the arguments behind each of the approximations described in Section~\ref{sec:minimax}.
\paragraph{Computing weights $w_k$ for re-weighting the buffer distribution, $\mu$.} 
Since sampling directly from $p_k$ may not be easy, we instead choose to re-weight samples transitions drawn from a replay buffer $\mu$, using weights $w_k$ to make it as close to $p_k$. How do we obtain the exact expression for $w_k(s, a)$? One option is to apply importance sampling: choose $w_k$ as the importance ratio, $w_k(s, a) = \frac{p_k(s, a)}{\mu(s, a)}$, however this suffers from two problems -- (1) importance weights tend to exhibit high variance, which can be detrimental for stochastic gradient methods; and (2) densities $\mu(s, a)$, needed to compute $w_k$ are unknown. 

In order to circumvent these problems, we solve a different optimization problem, shown in Problem~\ref{ref:eqn_bias_variance_wk} to find the optimal \textit{surrogate} projection distribution $q_k$, which is closest to $p_k$, and at the same time closest to $\mu$ as well, trading off these quantities by a factor $\tau - 1$.
\begin{equation}
\label{ref:eqn_bias_variance_wk}
   q^*_k = \arg \min_{q_k} \kldiv(q_k||p_k) + (\tau - 1) \kldiv(q_k||\mu)
\end{equation}
where $\lambda$ is a temperature hyperparameter that trades of bias and variance. The solution to the above optimization is shown in Equation~\ref{ref:eqn_final_importance_weights},
where the second statement follows by using a tractable approximation of setting $\mu^{1 - \frac{1}{\tau}}$ to be equal to $\mu$, which can be ignored if $\tau$ is large, hence $1 - \frac{1}{\tau} \approx 1$. The third statement follows by an application of Equation~\ref{eqn:optimal_p_app} and the fourth statement denotes the importance ratio, $\frac{q_k(s, a)}{\mu_k(s, a)}$, as the weights $w_k$. 
\begin{align}
\begin{split}
\label{ref:eqn_final_importance_weights}
    q^*_k(s, a) \propto & \left(\mu_k\right)^{1 - \frac{1}{\tau}} \cdot \exp\left(\frac{\log p_k(s, a)}{\tau}\right) \\
    \therefore q^*_k(s, a) \propto & \left(\mu_k\right) \cdot \exp\left(\frac{\log p_k(s, a)}{\tau}\right) \quad \quad \quad \quad \quad \quad \quad \\
    \therefore \frac{q^*_k}{\mu_k} \propto & \exp\left(\frac{-|Q_k - Q^*|(s, a)}{\tau}\right) \frac{|Q_k - \bellmanopt Q_{k-1}|(s, a)}{\lambda^*}\\
    \therefore w_k \propto & \exp\left(\frac{-|Q_k - Q^*|(s, a)}{\tau}\right) \frac{|Q_k - \bellmanopt Q_{k-1}|(s, a)}{\lambda^*}.
\end{split}
\end{align}

Our next proof justifies the usage of the estimate $\Delta_k$, which is a worst-case upper bound on $|Q_k - Q^*|$ in Equation~\ref{ref:eqn_final_importance_weights}.

\paragraph{Proof of Theorem~\ref{thm:delta_k_upper_bound}.} 
We now present a Lemma~\ref{thm:worst_case_estimator} which proves a recursive inequality for $|Q_k - Q^*|$, then show that the corresponding recursive estimator upper bounds $|Q_k - Q^*|$ pointwise in Lemma~\ref{thm:intermediate}, and then finally show that our chosen estimator $\Delta_k$ is equivalent to this recursive estimator in Theorem~\ref{thm:alpha} therefore proving Theorem~\ref{thm:delta_k_upper_bound}.
\begin{lemma}
\label{thm:worst_case_estimator}
For any $k \in \mathbb{N}$, $|Q_k - Q^*|$ satisfies the following recursive inequality, pointwise for each $s, a$:
\begin{equation*}
|Q_k - Q^*| \leq |Q_k - \bellmanopt Q_{k-1}| + \gamma P^{\pi_{k-1}} |Q_{k-1} - Q^*| + \frac{2 R_{\text{max}}}{1 - \gamma} \max_{s} \tvd(\pi_{k-1}, \pi^*).    
\end{equation*}
\end{lemma}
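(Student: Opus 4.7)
The plan is to derive this recursive bound by combining the triangle inequality with a decomposition of the Bellman operator evaluated at $Q_{k-1}$ versus $Q^*$. First I would insert $\bellmanopt Q_{k-1}$ as an intermediate term and invoke the fixed-point identity $Q^* = \bellmanopt Q^*$ to get, pointwise in $(s,a)$,
\begin{equation*}
|Q_k - Q^*| \le |Q_k - \bellmanopt Q_{k-1}| + |\bellmanopt Q_{k-1} - \bellmanopt Q^*|.
\end{equation*}
The first summand already matches the first term of the claim, so the remaining work is to bound $|\bellmanopt Q_{k-1} - \bellmanopt Q^*|$ by the two residual terms.

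Next I would unpack the Bellman operator. The reward cancels, so
\begin{equation*}
(\bellmanopt Q_{k-1} - \bellmanopt Q^*)(s,a) = \gamma\, \mathbb{E}_{s' \sim T(\cdot|s,a)}\bigl[\max_{a'} Q_{k-1}(s',a') - \max_{a'} Q^*(s',a')\bigr].
\end{equation*}
Identifying $\pi_{k-1}$ and $\pi^*$ with the greedy policies of $Q_{k-1}$ and $Q^*$, I can rewrite each $\max$ as an expectation under its own policy, and then add and subtract the cross term $\mathbb{E}_{a' \sim \pi_{k-1}(\cdot|s')}[Q^*(s',a')]$. This splits the integrand into (i) a value-difference piece $\mathbb{E}_{a' \sim \pi_{k-1}(\cdot|s')}[Q_{k-1}(s',a') - Q^*(s',a')]$ and (ii) a policy-difference piece $\sum_{a'}(\pi_{k-1}(a'|s') - \pi^*(a'|s'))Q^*(s',a')$.

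Taking absolute values and pulling them inside via Jensen, piece (i) contributes exactly $\gamma\, P^{\pi_{k-1}}|Q_{k-1} - Q^*|(s,a)$ by the definition of $P^{\pi_{k-1}}$. For piece (ii) I would apply H\"older pointwise in $s'$ to get $\bigl|\sum_{a'}(\pi_{k-1}(a'|s') - \pi^*(a'|s'))Q^*(s',a')\bigr| \le 2\|Q^*\|_\infty\, \tvd(\pi_{k-1}(\cdot|s'),\pi^*(\cdot|s'))$, then use $\|Q^*\|_\infty \le R_{\max}/(1-\gamma)$ and take $\max_{s'}$ of the TV distance. Absorbing the stray factor $\gamma < 1$ yields the residual $\frac{2R_{\max}}{1-\gamma}\max_s \tvd(\pi_{k-1},\pi^*)$, completing the proof.

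The step I expect to require the most care is the identification $\max_{a'} Q(s',a') = \mathbb{E}_{a' \sim \pi(\cdot|s')}[Q(s',a')]$, which is exact for deterministic greedy policies (consistent with the hard-max definition of $\bellmanopt$ in Section~\ref{sec:backrgound}). If $\pi_{k-1}$ were instead a Boltzmann/softmax policy, a small additional slack term would need to be folded into the residual, but the overall form of the recursion would be unchanged.
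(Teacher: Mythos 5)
Your proposal is correct and follows essentially the same route as the paper's proof: insert $\bellmanopt Q_{k-1}$, use $Q^* = \bellmanopt Q^*$ with the triangle inequality, write the optimality backup as a backup under the greedy policy, add and subtract the cross term $P^{\pi_{k-1}}Q^*$, and bound the policy-difference piece by $2\|Q^*\|_\infty \tvd(\pi_{k-1},\pi^*)$ with $\|Q^*\|_\infty \le R_{\max}/(1-\gamma)$. Your closing caveat about the greedy identification of $\pi_{k-1}$ is apt, since the paper makes the same implicit identification in its step (c).
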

\begin{proof}
Our proof relies on a worst-case expansion of the quantity $|Q_k - Q^*|$. The proof follows the following steps. The first few steps follow common expansions/inequalities operated upon in the work on error propagation in Q-learning~\cite{munos2005error}.
\begin{align*}
\begin{split}
    \label{eqn:proof_minimax_error}
    |Q_k - Q^*| \stackrel{(a)}{=} & |Q_k - \bellmanopt Q_{k-1} + \bellmanopt Q_{k-1} - Q^*| \\
    \stackrel{(b)}{\leq} & |Q_{k} - \bellmanopt Q_{k-1}| + |\bellmanopt Q_{k-1} - \bellmanopt Q^*| \\
    \stackrel{(c)}{=} & |Q_k - \bellmanopt Q_{k-1}| + |R + \gamma P^{\pi_{k-1}} Q_{k-1} - R - \gamma P^{\pi^*} Q^*| \\
    \stackrel{(d)}{=} & |Q_ k - \bellmanopt Q_{k-1}| + \gamma |P^{\pi_{k-1}} Q_{k-1} - P^{\pi_{k-1}} Q^* + P^{\pi_{k-1}} Q^* - P^{\pi^*}Q^* | \\ 
    \stackrel{(e)}{\leq} & |Q_k - \bellmanopt Q_{k-1} | + \gamma P^{\pi_{k-1}} |Q_{k-1} - Q^*| + \gamma |P^{\pi_{k-1}} - P^{\pi^*}| |Q^*| \\
    \stackrel{(f)}{\leq} &  |Q_k - \bellmanopt Q_{k-1}| + \gamma P^{\pi_{k-1}} |Q_{k-1} - Q^*| + \frac{2 R_{\max}}{1 - \gamma} \max_s \tvd(\pi_{k-1}, \pi^*)
\end{split}
\end{align*}
where (a) follows from adding and subtracting $\bellmanopt Q_{k-1}$, (b) follows from an application of triangle inequality, (c) follows from the definition of $\bellmanopt$ applied to two different Q-functions, (d) follows from algebraic manipulation, (e) follows from an application of the triangle inequality, and (f) follows from bounding the maximum difference in transition matrices $|P^{\pi_{k-1}} - P^*|$ by maximum total variation divergence between policy $\pi_{k-1}$ and $\pi^*$, and bounding the maximum possible value of $Q^*$ by $\frac{R_{\max}}{1 - \gamma}$.
\end{proof}

We next show that an estimator that satisfies the recursive equality corresponding to Lemma~\ref{thm:worst_case_estimator} is a pointwise upper bound on $|Q_k - Q^*|$.

\begin{lemma}
\label{thm:intermediate}
For any $k \in \mathbb{N}$, an vector $\Delta'_k$ satisfying
\begin{equation}
    \label{eqn:delta_prime_recursion}
    \Delta'_k := |Q_k - \bellmanopt Q_{k-1}| + \gamma P^{\pi_{k-1}} \Delta'_{k-1}.
\end{equation}
with $\alpha_k = \frac{2R_{\max}}{1 - \gamma} \max_s \tvd(\pi_k, \pi^*)$, and an initialization $\Delta'_0 := |Q_0 - Q^*|$, pointwise upper bounds $|Q_k - Q^*|$ with an offset depending on $\alpha_i$, i.e. $\Delta'_k + \sum_{i} \alpha_i \gamma^{k-i} \geq |Q_k - Q^*|$.
\end{lemma}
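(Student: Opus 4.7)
The plan is to prove Lemma~\ref{thm:intermediate} by induction on $k$, leaning on Lemma~\ref{thm:worst_case_estimator} as the engine that converts a point\-wise bound at step $k-1$ into a point\-wise bound at step $k$. The base case $k=0$ is immediate, because the initialization $\Delta'_0 := |Q_0 - Q^*|$ makes the claimed inequality an equality (the offset sum being empty or zero at $k=0$).

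For the inductive step, I would first invoke Lemma~\ref{thm:worst_case_estimator} at step $k$ to obtain the point\-wise inequality
\begin{equation*}
|Q_k - Q^*| \;\leq\; |Q_k - \bellmanopt Q_{k-1}| \;+\; \gamma P^{\pi_{k-1}} |Q_{k-1} - Q^*| \;+\; \alpha_{k-1},
\end{equation*}
and then substitute the inductive hypothesis $|Q_{k-1} - Q^*| \leq \Delta'_{k-1} + \sum_{i} \alpha_i \gamma^{k-1-i}$ into the middle term. The critical observation is that $P^{\pi_{k-1}}$ is a row-stochastic operator: it takes non-negative convex combinations of entries, so it preserves point\-wise inequalities between non-negative vectors, and it acts as the identity on constant vectors. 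This lets me pull the scalar sum $\sum_i \alpha_i \gamma^{k-1-i}$ through $P^{\pi_{k-1}}$ untouched, producing an overall $\gamma\sum_i \alpha_i \gamma^{k-1-i}$ contribution from that piece.

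Collecting terms, the right-hand side becomes $|Q_k - \bellmanopt Q_{k-1}| + \gamma P^{\pi_{k-1}} \Delta'_{k-1}$, which by the defining recursion in Equation~\ref{eqn:delta_prime_recursion} is exactly $\Delta'_k$, plus the accumulated offset $\alpha_{k-1} + \gamma \sum_i \alpha_i \gamma^{k-1-i}$. A routine reindexing of this sum of $\alpha$'s (absorbing the freshly introduced $\alpha_{k-1}$ into the geometric series with the correct exponent $\gamma^{k-i}$) yields the bound claimed in the statement.

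The only genuine obstacle is bookkeeping: one has to verify the telescoping of the $\alpha_i$ weights through the $\gamma P^{\pi_{k-1}}$ recursion so that, at iteration $k$, each past policy deviation $\tvd(\pi_i, \pi^*)$ is weighted by the correct power of $\gamma$. I would handle this by explicitly tabulating the first two or three inductive steps (e.g.\ computing the offsets at $k=1,2$ to be $\alpha_0$ and $\gamma\alpha_0 + \alpha_1$) to make the pattern transparent, and then appealing to the induction in the general step. Everything else in the argument is a direct consequence of (i) the recursive inequality from Lemma~\ref{thm:worst_case_estimator}, (ii) the defining recursion for $\Delta'_k$, and (iii) the stochasticity of $P^{\pi_{k-1}}$.
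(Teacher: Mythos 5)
Your proposal is correct and follows essentially the same route as the paper's proof: induction on $k$, with Lemma~\ref{thm:worst_case_estimator} supplying the recursive inequality, the inductive hypothesis substituted into the $\gamma P^{\pi_{k-1}}|Q_{k-1}-Q^*|$ term (using that the stochastic operator preserves pointwise inequalities and fixes constant offsets), and the defining recursion for $\Delta'_k$ closing the argument. The $\alpha_i$/$\gamma$-power bookkeeping you flag as the only delicate point is handled just as loosely in the paper's own proof (its steps even switch between $\alpha_{m+1}$ and $\alpha_m$), so your treatment of it is consistent with the intended statement.
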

\begin{proof}
Let $\Delta'_k$ be an estimator satisfying Equation~\ref{eqn:delta_prime_recursion}. In order to show that $\Delta'_k + \sum_i \gamma^{k-i} \alpha_i \geq |Q_k - Q^*|$, we use the principle of mathematical induction. The base case, $k = 0$ is satisfied, since $\Delta'_0 + \alpha_0 \geq |Q_0 - Q^*|$. Now, let us assume that for a given $k = m$, $\Delta'_{m} + \sum_{i} \gamma^{m-i} \alpha_i \geq |Q_m - Q^*|$ pointwise for each $(s, a)$. Now, we need to show that a similar relation holds for $k = m+1$, and then we can appeal to the principle of mathematical induction to complete the argument. In order to show this, we note that,
\begin{align}
    \label{eqn:step1}
    \Delta'_{m+1} = & |Q_{m+1} - \bellmanopt Q_m| + \gamma P^{\pi_{m}} \Delta'_{m} + \sum_{i}^{m+1} \gamma^{m+1-i} \alpha_i\\
    \label{eqn:step2}
    = & |Q_{m+1} - \bellmanopt Q_m| + \gamma P^{\pi_m} (\Delta'_{m} + \sum_{i=0}^{m} \gamma^{m-i} \alpha_i ) + \alpha_{m+1}\\
    \label{eqn:step3}
    \geq & |Q_{m+1} - \bellmanopt Q_m| + \gamma P^{\pi_m} |Q_m - Q^*| + \alpha_m  \quad \quad\\
    \label{eqn:step4}
    \geq & |Q_{m+1} - Q^*| \quad \quad \quad \quad \quad \quad \quad 
\end{align}
where (\ref{eqn:step1}) follows from the definition of $\Delta'_k$, (\ref{eqn:step2}) follows by rearranging the recursive sum containing $\alpha_i$, for $i \leq m$ alongside $\Delta_{m}$, (\ref{eqn:step3}) follows from the inductive hypothesis at $k = m$, and (\ref{eqn:step4}) follows from Lemma~\ref{thm:worst_case_estimator}.

Thus, by using the principle of mathematical induction, we have shown that $\Delta'_k + \sum_i \gamma^{k-i} \alpha_i \geq |Q_k - Q^*|$ pointwise for each $s, a$, for every $k \in \mathbb{N}$.
\end{proof}

The final piece in this argument is to show, that the estimator $\Delta_k$ used by the DisCor algorithm (Algorithm~\ref{alg:discor}), which is initialized randomly, i.e. not initialized to $\Delta_0 = |Q_0 - Q^*|$, still satisfies the property from Lemma~\ref{thm:intermediate}, possibly for certain $k \in \mathbb{N}$.

Therefore, we now show why: $\Delta_k + \sum_{i=1}^k \alpha_i \gamma^{k-i} \geq |Q_k - Q^*|$ point-wise for a sufficiently large $k$. We restate a slightly modified version of Theorem~\ref{thm:delta_k_upper_bound} for convenience. 
\begin{theorem}[Formal version of Theorem~\ref{thm:delta_k_upper_bound}]
\label{thm:alpha}
For a sufficiently large $k \geq k_0 = \frac{\log (1 - \gamma)}{\log \gamma}$, the error estimator $\Delta_k$ pointwise  satisfies:
$$ \Delta_k + \sum_{i=0}^k \gamma^i \alpha_{k-i} \geq |Q_k - Q^*| $$
where $\alpha_i$'s are scalar constants independent of any state-action pair. \textit{(Note that Theorem~\ref{thm:delta_k_upper_bound} has a typo $\gamma^i$ instead of $\gamma^{k-i}$, this theorem presents the correct inequality.})
\end{theorem}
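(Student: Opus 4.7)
The plan is to reduce the arbitrary-initialization case to the ``ideal-initialization'' case already settled by Lemma~\ref{thm:intermediate}. Introduce a companion estimator $\Delta'_k$ defined by exactly the same recursion as $\Delta_k$ but with the ideal base case $\Delta'_0 := |Q_0 - Q^*|$. Lemma~\ref{thm:intermediate} then immediately supplies the pointwise inequality
\[
\Delta'_k + \sum_{i=0}^{k} \gamma^{k-i}\, \alpha_i \;\geq\; |Q_k - Q^*|,
\]
where $\alpha_i = \tfrac{2R_{\max}}{1-\gamma}\,\tvd(\pi_i,\pi^*)$. So it suffices to control how much $\Delta_k$ can differ from $\Delta'_k$.

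The second step is to unroll both recursions $\Delta_m = |Q_m - \bellmanopt Q_{m-1}| + \gamma P^{\pi_{m-1}} \Delta_{m-1}$ down to $m=0$. Every non-base term is identical in the two unrollings and cancels, leaving the clean formula
\[
\Delta_k - \Delta'_k \;=\; \gamma^{k}\Bigl(\textstyle\prod_{j=0}^{k-1} P^{\pi_j}\Bigr)\bigl(\Delta_0 - |Q_0 - Q^*|\bigr).
\]
Each $P^{\pi_j}$ is a row-stochastic kernel, hence an $\ell_\infty$-nonexpansion, so the product is as well. Under the standard boundedness assumption $\|Q_0\|_\infty,\|Q^*\|_\infty,\|\Delta_0\|_\infty \leq R_{\max}/(1-\gamma)$, this yields pointwise
\[
|\Delta_k - \Delta'_k| \;\leq\; \gamma^{k}\,\bigl\|\Delta_0 - |Q_0 - Q^*|\bigr\|_\infty \;\leq\; \gamma^{k}\cdot \tfrac{2R_{\max}}{1-\gamma}.
\]

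Finally, I would pick $k_0$ so that this discrepancy shrinks to an $O(1)$ constant, which gives the stated threshold $k_0 = \log(1-\gamma)/\log\gamma$ up to an additive constant depending on $R_{\max}$. For $k \geq k_0$, redefine $\alpha_0 \leftarrow \alpha_0 + \tfrac{2R_{\max}}{1-\gamma}$ (still a state-action-independent scalar, so admissible under the hypothesis of the theorem); combining $\Delta_k \geq \Delta'_k - |\Delta_k - \Delta'_k|$ with Lemma~\ref{thm:intermediate} and the bound above gives
\[
\Delta_k + \sum_{i=0}^{k} \gamma^{i}\, \alpha_{k-i} \;\geq\; \Delta'_k + \sum_{i=0}^{k} \gamma^{k-i}\, \alpha_i \;\geq\; |Q_k - Q^*|,
\]
as required. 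The main obstacle is that $\Delta_0 - |Q_0 - Q^*|$ can have either sign, so the discrepancy can shrink $\Delta_k$ rather than inflate it; the fix is to absorb the geometrically decaying error into one of the $\alpha_i$ constants, which is harmless because the theorem only asks for state-action-independent scalars. The remaining technicalities (telescoping the unrolled recursion exactly, verifying $\ell_\infty$-nonexpansion of the kernel product) are routine.
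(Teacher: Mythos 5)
Your proposal is correct and follows essentially the same route as the paper: compare $\Delta_k$ to the ideally initialized companion $\Delta'_k$ of Lemma~\ref{thm:intermediate} and show the initialization discrepancy decays like $\gamma^k$. In fact your execution is slightly tighter than the paper's, since the exact telescoping $\Delta_k - \Delta'_k = \gamma^k\bigl(\prod_j P^{\pi_j}\bigr)(\Delta_0 - |Q_0-Q^*|)$ plus absorbing the $\gamma^k \tfrac{2R_{\max}}{1-\gamma}$ term into the state-action-independent scalar $\alpha_0$ yields the inequality exactly (and without degrading the later limit argument, since that term still carries a $\gamma^k$ factor), whereas the paper leaves a residual $-\gamma^{k-k_0}$ and concludes only ``for large $k$.''
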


\begin{proof}
\paragraph{Main Idea/ Sketch:} As shown in Algorithm~\ref{alg:discor}, the estimator $\Delta_k$ is initialized randomly, without taking into account $|Q_0 - Q^*|$. Therefore, in this theorem, we want to show that \textit{irrespective} of the initialization of $Q_0$, a randomly initialized $\Delta_k$ eventually satisfies the inequality shown in Theorem~\ref{thm:delta_k_upper_bound}. Now, we present the formal proof. \\

Consider $k_0$ to be the smallest $k$, such that the following inequality is satisfied:
\begin{equation}
    \label{eqn:smallest_k_inequality}
    \gamma^k \max_{Q_0, Q^*} |Q_0 - Q^*| \leq 1
\end{equation}
Thus, $k_0 \geq \frac{\log (1 - \gamma)}{\log \gamma}$, assuming $R_{\max} = 1$ without loss of generality. For a different reward scaling, the bound can be scaled appropriately. To see this, we substitute $|Q_0 - Q^*|$ as an upper-bound $R_{\max}/ (1 - \gamma)$, and  bound $R_{\max}$ by $1$. 

Let $\Delta'_k$ correspond to the upper-bound estimator as derived in Lemma~\ref{thm:intermediate}. For each $k \geq k_0$, the contribution of the initial error $|Q_0 - Q^*|$ in $|Q_k - Q^*|$ is upper-bounded by $1$, and gradually decreases with a rate $\gamma$ as more backups are performed, i.e., as $k$ increases. Thus we can construct another sequence $\Delta_1, \cdot, \Delta_k, \cdots$ which chooses to ignore $|Q_0 - Q^*|$, and initializes $\Delta_0 = 0$ (or randomly) and the sequences $\Delta$ and $\Delta'_k$ satisfy:
\begin{equation}
    \label{eqn:sequence_convergence}
    |\Delta'_k - \Delta_k| < 1, \forall k \geq k_0 
\end{equation}
Furthermore, the difference $|\Delta'_k - \Delta_k|$ steadily shrinks to 0, with a linear rate $\gamma$, so the overall contribution of the initialization sub-optimality $|Q_0 - Q^*|$ drops linearly with a rate of $\gamma$. Hence, $\Delta'$ and $\Delta$ converge to the same sequence beyond a fixed $k = k_0$. Since $\Delta'_k$ is computed using the RHS of Lemma~\ref{thm:worst_case_estimator}, it is guaranteed to be an upper bound on $|Q_k - Q^*|$:
\begin{equation}
    \label{eqn:worst_case_surrogate}
    \left| \left(\Delta_k + \sum_{i=1}^k \gamma^{k-i} \alpha_i \right) - \left(\Delta'_k  + \sum_{i=1} \gamma^{k-i} \alpha_i \right) \right| \leq 1.
\end{equation}
Since, $\Delta'_k + \sum_i \gamma^{k-i} \alpha_i \geq |Q_k - Q^*|$, we get $\forall~ k \geq k_0$, using \ref{eqn:worst_case_surrogate}, that 
\begin{equation}
    \Delta_k + \sum_{i=1}^k \gamma^{k-i} \alpha_i \geq |Q_k - Q^*| - \gamma^{k - k_0}. 
\end{equation}
Hence, $\Delta_k + \sum_{i=1}^k \gamma^{k-i} \alpha_i \geq |Q_k - Q^*|$ for large $k$.
\paragraph{A note on the value of $k_0$.} For a discounting of $\gamma = 0.95$, we get that $k_0 \approx 59$ and for $\gamma = 0.99$, $k_0 \approx 460$. In practical instances, an RL algorithm takes a minimum of about $\geq$ 1M gradient steps, so this value of $k_0$ is easy achieved. Even in the gridworld experiments presented in Section~\ref{sec:gridworlds}, $\gamma = 0.95$, hence, the effects of initialization stayed significant only untill about $59$ iterations during training, out of a total of 300 or 500 performed, which is a small enough percentage.
\end{proof}

\paragraph{Summary of Proof for Theorem~\ref{thm:delta_k_upper_bound}.} $\Delta_k$ in DisCor is given by the quantity $\Delta_k = |Q_k - \bellmanopt Q_{k-1}| + \gamma P^{\pi_{k-1}} \Delta_{k-1}$, is an upper bound for the error $|Q_k - Q^*|$, and we can safely initialize the parametric function $\Delta_\phi$ using standard neural network initialization, since the value of initial error will matter \textit{only} infinitesimally after a large enough $k$. 

As $k \rightarrow \infty$, the following is true:
\begin{align}
\label{eqn:limit}
    \lim_{k \rightarrow \infty} \Big||\Delta_k - |Q_k - Q^*|\Big| \leq \lim_{k \rightarrow \infty} \sum_{i=1}^k \gamma^{k-i} \alpha^i \\
    \label{eqn:limit2}
    = \lim_{k \rightarrow \infty} \sum_{i=1}^k \gamma^{k-i} \tvd(\pi_i, \pi^*) 
\end{align}
Also, note that if $\pi_k$ is improving, i.e. $\pi_k \rightarrow \pi^*$, then, we have that $\tvd(\pi_k, \pi^*) \rightarrow 0$, and since limit of a sum is equal to the sum of the limit, and $\gamma < 1$, therefore, the final inequality in Equation~\ref{eqn:limit2} tends to 0 as $k \rightarrow \infty$.

\section{Missing Proofs From Section~\ref{sec:problem_description}}
\label{sec:omitted_proofs}
In this section, we provide omitted proofs from Section~\ref{sec:problem_description} of this paper. Before going into the proofs, we first describe notation and prove some lemmas that will be useful later in the proofs.

We also describe the underlying ADP algorithm we use as an ideal algorithm for the proofs below.

\begin{algorithm}[H]
\small
\caption{Generic ADP algorithm}
\label{alg:fqi}
\begin{algorithmic}[1]
    \STATE Initialize Q-values $Q_0$.
    \FOR{step $t$ in \{1, \dots, N\}}
        \STATE Collect trajectories using $\pi_t$
        \STATE Choose distribution $D_t$ for projection.
        \STATE $Q_{t+1} \leftarrow \prod_{D_t} \bellmanopt Q_t$ \\ $ \prod_{D_t} \bellmanopt= \arg \min_Q \expec_{D_t}[(Q(s, a) - \bellmanopt Q_{t-1}(s, a))^2]$
    \ENDFOR
\end{algorithmic}
\end{algorithm}

\paragraph{Assumptions.} The assumptions used in the proofs are as follows:
\begin{itemize}
    \item Q-function is linearly represented, i.e. given a set of features, $\phi(s, a) \in \mathbb{R}^d$ for each state and action, concisely represented as the matrix $\Phi \in \mathbb{R}^{|S| |A| \times d}$, Q-learning aims to learn a d-dimensional feature vector $w$, such that $Q(s, a) = w^T \phi(s, a)$. This is not a limiting factor as we will prove in Assumption~\ref{assumption:optimal_features}, for problems with sufficiently large $|S|$ and $|A|$.
\end{itemize}

\subsection{Suboptimal Convergence of On-policy Q-learning}
We first present a result that describes how Q-learning can converge sub-optimally when performed with on-policy distributions, thus justifying our empirical observation of suboptimal convergence with on-policy distributions. 
\begin{theorem}[\cite{farias_fixed_points}]
Projected Bellman optimality operator under the on-policy distribution $\mathcal{H} = \prod_{D_\pi} \bellmanopt$ with a Boltzmann policy, $\pi \propto \exp(Q/\tau)$, where $0 < \tau$ always has one or more fixed points.
\end{theorem}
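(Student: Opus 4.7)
The plan is to invoke Brouwer's fixed point theorem on the composite operator $\mathcal{H}: Q \mapsto \Pi_{d^{\pi_Q}} \bellmanopt Q$, where $\pi_Q(a|s) \propto \exp(Q(s,a)/\tau)$ is the Boltzmann policy associated with $Q$. Since the space of Q-functions (or the parameter space in the function-approximation case) is finite-dimensional, I need two ingredients: (i) continuity of $\mathcal{H}$, and (ii) a non-empty, compact, convex set $K$ that $\mathcal{H}$ maps into itself. Existence of a fixed point then follows immediately from Brouwer.

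For continuity, I would decompose $\mathcal{H}$ as four maps and verify each is continuous. The softmax $Q \mapsto \pi_Q$ is smooth for every $\tau > 0$. The map $\pi \mapsto d^\pi$ from policies to their discounted state-action occupancies is a rational function of the transition-matrix entries (which are linear in $\pi$), hence continuous in $\pi$. The Bellman optimality backup $Q \mapsto \bellmanopt Q$ is continuous because $\max_{a'}$ of continuous functions is continuous. Finally, the weighted $L_2(d)$-projection $\Pi_d$ onto the function class $\Qclass$ is continuous jointly in $(Q,d)$ as long as the Gram matrix (in the linear case, $\Phi^\top D \Phi$) remains invertible. This is guaranteed because $\tau > 0$ forces $\pi_Q$ to put strictly positive mass on every action, so $d^{\pi_Q}$ is strictly positive on its support, bounded away from zero uniformly on any compact region of $Q$-values.

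For the invariant set, the tabular case is immediate: $\Pi_d$ is the identity on $\mathbb{R}^{|\mathcal{S}||\mathcal{A}|}$, and the $L_\infty$-ball $K = \{Q : \|Q\|_\infty \leq R_{\max}/(1-\gamma)\}$ is preserved by $\bellmanopt$ alone. With linear function approximation I would work instead in the parameter space and consider a large ball $K = \{w \in \mathbb{R}^d : \|w\| \leq R\}$; the iterate is $T(w) = (\Phi^\top D_{\pi(w)} \Phi)^{-1} \Phi^\top D_{\pi(w)} \bellmanopt(\Phi w)$, and since both the operator norm of the projection matrix and the target $\bellmanopt(\Phi w)$ are uniformly bounded on any compact set of $w$, for $R$ chosen large enough $T(K) \subseteq K$. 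Brouwer's theorem then yields at least one $Q$ with $\mathcal{H}(Q) = Q$.

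The main obstacle is the projection-invariance step under function approximation: the weighted $L_2(d)$-projection is not in general non-expansive in $L_\infty$, so the reward-based $L_\infty$ bound that works in the tabular case does not transfer directly. Resolving this hinges on using the uniform positivity of $d^{\pi_Q}$ (which the Boltzmann temperature $\tau>0$ supplies) to bound the operator norm of $(\Phi^\top D \Phi)^{-1} \Phi^\top D$ uniformly over the ball, after which continuity plus Brouwer conclude the argument. Uniqueness is not claimed, which matches the statement (\emph{``one or more fixed points''}); indeed the self-referential coupling of $D_\pi$ and $Q$ can easily produce multiple fixed points, and the role of this theorem is precisely to show that at least one always exists despite the absence of a contraction structure.
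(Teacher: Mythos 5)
Your overall strategy---continuity of $Q \mapsto \Pi_{D_{\pi_Q}}\bellmanopt Q$ plus a topological fixed-point theorem---is in the spirit of the cited result, and note that the paper does not reprove this theorem: it defers to de Farias and Van Roy, sketching their key device, the auxiliary operator $F_\alpha(x) = x + \alpha\,\Phi^\top D_{\pi}\left(\bellmanopt \Phi x - \Phi x\right)$, which has the same fixed points as $\mathcal{H}$ and for which contraction-like behavior for small $\alpha$ together with a compact-set argument yields existence. Your continuity step (smoothness of the softmax, continuity of $\pi \mapsto d^\pi$, continuity of the weighted projection while the Gram matrix stays nonsingular) is fine and would be part of any such argument.

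The genuine gap is the invariant-set step under function approximation, which is exactly where the difficulty of the theorem lives. The claim that $T(K)\subseteq K$ for a ball $K=\{\|w\|\le R\}$ with $R$ ``large enough'' does not follow from the boundedness you invoke: $\|\bellmanopt(\Phi w)\|_\infty \le R_{\max} + \gamma c R$ grows linearly in $R$, and it is then hit by the operator norm of $(\Phi^\top D\Phi)^{-1}\Phi^\top D$, which equals $1$ only in the $D$-weighted $L_2$ geometry; measured in the parameter norm it scales like an inverse power of $\min_{s,a} d^{\pi_w}(s,a)$, and under a Boltzmann policy with fixed $\tau>0$ that minimum is only bounded below by something of order $\exp(-c'R/\tau)$ on the ball of radius $R$. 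So your required inequality has the form $\kappa(R)\left(R_{\max}+\gamma c R\right) \le R$ with $\kappa(R)$ growing in $R$: enlarging $R$ makes it harder, not easier, and no radius is guaranteed to work. This is not a removable technicality---the composite $\Pi_{D}\bellmanopt$ with a \emph{fixed} weighting can be genuinely expansive and possess no fixed point at all (that is the negative result in the very paper being cited, and the familiar deadly-triad divergence phenomenon), so no argument resting only on crude norm bounds plus positivity of $d^{\pi_Q}$ can close this step. What rescues existence is the \emph{self-consistency} of the weighting, i.e.\ structural relations between $D_{\pi}$ and $P^{\pi}$ (such as non-expansiveness of $P^{\pi}$ in the norm weighted by its own occupancy), exploited through the operator $F_\alpha$ above (or an equivalent construction) to produce the compact convex invariant set before Brouwer is applied. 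Your tabular case is correct but vacuous here (the projection is the identity and the fixed point is just $Q^*$); the content of the theorem is precisely the linear-approximation case where your argument stalls.
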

\begin{proof}
This statement was proven to be true in \cite{farias_fixed_points}, where it was shown the projection operator $\mathcal{H}$ has the same fixed points as another operator, $F_\alpha$ given by:
\begin{equation}
    F_\alpha(x) : = x + \alpha \Phi^T D_{\pi} \left( \bellmanopt \Phi x - \Phi x \right)
\end{equation}
where $\alpha \in (0, 1)$ is a constant. They showed that the operator $F_\alpha$ is a contraction for small-enough $\alpha$ and used a compact set argument to generalize it to other positive values of $\alpha$. We refer the reader to \cite{farias_fixed_points} for further reference.

They then showed a 2-state MDP example (Example 6.1, \cite{farias_fixed_points}) such that the Bellman operator $\mathcal{H}$ has \textbf{2} fixed points, thereby showing the existence of one or more fixed points for the on-policy Bellman backup operator.  
\end{proof}

\subsection{Proof of Theorem~\ref{thm:exponential}}

We now provide an existence proof which highlights the difference in the speeds of learning accurate Q-values from online or on-policy and replay buffer distributions versus a scheme like DisCor. 
We first state an assumption (Assumption~\ref{assumption:optimal_features}) on the linear features parameterization used for the Q-function. This assumption ensures that the optimal Q-function exists in the function class (i.e. linear function class) used to model the Q-function. This assumption has also been used in a number of recent works including \cite{du2020is}. Analogous to \cite{du2020is}, in our proof, we show that this assumption is indeed satisfied for features lying in a space that is logarithmic in the size of the state-space. For this theorem, we present an episodic example, and operate in a finite horizon setting with discounting $\gamma$ and $H$ denotes the horizon length. An episode terminates deterministically as soon as the run reaches a terminal node -- in our case a leaf node of the tree MDP, i.e. a node at level $H-1$ -- as we will see next.

\begin{assumption}
\label{assumption:optimal_features}
There exists $\delta \geq 0$, and $w \in \mathbb{R}^d$, such that for any $(s, a) \in \mathcal{S} \times \mathcal{A}$, the optimal Q-function satisfies: $|Q^*(s, a) - w^T \phi(s,a)| \leq \delta$.
\end{assumption}

We first prove an intermediate property of $\Phi$ satisfying the above assumption that will be crucial for the lower bound argument for on-policy distributions.
\begin{corollary}
\label{cor:epsilon_rank_lemma}
There exists a set of features $\Phi \in \mathbb{R}^{2^H \times O(H^2/\varepsilon^2)}$ satisfying assumption~\ref{assumption:optimal_features}, such that the following holds: $||I_{2^H} - \Phi\Phi^T||_\infty \leq \epsilon$.
\end{corollary}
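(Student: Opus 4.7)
The plan is to construct $\Phi$ probabilistically via a Johnson--Lindenstrauss-style argument. Let $d = C H^2/\varepsilon^2$ for a large enough absolute constant $C$, and draw the rows $\phi(s,a) \in \mathbb{R}^d$ independently from a suitable isotropic distribution (e.g.\ $\phi(s,a) = z_{s,a}/\sqrt{d}$ with $z_{s,a}$ a vector of independent Rademacher or standard Gaussian entries). This makes each $\mathbb{E}[\phi_i^\top \phi_j] = \delta_{ij}$ in expectation. The rest of the proof is then to show that, with positive probability, the realized $\Phi$ simultaneously (i) has $\|I_{2^H} - \Phi\Phi^\top\|_\infty \le \varepsilon$ and (ii) supports an accurate linear representation of $Q^*$ in the sense of Assumption~\ref{assumption:optimal_features}.

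For (i), I would apply a sub-Gaussian concentration bound (Hanson--Wright for the diagonal norms $\|\phi_i\|^2$ and standard pairwise JL for the off-diagonal inner products) to obtain
\begin{equation*}
\Pr\bigl[\,|\langle \phi_i, \phi_j\rangle - \delta_{ij}| > t\,\bigr] \le 2\exp(-c\, d\, t^2)
\end{equation*}
for each fixed pair $(i,j)$. A union bound over the $(2^H)^2 = 4^H$ pairs succeeds with probability $1 - 4^H \cdot 2\exp(-c d t^2)$, which is positive once $d \gtrsim H/t^2$. I would set $t = \varepsilon/H$ (to control a worst-case row-sum over the $O(H)$-length coordinates that actually matter for $Q^*$), which yields the dimension $d = O(H^2/\varepsilon^2)$ appearing in the statement. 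Depending on whether $\|\cdot\|_\infty$ is interpreted as the entrywise max or the induced $\ell_\infty \to \ell_\infty$ operator norm (i.e.\ max absolute row sum), the argument either stops here or requires an additional step of bounding the row sums by observing that on the tree MDP the ``effective'' support that $Q^*$ couples to a given row is polynomial in $H$.

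For (ii), once $\Phi\Phi^\top$ is $\varepsilon$-close to $I$ in the chosen $\infty$-norm, I would pick $w = \Phi^\top Q^*$. Then
\begin{equation*}
\|\Phi w - Q^*\|_\infty = \|(\Phi\Phi^\top - I)\,Q^*\|_\infty \le \|I - \Phi\Phi^\top\|_\infty \cdot \|Q^*\|_\infty \le \varepsilon \cdot \tfrac{R_{\max}}{1-\gamma},
\end{equation*}
so Assumption~\ref{assumption:optimal_features} is satisfied with $\delta = O(\varepsilon)$ after rescaling $\varepsilon$ by the constant $R_{\max}/(1-\gamma)$ (which is $O(1)$ once rewards are normalized, as is assumed in the lower-bound construction).

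The main obstacle is getting the $H$-dependence right: a naive JL argument controls each pair of inner products individually to level $\varepsilon$, which is strong enough for the entrywise max but gives a weaker bound on the $\ell_\infty$ operator norm because the row sum aggregates $2^H$ deviations. Closing this gap is exactly what forces the extra factor of $H$ inside the dimension (hence $H^2/\varepsilon^2$ rather than $H/\varepsilon^2$), and it is the step that needs the most care -- either by exploiting the tree structure to argue that only $O(H)$ coordinates per row contribute non-trivially, or by driving the per-pair deviation down to $\varepsilon/H$ and paying the matching factor in $d$.
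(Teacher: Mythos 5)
Your parameter accounting does not close, and that is where the proof breaks. First, note that the entrywise-max reading of $\|I_{2^H}-\Phi\Phi^T\|_\infty$ is the only feasible one: for any induced operator norm, $\|I-\Phi\Phi^T\|<1$ would force $\Phi\Phi^T$ to be invertible, which is impossible when $d<2^H$, so your operator-norm branch is vacuous (and the suggestion that the ``effective support that $Q^*$ couples to a given row'' could control it conflates a property of $\Phi$ alone with one relative to $Q^*$). Under the entrywise reading, your own union bound over $4^H$ pairs with per-pair deviation $t=\varepsilon/H$ requires $d\gtrsim H/t^2=H^3/\varepsilon^2$, not the claimed $H^2/\varepsilon^2$; conversely, $d=O(H^2/\varepsilon^2)$ only buys $t\approx\varepsilon/\sqrt{H}$. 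Second, your step (ii) inequality $\|(\Phi\Phi^T-I)Q^*\|_\infty\le\|I-\Phi\Phi^T\|_\infty\,\|Q^*\|_\infty$ is submultiplicativity of the induced norm and is false for the entrywise max; the correct bound with entrywise deviation $t$ is $t\,\|Q^*\|_1$ restricted to the support of $Q^*$, which on this tree has $H$ nonzero entries with $\ell_1$ mass $\min(H,\gamma/(1-\gamma))$, i.e.\ $\Theta(H)$ when $\gamma$ is close to $1$. So the per-pair-plus-triangle-inequality route yields either dimension $H^3/\varepsilon^2$ or accuracy $\sqrt{H}\,\varepsilon$. A probabilistic proof in your spirit can be repaired, but only by concentrating the whole weighted sum $\sum_j Q^*_j\bigl(\langle\phi_i,\phi_j\rangle-\delta_{ij}\bigr)$ directly (its scale is $\|Q^*\|_2\sqrt{H/d}$ per row after a union bound over rows, and $\|Q^*\|_2^2\le H$, giving $d=O(H^2/\varepsilon^2)$); that step is missing from your argument.

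The paper's construction is different and sidesteps all of this. It takes a base feature set $\Phi\in\mathbb{R}^{2^H\times O(H/\varepsilon^2)}$ with pairwise inner products at most $\varepsilon$ (the $\varepsilon$-rank of the identity, i.e.\ the same JL-type fact you invoke, but only at deviation level $\varepsilon$), and exploits the structure of $Q^*$: exactly one state-action pair per level $h$ has nonzero optimal value $\gamma^{H-h+1}$, so the single-feature weight $w_h=\gamma^{H-h+1}\phi(s_h,a_h)$ represents $Q^*$ to accuracy $\varepsilon$ on that level. To turn the per-level weights into a single vector, it pads the features into $H$ level-indexed blocks, forming $\Phi'\in\mathbb{R}^{2^H\times O(H^2/\varepsilon^2)}$, and sets $w=[w_1,\dots,w_H]$; cross-level inner products are then exactly zero, so the entrywise bound is preserved. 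In other words, the $H^2/\varepsilon^2$ in the statement comes from this block-padding ($H$ blocks of size $O(H/\varepsilon^2)$), not from driving the pairwise deviation down to $\varepsilon/H$, which is the step your version both needs and miscounts.
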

\begin{proof}
This proof builds on the existence argument presented in \cite{du2020is}. Using the $\varepsilon$-rank property of the identity matrix, one can show that there exists a feature set $\Phi \in \mathbb{R}^{2^H \times O(H/\varepsilon^2)}$ such that $||I_{2^H} - \Phi\Phi^T||_\infty \leq \epsilon$. Thus, we can choose any such $\Phi$, for a sufficiently low threshold $\varepsilon$. In order to assign features $\Phi$ to a state, we can simply perform an enumeration of nodes in the tree via a standard graph search procedure such as depth first search and assign a node $(s, a)$ a feature vector $\phi(s, a)$. To begin with, let's show how we can satisfy assumption~\ref{assumption:optimal_features} by choosing a different weight vector $w_h$ for each level $h$, such that we obtain $|Q_h(s, a) - w_h^T \phi(s, a)| \leq \epsilon$.
Since for each level $h$ exactly one state satisfies $Q^*(s_j, a_j) = \gamma^{H-j+1}$, so we can just let $w_j = \gamma^{H-j +1} \phi(s_j, a_j)$ and thus we are able to satisfy Assumption~\ref{assumption:optimal_features}. This is the extent of the argument used in \cite{du2020is}.

Now we generalize this argument to find a single $w \in \mathbb{R}^d$, unlike different weights $w_h$ for different levels $h$. In order to do this, we create a new $\Phi'$, of size $\Phi' \in \mathbb{R}^{2^H \times O(H^2/\epsilon^2)}$ (note $H^2$ versus $H$ dimensions for $\Phi'$ and $\Phi$) given any $\Phi$ satisfying the argument in the above paragraph, such that
\begin{equation}
    \Phi'(s, a) = \left[\underbrace{0,...,0}_{h \times \dim(\phi(s, a))}, \underbrace{\Phi(s, a)}_{\dim(\phi(s,a))}, 0,...,0 \right] 
    \label{eqn:new_features}
\end{equation}
Essentially, we pad $\Phi$ with zeros, such that for $(s, a)$ belonging to a level $h$, $\Phi'$ is equal to $\Phi$ in the $h-$th, $\dim(\phi(s, a))$-sized block.

A choice of a single $w \in \mathbb{R}^{\dim(\Phi'(s, a))}$ for $\Phi'$ is given by simply concatenating $w_1,\cdots, w_h$ found earlier for $\Phi$.
\begin{equation}
    w = \left[ w_1, w_2, \cdots, w_H \right]
\end{equation}
It is easy to see that $w^T \Phi'$ satisfies assumption~\ref{assumption:optimal_features}. A fact that will be used in the proof for Theorem~\ref{thm:app_exp_lower_bound}, is that this construction of $\Phi'$ also satisfies: $||I_{2^H} - \Phi'\Phi'^T||_\infty \leq \epsilon$.
\end{proof}

We now restate the theorem from Section~\ref{sec:problem_description} and provide a proof below. 
\begin{figure}
    \centering
 \begin{tikzpicture}[->,>=stealth',level/.style={sibling distance = 4cm/#1,
  level distance = 1.5cm}] 
\node [arn_x] {r(s, a) = 0}
    child{ node [arn_x] {r(s) = 0} 
            child{ node [arn_x] {r(s) = 0}} 
            child{ node [arn_x] {r(s) = 0}}
    }
    child{ node [arn_x] {r(s) = 0}
            child{ node [arn_x] {r(s, $a_1$) = 1} }
            child{ node [arn_x] {r(s) = 0} }
		}
; 
\end{tikzpicture}
    \caption{\footnotesize{Example element of the tree family of MDPs used to prove the lower bound in Theorem~\ref{thm:app_exp_lower_bound}. Here, the depth of the tree $H = 2$. $r(s) = 0$ implies that executing any action $a_1$ or $a_2$, a reward of 0 is obtained as state $s$. $(s^*, a^*)$ is given by state marked r(s, $a_1$) = 1.}}
    \label{fig:tree_fig}
\end{figure}

\begin{theorem}[Exponential lower bound for on-policy distributions]
\label{thm:app_exp_lower_bound}
There exists a family of MDPs parameterized by $H > 0$, with $|\mathcal{S}| = 2^H$,  $|\mathcal{A}| = 2$ and a set of features satisfying Assumption~\ref{assumption:optimal_features}, such that on-policy sampling distribution, i.e. $D_k = d^{\pi_k}$, requires $\Omega\left(\gamma^{-H}\right)$ \textit{exact} fixed-point iteration steps in the generic algorithm (Algorithm~\ref{alg:fqi}) for convergence, if at all, the algorithm converges to an $\varepsilon-$accurate Q-function. 
\end{theorem}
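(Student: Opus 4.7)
The plan is to exhibit a family of deterministic binary tree MDPs parameterized by depth $H$ for which on-policy projected fitted Q-iteration makes exponentially slow progress at propagating signal from a single rewarding leaf to the root. I would take the tree from Figure~\ref{fig:tree_fig}, set $r(s^*, a^*) = 1$ at one designated leaf--action pair and $r \equiv 0$ elsewhere, and equip the state-action space with the padded feature map $\Phi'$ of dimension $O(H^2/\varepsilon^2)$ from Corollary~\ref{cor:epsilon_rank_lemma}. By construction $\Phi'$ realizes $Q^*$ to $\ell_\infty$-slack $\varepsilon$ (Assumption~\ref{assumption:optimal_features}) and also satisfies $\|I - \Phi' \Phi'^\top\|_\infty \le \varepsilon$. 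Under this MDP, $Q^*(s_{\text{root}}, a^*) = \gamma^{H-1}$ while $Q^*(s_{\text{root}}, a) = 0$ on the other action, so $\varepsilon$-accuracy at the root requires closing a gap of order $\gamma^{H-1}$.

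First, I would initialize $Q_0 \equiv 0$, so that the induced $\pi_0$ breaks ties arbitrarily (or is uniform under Boltzmann); the discounted state-action marginal then satisfies $d^{\pi_0}(s^*, a^*) = O(\gamma^{H-1} \cdot 2^{-(H-1)})$, exponentially small in $H$. Second, I would write the exact projection in closed form as
\begin{equation*}
Q_{k+1} = \Phi'\bigl(\Phi'^\top D_k \Phi'\bigr)^{-1} \Phi'^\top D_k \, \bellmanopt Q_k, \qquad D_k = \diag\bigl(d^{\pi_k}\bigr),
\end{equation*}
and prove a \emph{signal-propagation lemma}: while the greedy action of $\pi_k$ has not yet switched at any node on the path from $s_{\text{root}}$ to $s^*$, the change in $Q_k$ at any such node is bounded by a universal constant times $d^{\pi_k}(s^*, a^*)$, plus an $O(\varepsilon)$ aliasing term. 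This formalizes the intuition from Figure~\ref{fig:on_policy_example}: in a nearly orthonormal basis the projection transmits Bellman residuals only in proportion to their distributional weight, and the reward-induced residual at $(s^*, a^*)$ is the sole source of signal into the path.

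Third, I would proceed by induction on $k$: while $\pi_k$'s greedy action has not switched along the path, $d^{\pi_k}(s^*, a^*)$ remains exponentially small in $H$, so the signal-propagation lemma forces the per-iteration increase of $Q_k(s_{\text{root}}, a^*)$ to be exponentially small in $H$ as well. Closing the $\gamma^{H-1}$ gap to $Q^*(s_{\text{root}}, a^*)$ therefore requires $\Omega(\gamma^{-H})$ iterations after matching the attenuation factors, which establishes the conclusion conditioned on convergence (and leaves a vacuous conclusion when the algorithm never converges, as allowed by the theorem).

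The main obstacle will be the signal-propagation lemma, because $\Phi'^\top D_k \Phi'$ is nearly singular in directions corresponding to under-visited states and could in principle amplify the Bellman residual out of proportion to its weight. I would handle this by a block decomposition of $\Phi'$ along the support of $D_k$ and its complement, using $\|I - \Phi' \Phi'^\top\|_\infty \le \varepsilon$ to control cross-block terms so that amplification stays benign. The replay-buffer extension referred to in Theorem~\ref{thm:exponential} then follows analogously: as long as no past $\pi_i$ has switched its greedy action to $a^*$ anywhere on the path, every $d^{\pi_i}$ places only $O(2^{-H})$ mass on $(s^*, a^*)$, so the mixture $\sum_{i \le k} d^{\pi_i}$ carries the same per-iteration signal bound up to a benign factor of $k$.
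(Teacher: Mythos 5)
Your construction, features, and closed-form expression for the exact weighted projection are the same as the paper's, and the driving mechanism is also the same: $d^{\pi_k}(s^*,a^*)\leq \gamma^{H}(1-\bar{p})^{H+1}$ for every stochastic policy, so the regression essentially ignores the only source of reward signal. The divergence is in the accounting, and that is where your plan has a genuine gap. Everything rests on the unproven ``signal-propagation lemma,'' and in the form you state it --- per-iteration change at a path node bounded by $C\, d^{\pi_k}(s^*,a^*)$ \emph{plus an additive $O(\varepsilon)$ aliasing term} --- it cannot deliver an exponential lower bound. The quantity you must keep small is exponentially small (the gap $\gamma^{H-1}$ at the root, or equivalently you must prevent $Q_k(s^*,a^*)$ from approaching $1$), while an adversarial accumulation of $O(\varepsilon)$ per iteration closes any such gap in $O(\gamma^{H}/\varepsilon)$ or $O(1/\varepsilon)$ iterations, which is not $\Omega(\gamma^{-H})$ for any $\varepsilon$ that is constant or polynomially small. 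Forcing $\varepsilon$ to be exponentially small in $H$ to rescue the bound inflates the feature dimension $O(H^2/\varepsilon^2)$ of Corollary~\ref{cor:epsilon_rank_lemma} past $2^H$, at which point the representation is effectively tabular, the weighted projection under a full-support on-policy distribution is essentially exact, and the claimed slow-down disappears. Moreover, the hard technical step you defer --- controlling $(\Phi'^{\top} D_k \Phi')^{-1}$ pointwise at under-visited coordinates --- is precisely the part that a pointwise argument cannot easily avoid, and $\norm{I-\Phi'\Phi'^{\top}}_\infty\leq\varepsilon$ by itself does not prevent the near-singular directions from amplifying small residuals at specific states.

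The paper's proof avoids pointwise control altogether: it tracks only $\normt{w_k}$. Writing the exact update $w_{k+1}=(\Phi^{\top}D_{\pi_k}\Phi)^{-1}\Phi^{\top}D_{\pi_k}(r+\gamma P^{\pi_k}\Phi w_k)$, it shows the per-iteration growth of $\normt{w_k}$ is at most of order $\gamma^{H}(1-\bar{p})^{H+1}c/\bigl((1-\varepsilon)2^{H-1}\bigr)$, because the reward only enters through $D_{\pi_k}r$, whose mass is exponentially small; on the other hand, any weight vector representing (a near-)fixed point must satisfy $\normt{w^*}\geq 2^{-H+1}/\bigl((1+\gamma)(1+\varepsilon)\bigr)$ via $(I-\gamma P^{*})\Phi w^{*}=r$ and Cauchy--Schwarz. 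Dividing the required norm by the per-step growth gives $\Omega(\gamma^{-H})$, with the feature aliasing entering only through benign multiplicative $(1\pm\varepsilon)$ factors rather than an additive per-iteration slack. If you want to salvage your pointwise route, you would need to prove that the aliasing bleed-through into $Q_k(s^*,a^*)$ is itself proportional to the (exponentially small) update magnitudes, not merely $O(\varepsilon)$ in absolute terms; as written, that is the missing idea. Incidentally, your induction on ``the greedy action has not yet switched along the path'' is unnecessary, since the visitation bound $d^{\pi_k}(s^*,a^*)\leq\gamma^{H}(1-\bar{p})^{H+1}$ holds for every policy with minimum action probability $\bar{p}$, switched or not.
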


\begin{proof}
\paragraph{Tree Construction.} Consider the family of tree MDPs like the one shown in Figure~\ref{fig:tree_fig}. Both the transition function $T$ and the reward function $r$ are deterministic, and there are two actions at each state: $a_1$ and $a_2$. There are $H$ level of states, thereby forming a full binary tree of depth $H$. Executing action $a_1$ transitions the state to its left child int he tree and executing action $a_2$ transitions the state to its right child. There are $2^h$ states in level $h$. Among the $2^{H-1}$ states in level $H-1$, there is one state, $s^*$, such that action $a^*$ at this state yields a reward of $r(s^*, a^*) = 1$. For other states of the MDP, $r(s, a) = 0$. This is a typical example of a sparse reward problem, generally used for studying exploration~\cite{du2020is}, however, we re-iterate that in this case, we are primarily interested in the number of iterations needed to learn, and thereby assume that the algorithm is given infinite access to the MDP, and all transitions are observed, and the algorithm just picks a distribution $D_k$, in this case, the on-policy state-action marginal for performing backups.
 
\paragraph{Main Argument.} Now, we are equipped with a family of the described tree MDPs and a corresponding set of features $\Phi$ which can represent an $\varepsilon-$accurate Q-function. Our aim is to show that on-policy Q-learning takes steps, exponential in the horizon for solving this task.

For any stochastic policy $\pi(a|s)$, and $\bar{p}$ defined as $\bar{p} = \min_{s \in \mathcal{S}, a \in \mathcal{A}} \pi(a|s)$, $0 < \bar{p} < 0.5$, the marginal state-action distribution satisfies:
\begin{equation}
\label{eqn:max_dsa}
    d^{\pi} (s^*, a^*) \leq \gamma^{H} \cdot (1 - \bar{p})^{H+1}
\end{equation} 
Since $d^{\pi}$ is a discounted state-action marginal distribution, another property that it satisfies is that:
\begin{equation}
\label{eqn:spectral_bound}
c \leq ||d^{\pi}||_2 \leq \frac{1}{1 - \gamma} 2^H    
\end{equation}
where c is a constant $c > 0$. The above is true, since, there are $2^H$ states in this MDP, and the maximum values of any entry in $d^{\pi}$ can be $\frac{1}{1 - \gamma}$ since, $1 - \gamma$ is the least eigenvalue of $(I - \gamma P^\pi)$ for any policy $\pi$, since $||P^{\pi}||_2 = 1$. 

Now, under an on-policy sampling scheme and a linear representation of the Q-function as assumed, the updates on the weights for each iteration of Algorithm~\ref{alg:fqi} are given by ($D_{\pi_k}$ represents $\diag(d^{\pi_k})$): 
\begin{equation}
    w_{k+1} = \left(\Phi^T D_{\pi_k} \Phi \right)^{-1} \Phi^T D_{\pi_k} \left( r + \gamma P^{\pi_k} \Phi w_k \right)
\end{equation}
Now, $||D_{\pi_k} r || \leq \gamma^H (1 - \bar{p})^{H+1} ||\phi(s^*, a^*)||$ from the property Equation~\ref{eqn:max_dsa}. Hence, the maximum 2-norm of the updated $w_{k+1}$ is given by:
\begin{align}
\begin{split}
\label{eqn:sufficient_cond}
    ||w_{k+1}||_2 \leq &~~ ||\left(\Phi^T D_{\pi_k} \Phi \right)^{-1} \Phi^T D_{\pi_k} R||_2 + \gamma || \left(\Phi^T D_{\pi_k} \Phi \right)^{-1} \Phi^T D_{\pi_k} P^{\pi_k} \Phi w_k ||_2\\
    \leq &~~ \frac{\gamma^H (1 - \bar{p})^{H+1}} {||D_{\pi_k}||_F \cdot {(1 - \varepsilon)} \cdot 2^{H-1} } + \gamma ||w_k||_2 \\
    \leq &~~ \frac{\gamma^H (1 - \bar{p})^{H+1} c}{{(1 - \varepsilon)} \cdot{2^{H-1}}} + ||w_k||_2 \\
    = &~~ \left({\gamma}\right)^H \cdot c \cdot \frac{(1)}{{(1 - \varepsilon)} \cdot 2^{H-1}} + ||w_k||_2.
\end{split}
\end{align}
where the first inequality follows by an application of the triangle inequality, the second inequality follows by using the minimum value of the Frobenius norm of the matrix $\Phi$ to be $(1 - \varepsilon) \cdot 2^{H-1}$ (using the $\varepsilon-$rank lemma used to satisfy Assumption~\ref{assumption:optimal_features}) in the denominator of the first term, bounding $||D_{\pi_k} r||$ by Equation~\ref{eqn:max_dsa}, and finally bounding the second term by $\gamma ||w_k||_2$, since the maximum eigenvalue of the entire matrix in front of $w_k$ is $\leq 1$, as it is a projection matrix with a discount $\gamma$ valued scalar multiplier. The third inequality follows from lower bounding $D_{\pi_k}$ by $c$ using Equation~\ref{eqn:spectral_bound}.\\

The optimal $w^*$ is given by the fixed point of the Bellman optimality operator, and in this case satisfies the following via Cauchy-Schwartz inequality,
\begin{align}
    \begin{split}
\label{eqn:neccessary_w_star}
    & (I - \gamma P^*) \Phi w^* = r \\
    \implies & ||\Phi||_F  \cdot ||w^*||_2 \geq ||(I - \gamma P^*)^{-1} r|| \geq \frac{1}{1+ \gamma} ||r||_2 \\
    \implies & (1 + \varepsilon) \cdot 2^{H-1} \cdot ||w^*||_2 \geq \frac{1}{1 + \gamma}\\
    \implies & ||w^*||_2 \geq \frac{1}{1 + \gamma} \cdot 2^{-H + 1} \cdot (1 + \varepsilon)^{-1} 
\end{split}
\end{align}
Thus, in order for $w_k$ to be equal to $w^*$, it must satisfy the above condition (Equation~\ref{eqn:neccessary_w_star}). If we choose an initialization $w_0 = \mathbf{0}$ (or a vector sufficiently close to 0), we can compute the minimum number of steps it will take for on-policy ADP to converge in this setting by using \ref{eqn:sufficient_cond} and \ref{eqn:neccessary_w_star}:
\begin{align}
\begin{split}
    k \geq &~ \frac{(1 + \gamma)^{-1} \cdot 2^{-H + 1} \cdot (1 + \varepsilon)^{-1}}{\left({\gamma}\right)^H \cdot (1 - \bar{p})^H \cdot \frac{(c)}{{(1 - \varepsilon)} \cdot 2^{H-1}}} ~~~~~~ \\
    \implies k \approx &~ \Omega \left({\gamma}^{-H} \right)~~~~~~~~~~~~~
\end{split}
\end{align}
for sufficiently small $\varepsilon$. Hence, the bound follows.

\paragraph{A note on the bound.} Since typically RL problems usually assume discount factors $\gamma$ close to 1, one might wonder the relevance is this bound in practice. We show via an example that this is indeed relevant. In particular, we compute the value of this bound for commonly used $\gamma, \bar{p}$ and $H$. For a discount $\gamma = 0.99$, and a minimum probability of $\bar{p} = 0.01$ (as it is common to use entropy bonuses that induce a minimum probability of taking each action), this bound is of the order of 
\begin{equation}
    \left(\gamma \cdot (1 - \bar{p})\right)^H \approx 10^9 \text{~~ for H = 1000 ~~}
\end{equation}
for commonly used horizon lengths of 1000 (example, on the gym benchmarks).
\end{proof}

\begin{corollary}[Extension to replay buffers]
There exists a family of MDPs parameterized by $H > 0$, with $|\mathcal{S}| = 2^H$, $|\mathcal{A}| = 2$ and a set of features $\Phi$ satisfying assumption~\ref{assumption:optimal_features}, such that ADP with replay buffer distribution takes $\Omega(\gamma^{-H})$ many steps of exact fixed-point iteration for convergence of ADP, if at all convergence happens to an $\epsilon-$accurate Q-function.
\end{corollary}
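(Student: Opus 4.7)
The plan is to reuse essentially the same MDP family and feature construction from Theorem~\ref{thm:app_exp_lower_bound}, and argue that the single pointwise bound that drove the on-policy proof, namely $d^{\pi}(s^*,a^*) \leq \gamma^H (1-\bar p)^{H+1}$, lifts to the replay-buffer mixture without losing its exponential smallness. Concretely, I would take the same tree MDP with the unique rewarding transition $(s^*,a^*)$ at depth $H-1$ and the same features $\Phi$ from Corollary~\ref{cor:epsilon_rank_lemma}, and then analyze the fixed-point iteration with $D_k$ replaced by the normalized replay distribution $\mu_k \propto \sum_{i=1}^{k} d^{\pi_i}$.

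The first key step is to show that the mixture inherits the exponential smallness at the rewarding transition. Since each policy $\pi_i$ encountered during training is a Boltzmann/$\epsilon$-greedy policy with a uniform lower bound $\bar p > 0$ on the action probabilities, Equation~\ref{eqn:max_dsa} applies to every mixture component, so by convexity $\mu_k(s^*,a^*) \leq \gamma^H (1-\bar p)^{H+1}$ as well. Hence $\|D_{\mu_k} r\|_2$ still carries the factor $\gamma^H (1-\bar p)^{H+1}$, exactly as in the on-policy proof. For the spectral lower bound analogous to Equation~\ref{eqn:spectral_bound}, I would observe that $\mu_k$ is a normalized discounted occupancy measure and that, as a convex combination of distributions each bounded coordinate-wise by $1/(1-\gamma)$, the same two-sided norm inequalities $c \leq \|\mu_k\|_2 \leq 2^H/(1-\gamma)$ are preserved.

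The second step is to repeat the per-iteration growth estimate
\begin{equation*}
w_{k+1} = \bigl(\Phi^T D_{\mu_k} \Phi\bigr)^{-1} \Phi^T D_{\mu_k} \bigl( r + \gamma P^{\pi_k} \Phi w_k \bigr),
\end{equation*}
applying the triangle inequality, the $\varepsilon$-rank bound $\|\Phi\|_F \geq (1-\varepsilon) 2^{H-1}$, and the fact that the projected Bellman operator contracts at rate $\gamma$, to obtain the same inequality $\|w_{k+1}\|_2 \leq \|w_k\|_2 + O\bigl(\gamma^H \cdot 2^{-H}\bigr)$ as in Equation~\ref{eqn:sufficient_cond}. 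Pairing this with the unchanged necessary lower bound $\|w^*\|_2 \geq (1+\gamma)^{-1}(1+\varepsilon)^{-1} 2^{-H+1}$ from Equation~\ref{eqn:neccessary_w_star}, starting from $w_0 \approx \mathbf{0}$, forces $k = \Omega(\gamma^{-H})$ before $w_k$ can possibly reach an $\varepsilon$-neighborhood of $w^*$.

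The main obstacle I anticipate is the mixture-dependent spectral control: in the on-policy case $\|d^{\pi_k}\|_2$ is bounded by arguing about a single occupancy measure, but for $\mu_k$ one must verify that normalizing a sum of $k$ occupancy measures does not amplify $\|D_{\mu_k}\|_F^{-1}$ by a factor that cancels the $\gamma^H$ smallness. This is handled by noting that each component $d^{\pi_i}$ is itself a probability distribution on $2^H$ states, so $\mu_k$ is too; the same lower bound $c$ on $\|\mu_k\|_2$ therefore holds uniformly in $k$, making the per-iteration increment in $\|w_k\|_2$ of the same order $\gamma^H \cdot 2^{-H}$ as before. A short remark at the end will observe that the argument is oblivious to how past policies are mixed (uniform average, weighted average, or full concatenation of past data), so the conclusion applies to all standard replay-buffer schemes considered in the paper.
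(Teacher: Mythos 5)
Your proposal is correct and follows essentially the same route as the paper: it reuses the tree MDP and feature construction from Theorem~\ref{thm:app_exp_lower_bound}, observes that the mixture distribution inherits the pointwise bound $\mu_k(s^*,a^*) \leq \gamma^H (1-\bar p)^{H+1}$ by convexity and the two-sided norm bound $c \leq \|\mu_k\|_2 \leq 2^H/(1-\gamma)$, and then repeats the per-iteration growth estimate against the necessary lower bound on $\|w^*\|_2$. The paper's own proof of the corollary does exactly this, establishing the same two inequalities for the replay-buffer mixture and deferring the remainder to the on-policy argument.
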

\begin{proof}
For replay buffers, we can prove a similar statement as previously. The steps in this proof follow exactly the steps in the proof for the previous theorem.

With replay buffers, the distribution for the projection at iteration $k$ is given by:
\begin{equation}
    \label{eqn:replay_buffer_dist}
    d_k(s, a) = \frac{1}{k} \sum_{i=1}^k d_{\pi_k}(s, a)
\end{equation}
Therefore, we can bound the probability of observing any state-action pair similar to Equation~\ref{eqn:max_dsa} as:
\begin{equation}
    \label{eqn:max_probability}
    d_k(s^*, a^*) \leq \frac{1}{k} \sum_{i=1}^k \gamma^H \cdot (1 - \bar{p})^{H+1} 
\end{equation}
with $\bar{p}$ as defined previously. Note that this inequality is the same as the previous proof, and doesn't change. We next bound the 2-norm of the state-visitation distribution, in this case, the state-distribution in the buffer.
\begin{equation}
    \label{eqn:state_dist_frobenius}
    c \leq ||d_k||_2 \leq \frac{1}{1 - \gamma} \cdot 2^H 
\end{equation}
where $c > 0$. The two main inequalities used are thus the same as the previous proof. Now, we can simply follow the previous proof to prove the result.
\end{proof}

\paragraph{Practical Implications.} In this example, both on-policy and replay buffer Q-learning suffer from the problem of exponentially many samples need to reach the optimal Q-function. Even in our experiments in Section~\ref{sec:problem_description}, we find that on-policy distributions tend to reduce errors very slowly, at a rate that is very small. The above bound extends this result to replay buffers as well. 

In our next result, however, we show that an optimal choice of distribution, including DisCor, can avoid the large iteration complexity in this family of MDPs. Specifically, using the errors against $Q^*$, i.e. $|Q_k - Q^*|$ can help provide a signal to improve the Q-function such that this optimal distribution / DisCor will take only $\mathrm{poly}(H)$ many iterations for convergence.

\begin{corollary}[Optimal distributions / DisCor]
\label{thm:discor_suboptimal}
In the tree MDP family considered in Theorem~\ref{thm:exponential}, with linear function approximation for the Q-function, and with Assumption~\ref{assumption:optimal_features} for the features $\Phi$, DisCor takes $\mathrm{poly}(H)$ many exact iterations for $\varepsilon-$accurate convergence to the optimal Q-function.
\end{corollary}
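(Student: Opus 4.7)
The plan is to exploit the structure of the DisCor weighting to show that on this tree MDP it implements exactly the ``bottom-up'' backup order depicted in Figure~\ref{fig:discor_sample}. The key identity is the recursion $\Delta_k = |Q_k - \bellmanopt Q_{k-1}| + \gamma P^{\pi_{k-1}} \Delta_{k-1}$, which on a deterministic tree makes $\Delta_k$ at a leaf independent of downstream errors (leaves are terminal, so $P^{\pi_{k-1}} \Delta_{k-1} = 0$ there), while $\Delta_k$ at an internal node strictly inherits $\gamma \Delta_{k-1}$ from its unique successor. Consequently, the DisCor weight $w_k(s,a) \propto \exp(-\gamma [P^{\pi_{k-1}} \Delta_{k-1}](s,a)/\tau)$ is maximized at leaves, and more generally at states whose successors already have small $\Delta$.

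First I would handle the leaves (level $H-1$): since their Bellman target equals the immediate reward $r(s,a)$ and $[P^{\pi_{k-1}} \Delta_{k-1}](s,a) = 0$, a single DisCor-weighted projection fits them within $O(\varepsilon)$ of $Q^*$, using the near-orthogonality $\|I_{2^H} - \Phi'\Phi'^T\|_\infty \leq \varepsilon$ from Corollary~\ref{cor:epsilon_rank_lemma} to bound cross-state interference. Because the padded feature construction of Equation~\ref{eqn:new_features} places each level in its own block, updates at one level do not alter Q-values at other levels at all, so $O(\varepsilon)$ is the only error leaked into other levels.

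Next I would carry out a downward induction on $h$. Suppose after some iteration $k_h$, states at levels $H-1,\ldots,h+1$ satisfy $|Q_{k_h} - Q^*| \leq O(\varepsilon)$; by Theorem~\ref{thm:delta_k_upper_bound} we then get $\Delta_{k_h} = O(\varepsilon)$ at those levels, so $[P^{\pi_{k_h}} \Delta_{k_h}](s,a) = O(\varepsilon)$ for states $(s,a)$ at level $h$. The DisCor weights at level $h$ thus rise to their maximum (relative to still-inaccurate shallower levels), and because the Bellman target $\bellmanopt Q_{k_h}$ at level $h$ depends only on already-accurate level-$(h+1)$ values, one additional weighted projection fits level $h$ to within $O(\varepsilon)$. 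Iterating this from $h = H-1$ down to $h = 0$ consumes $O(H)$ iterations total, plus the burn-in $k_0 = O(\log(1-\gamma)/\log\gamma)$ required by Theorem~\ref{thm:delta_k_upper_bound} for $\Delta_k$ to serve as a valid upper bound on $|Q_k - Q^*|$. Since $k_0$ is independent of $H$, the total cost is $\mathrm{poly}(H)$, exponentially better than the $\Omega(\gamma^{-H})$ lower bound of Theorem~\ref{thm:exponential}.

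The hard part will be quantifying the weight ``gap'' carefully enough: to guarantee that at iteration $k_h + 1$ the re-weighted least squares actually prioritizes level $h$ rather than dispersing its mass over shallower (still inaccurate) levels, I will need a uniform lower bound on $\Delta_{k_h}$ at shallower levels (showing it remains of order $\gamma^{H-h'}$ or larger because errors in the Bellman residual at those levels have not yet been corrected), combined with the upper bound $\Delta_{k_h} = O(\varepsilon)$ inductively established for levels $\geq h+1$. This should yield an exponential separation $w_{k_h+1}(\text{level } h) / w_{k_h+1}(\text{level } h' < h) \geq \exp(\Omega(1)/\tau)$, which, together with the block-diagonal structure of $\Phi'\Phi'^T$ arising from the padded features, will complete each inductive step in $O(1)$ iterations. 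A secondary subtlety is the initialization phase of $\Delta_\phi$ for $k < k_0$, which can simply be absorbed as a constant additive cost, since during this phase the inductive regime has not yet been entered and the $O(\gamma^{k-k_0})$ slack in Theorem~\ref{thm:delta_k_upper_bound} washes out once the induction begins.
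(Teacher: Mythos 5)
Your proposal follows essentially the same route as the paper's own argument: a bottom-up, level-by-level induction starting from the leaves (where terminal transitions make the target error zero, so DisCor concentrates its mass there), then observing that once levels $H-1,\dots,h+1$ are $\varepsilon$-accurate the targets at level $h$ are accurate, so one more weighted projection fixes that level, giving $\mathrm{poly}(H)$ iterations in total. The extra details you flag (quantifying the exponential weight gap, the $\Delta_\phi$ burn-in, and the block structure of the padded features) go beyond what the paper's informal proof addresses, but they do not change the underlying argument.
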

\begin{proof}
We finally show that the DisCor algorithm, which prioritizes states based on the error in target values, will take $\mathrm{poly}(H)$ many steps for convergence. Assume that Q-values are initialized randomly, for example via a normal random variable with standard deviation $\sigma$, i.e., $Q_0(s, a) \sim \mathcal{N}(0, \sigma^2)$, however, $\sigma$ is very small, but is more than 0 ($\sigma >0$) (this proof is still comparable to the proof for on-policy distributions, since Q-values can also be initialized very close to $0$ even in that case, and the proof of Theorem~\ref{thm:app_exp_lower_bound} still remains valid.). 

Now we reason about a run of DisCor in this case.

\paragraph{Iteration 1.} In the first iteration, among all nodes in the MDP, the leaf nodes (depth $H$-1) have $0$ error at the corresponding target values, since an episode terminates once a rollout reaches a leaf node. Hence, the algorithm will assign equal mass to all leaf node states, and exactly update the Q-values for nodes in this level (upto $\varepsilon$-accuracy).

\paragraph{Iteration 2.} In the second iteration, the leaf nodes at level $H-1$ have accurate Q-values, therefore, the algorithm will pick nodes at the level $H-2$, for which the target values, i.e. Q-values for nodes at level $H-1$, have 0 error along with nodes at level $H-1$. The algorithm will update Q-values at these nodes at level $H-2$, while ensuring that the incurred error at the nodes at level $H-1$ isn't beyond $\varepsilon$, since nodes at both levels are chosen. Since, the optimal value function $Q^*$ can be represented upto $\varepsilon-$accuracy, we can satisfy this criterion.

\paragraph{Iteration $k$.} In iteration $k$, the algorithm updates Q-values for nodes at level $H-k$, while also ensuring Q-values for all nodes at a level higher than $H-k$ are estimated within the range of $\varepsilon-$allowable error, since all the nodes below level $H-k$ are updated. This is feasible since, $Q^*$ is expressible with $\varepsilon-$accuracy within the linear function class chosen. 

This iteration process continues, and progress level by level, from the leaves (level $H-1$) to the root (level $0$). At each iteration Q-values for all states at the same level, and below are learned together. Since learning progresses in a ``one level at-a-time'' fashion, with guaranteed correct target values (i.e. target values are equal to the optimal Q-function $Q^*$) for any update that the algorithm performs, it would take at most $\mathrm{poly}(H)$ many iterations (for example, multiple passes through the depth of the tree) for $\varepsilon$-accurate convergence to the optimal $Q$-function.
\end{proof}

\section{Extended Related Work}
\label{app:related_work_extended}

\paragraph{Error propagation in ADP.} A number of prior works have analysed error propagation in ADP methods. Most work in this area has been devoted to analysing how errors in Bellman error minimization propagate through the learning process of the ADP algorithm, typically focusing on methods such as fitted Q-iteration (FQI)~\citep{Riedmiller2005} or approximate policy iteration~\citep{perkins2002api}. Prior works in this area assume an abstract error model, and analyze how errors propagate. Typically these prior works only limitedly explore reasons for error propagation or present methods to curb error propagation. \cite{munos2003api} analyze error propagation in approximate policy iteration methods using quadratic norms. \cite{munos2005error} analyze the propagation of error across iterations of approximate value iteration (AVI) for $L_p$-norm $p=(1,2)$. \cite{munos2008finite} provide finite sample guarantees of AVI using error propagation analysis. Similar ideas have been used to provide error bounds for a number of different methods -- \cite{farahmand2010error,scherrer15a,Lesner2013TightPB,scherrer_comparison} and many more. In this work, we show that ADP algorithms suffer from an absence of corrective feedback, which arises because the data distribution collected by an agent is insufficient to ensure that error propagation is eventually corrected for. We further propose an approach, DisCor, which can be used in conjunction with modern deep RL methods. 

\paragraph{Offline / Batch Reinforcement Learning.} Our work bears similarity to the recent body of literature on batch, or offline reinforcement learning~\citep{kumar19bear,fujimoto19a,wu2019behavior}. All of these works exploit the central idea of constraining the policy to lie in a certain neighborhood of the behavior, data-collection policy. While \cite{kumar19bear} show that this choice can be motivated from the perspective of error propagation, we note that there are clear differences between our work and such prior works in batch RL. First, the problem statement of batch RL requires learning from completely offline experience, however, our method learns online, via on-policy interaction. While error propagation is a reason behind incorrect Q-functions in batch RL, we show that such error accumulation also happens in online reinforcement, which results in a lack of corrective feedback.  

\paragraph{Generalization effects in deep Q-learning.} There are a number of recent works that theoretically analyze and empirically demonstrate that certain design decisions for neural net architectures used for Q-learning, or ADP objectives can prove to be significant in deep Q-learning. For instance, \cite{martha2018sparse} point out that sparse representations may help Q-learning algorithms, which links back to prior literature on state-aliasing and destructive interference. \cite{Achiam2019TowardsCD} uses an objective inspired from the neural tangent kernel (NTK)~\citep{ntk} to ``cancel'' generalization effects in the Q-function induced across state-action pairs to mimic tabular and online Q-learning. Our approach, DisCor, can be interpreted as \textit{only} indirectly affecting generalization via the target Q-values for state-action pairs that will be used as bootstrap targets for the Bellman backup, which are expected to be accurate with DisCor, and this can aid generalization, similar to how generalization can be achieved via abstention from training on noisy labels in supervised learning~\citep{absention}.

\paragraph{Replay Buffers and Generalization.} There are some prior works performing analytical studies on the size of the replay buffer~\citep{zhang2017deeper,liu2018effects}, which propose that larger replay buffers might hurt training with function approximation. Partly this problem goes away if smaller buffers are used, or if the algorithm chooses to replay recent experience more often. Our work indicates an absence of corrective feedback problem -- online data collection might not be able to correct errors in the Q-function -- which is distinct from the size of the replay buffer.   

\section{Experimental Details}
\label{app:exp_details}
In this section, we provided experimental details, such as the DisCor algorithm in practice (Section~\ref{sec:discor_practice}), and the hyperparameter choices (Section~\ref{sec:app_exp_details}).

\subsection{DisCor in Practice}
\label{sec:discor_practice}
In this section, we provide details on the experimental setup and present the pseudo-code for the practical instantiation of our algorithm, DisCor.

The pseudocode for the practical algorithm is provided in Algorithm~\ref{alg:practical_alg}. Like any other ADP algorithm, such as DQN or SAC, our algorithm maintains a pair of Q-functions -- the online Q-network $Q_\theta$ and a target network $Q_{\bar{\theta}}$. For continuous control domains, we use the clipped double Q-learning trick~\cite{pmlr-v80-fujimoto18a}, which is also referred to as the ``twin-Q'' trick, and it further parametrizes another pair of online and target Q-functions, and uses the minimum Q-value for backup computation. In addition to Q-functions, in a continuous control domain, we parametrize a separate policy network $\pi_\psi$ similar to SAC. In a discrete action domain, the policy is just given by a greedy maximization of the online Q-network. 

DisCor further maintains a model for accumulating errors $\Delta_\phi$ parameterized by $\phi$ and the corresponding target error network $\Delta_{\bar{\phi}}$. In the setting with two Q-functions, DisCor models two networks, one for modelling error in each Q-function. At every step, a few (depending upon the algorithm) gradient steps are performed on $Q$ and $\Delta$, and $\pi$ -- if it is explicitly modeled, for instance in continuous control domains. This is a modification of generalized ADP Algorithm~\ref{alg:fqi} and the corresponding DisCor version (Algorithm~\ref{alg:discor}), customized to modern deep RL methods.

\begin{algorithm}[t!]
\small
\caption{\textbf{DisCor: Deep RL Version}}
\label{alg:practical_alg}
\begin{algorithmic}[1]
    \STATE Initialize online Q-network $Q_\theta(s, a)$, target Q-network, $Q_{\bar{\theta}}(s, a)$, error network $\Delta_\phi(s, a)$, target error network $\Delta_{\bar{\phi}}$, initial distribution $p_0(s, a)$, a replay buffer $\beta$ and a policy $\pi_\psi(a|s)$, number of gradient steps $G$, target network update rate $\eta$, initial temperature for computing weights $w_k$, $\tau_0$.
    \FOR{step $k$ in \{1, \dots, \}}
        \STATE Collect $M$ samples using $\pi_\psi(a|s)$, add them to replay buffer $\beta$, sample $\{(s_i, a_i)\}_{i=1}^N \sim \beta$
        \STATE Evaluate $Q_\theta(s,a)$ and $\Delta_\phi(s, a)$ on samples $(s_i, a_i)$.
        \STATE Compute target values for $Q$ and $\Delta$ on samples:
        $${y}_i = r_i + \gamma \expec_{a' \sim \pi_\psi(a'|s')} [Q_{\bar{\theta}}(s'_i, a')] $$
        $$\hat{\Delta}_{i} = |Q_\theta(s, a) - y_i| + \gamma \expec_{\hat{a}_i \sim \pi(a_i|s')} [\Delta_{\bar{\phi}}(s'_i, \hat{a}_i)]$$
        \STATE {Compute $w_k$ using Equation~\ref{eqn:importance_weights} with temperature $\tau_k$} 
        \STATE Take $G$ gradient steps on the Bellman error for training $Q_\theta$ weighted by $w_k$.
        $$ \theta \leftarrow \theta -  \alpha \nabla_\theta \frac{1}{N}\sum_{i=1}^N {w_k(s_i, a_i)} \cdot (Q_\theta(s_i,a_i) - y_i)^2$$
        \STATE {Tale $G$ gradient steps to minimize unweighted (regular) Bellman error for training $\phi$.
        $$\phi \leftarrow \phi - \alpha \nabla_\phi \frac{1}{N} \sum_{i=1}^N (\Delta_\theta(s_i, a_i) - \hat{\Delta}_i)^2$$}
        \STATE Update the policy $\pi_\psi$ if it is explicitly modeled.
        $$\psi \leftarrow \psi + \alpha \nabla_\psi \expec_{s \sim \beta, a \sim \pi_\psi(a|s)} [Q_\theta(s, a)]$$ 
        \STATE Update target networks using soft updates (SAC), hard updates (DQN)
           $$\bar{\theta} \leftarrow (1 - \eta) \bar{\theta} + \eta \theta$$ 
           $$\bar{\phi} \leftarrow (1 - \eta) \bar{\phi} + \eta \phi$$
        \STATE Update temperature hyperparameter for DisCor:
        $$ \tau_{k+1} \leftarrow (1 - \eta) \tau_k + \eta ~~\textsc{batch-mean}(\Delta_\phi(s_i, a_i))$$
    \ENDFOR
\end{algorithmic}
\end{algorithm}

\subsection{Experimental Hyperparameter Choices}
\label{sec:app_exp_details}
We finally specify the hyperparameters we used for our experiments. These are as follows:
\begin{itemize}
    \item \textit{Temperature $\tau$:} DisCor mainly introduces one hyperparameter, the temperature $\tau$ used to compute the weights $w_k$ in Equation~\ref{eqn:importance_weights}. As shown in Line 11 of Algorithm~\ref{alg:practical_alg}, DisCor maintains a moving average of the temperatures and uses this average to perform the weighting. This removes the requirement for tuning the temperature values at all. For initialization, we chose $\tau_0 = 10.0$ for all our experiments, irrespective of the domain or task. 
    \item \textit{Architecture for $\Delta_\phi$:} For the design of the error network, $\Delta_\phi$, we utilize a network with 1 extra hidden layer than the corresponding Q-network. For instance, in metaworld domains, the standard Q-network used was [256, 256, 256] in size, and thus we used an error network of size: [256, 256, 256, 256], and for MT10 tasks we used [160, 160, 160, 160, 160, 160] sized Q-networks~\citep{yu2020gradient} and 1-extra layer error networks $\Delta_\phi$.
    \item \textit{Target net updates:} We performed target net updates for $\Delta_{\bar{\phi}}$ in the same manner as standard Q-functions, in all domains. For instance, in MetaWorld, we update the target network $\Delta_{\bar{\phi}}$ with a soft update rate of 0.005 at each environment step, as is standard with SAC~\cite{haarnoja2018sacapps}, whereas in DQN~\citep{Mnih2015}, we use hard target resets.
    \item \textit{Learning rates for $\Delta_\phi$:} These were chosen to be the same as the corresponding learning rate for the Q-function, which is $3e-4$ for SAC and $0.0025$ for DQN.
    \item \textit{Official Implementation repositories used for our work:} 
        \begin{enumerate}
            \item Soft-Actor-Critic: \url{https://github.com/rail-berkeley/softlearning/}
            \item Dopamine~\citep{castro18dopamine}: Offical DQN implementation \url{https://github.com/google/dopamine}, and the baseline DQN numbers were reported from the logs available at: \url{https://github.com/google/dopamine/tree/master/baselines}
            \item Tabular environments~\citep{fu19diagnosing}: \url{https://github.com/justinjfu/diagnosing_qlearning}
        \end{enumerate}
    \item We perform self-normalized importance sampling across a batch, instead of regular importance sampling, since that gives rise to more stable training, and suffers less from the curse of variance in importance sampling.
    \item \textit{Seeds}: In all our experiments, we implemented our methods on top of the official repositories, ran each experiment for 4 randomly chosen seeds from the interval, $[10, 10000]$, in Meta-World, OpenAI gym and tabular environments. For DQNs on atari, we were only able to run 3 seeds for each game for our method, however, we found similar performances, and less variance across seeds, as is evident from the variance bands in the corresponding results. For baseline DQN, we just used the log files provided by the dopamine repository for our results. 
\end{itemize}  

\section{Additional Experiments}
\label{sec:additional_exps}
We now present some additional experimental results which could not have been presented in Section~\ref{sec:experiments}. 

\subsection{Tabular Environment Analysis}
\label{sec:app_exps_gridworld}
\paragraph{Environment Setup.} We used the suite of tabular environments from from \cite{fu19diagnosing}, which provides a suite of 8 tabular environments and a suite of algorithms based on fitted Q-iteration~\citep{Riedmiller2005}, which forms the basis of modern deep RL algorithms based on ADP. We evaluated performance on different variants of the $(16, 16)$ gridworld provided, with different reward styles (sparse, dense), different observation functions (one-hot, random features, locally smooth observations), and different amounts of entropy coefficients (0.01, 0.1). We evaluated on five different kinds of environments: grid16randomobs, grid16onehot, grid16smoothobs, grid16smoothsparse, grid16randomsparse -- which cover a wide variety of combinations of feature and reward types. We also evaluated on CliffWalk, Sparsegraph and MountainCar MDPs in Figures~\ref{fig:app_fig_exact} and \ref{fig:app_fig_sampled}. 

\paragraph{Sampling Modes.} We evaluated in two modes -- (1) exact mode, in the absence of sampling error, where an algorithm is provided with all transitions in the MDP and simply chooses a weighting over the states rather than sampling transitions from the environment, and (2) sampled mode, which is the conventional RL paradigm, where the algorithm performs online data collection to collect its own data.  

\begin{figure}
    \centering
\includegraphics[width=0.6\linewidth]{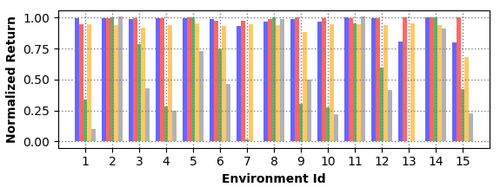}
    \caption{\footnotesize{Performance of different methods: DisCor (blue), DisCor (oracle) (red), Replay buffer Q-learning (green, on-policy (grey) and prioritized updates (orange), across different environments measured in terms of smooth normalized returns in the \textbf{exact} setting with all transitions. Note that DisCor and DisCor (oracle) generally tend to perform better.}}
    \label{fig:app_fig_exact}
\end{figure}

\paragraph{Setup for Figures~\ref{fig:visitation_doesnt_correct_eror} and \ref{fig:suboptimal_conv}.} For Figures~\ref{fig:visitation_doesnt_correct_eror} and \ref{fig:suboptimal_conv}, we used the grid16randomobs MDP (which is a $16 \times 16$ gridworld with randomly initialized vectors as observations), with an entropy penalty of 0.01 to the policy. For Figure~\ref{fig:instability} we used the grid16smoothobs MDP with locally smooth observations, with an entropy penalty of 0.01 as well, and for Figure~\ref{fig:sparse_reward}, we used grid16smoothsparse environment, with sparse reward and smooth features. 

\paragraph{Results.} We provide some individual environment performance curves showing the smoothed normalized return achieved at the end of 300 steps of training in both exact (Figure~\ref{fig:app_fig_exact}) and sampled (Figure~\ref{fig:app_fig_sampled}) settings. We also present some individual-environment learning curves for these environments comparing different methods in both exact (Figure~\ref{fig:exact_fqi_runs}) and sampled (Figure~\ref{fig:sampled_fqi_runs}). 

\begin{figure}
    \centering
\includegraphics[width=0.6\linewidth]{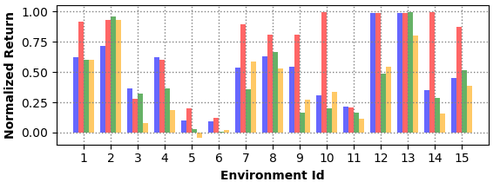}
    \caption{\footnotesize{Performance of different methods: DisCor (blue), DisCor (oracle) (red), Replay buffer Q-learning (green) and prioritized updates (orange). across different environments measured in terms of smooth normalized return with \textbf{sampled} transitions. Note that DisCor and DisCor (oracle) generally tend to perform better.}}
    \label{fig:app_fig_sampled}
\end{figure}

\begin{figure*}
    \centering
    \includegraphics[width=0.19\linewidth]{images/plot_gridworld_new/grid16randomobs_ent0_01_exact_new.pdf}
    \includegraphics[width=0.19\linewidth]{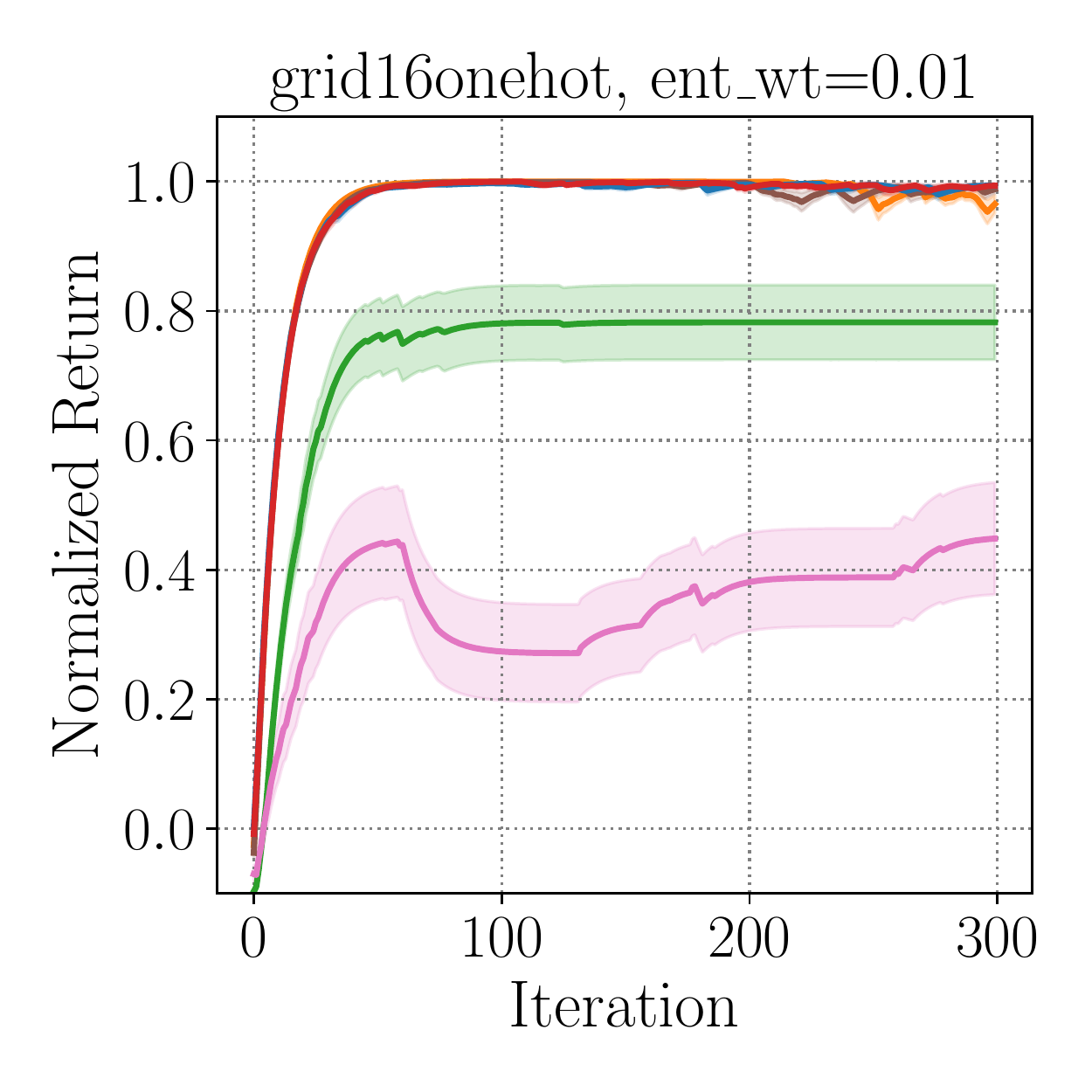}
     \includegraphics[width=0.19\linewidth]{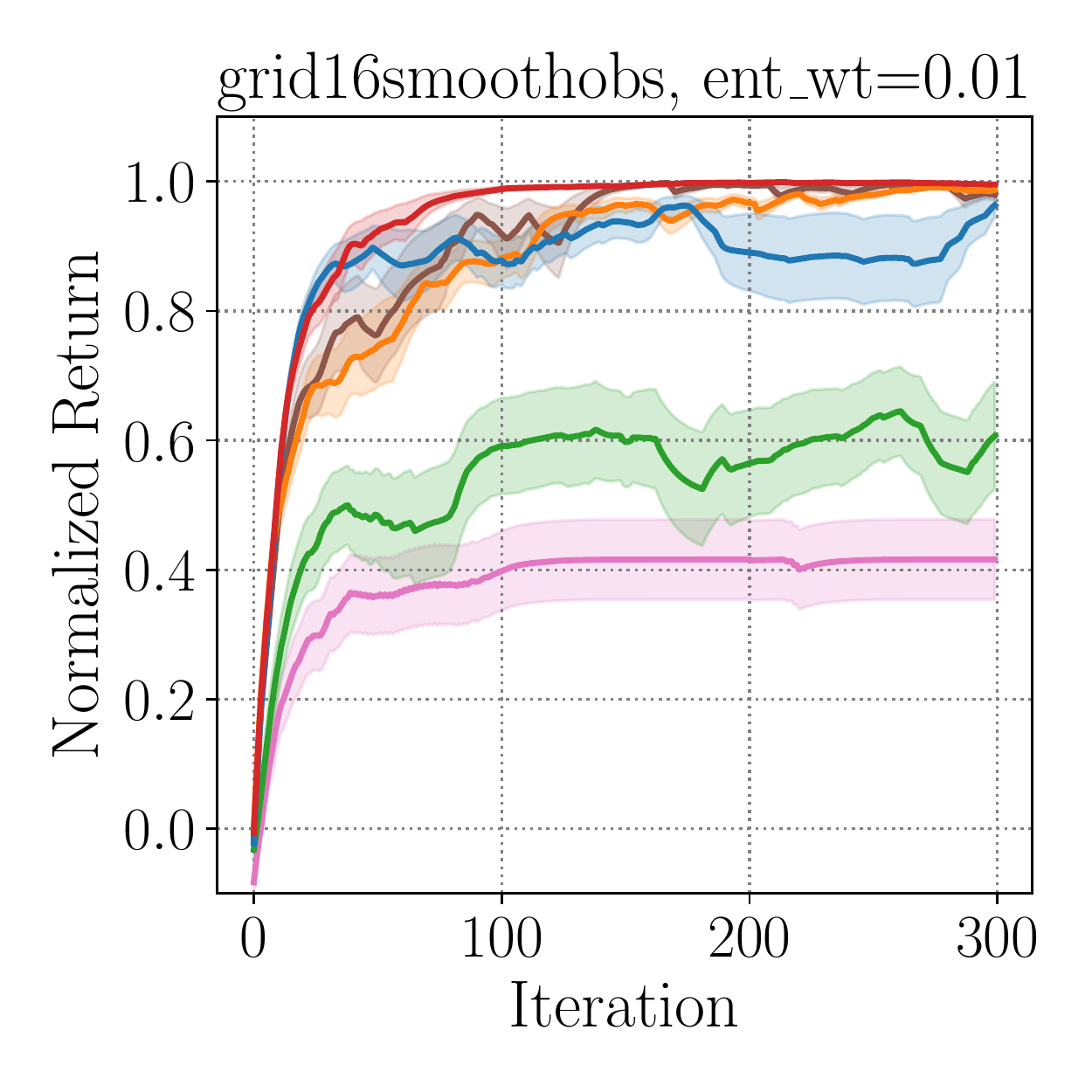}
     \vline
    \includegraphics[width=0.19\linewidth]{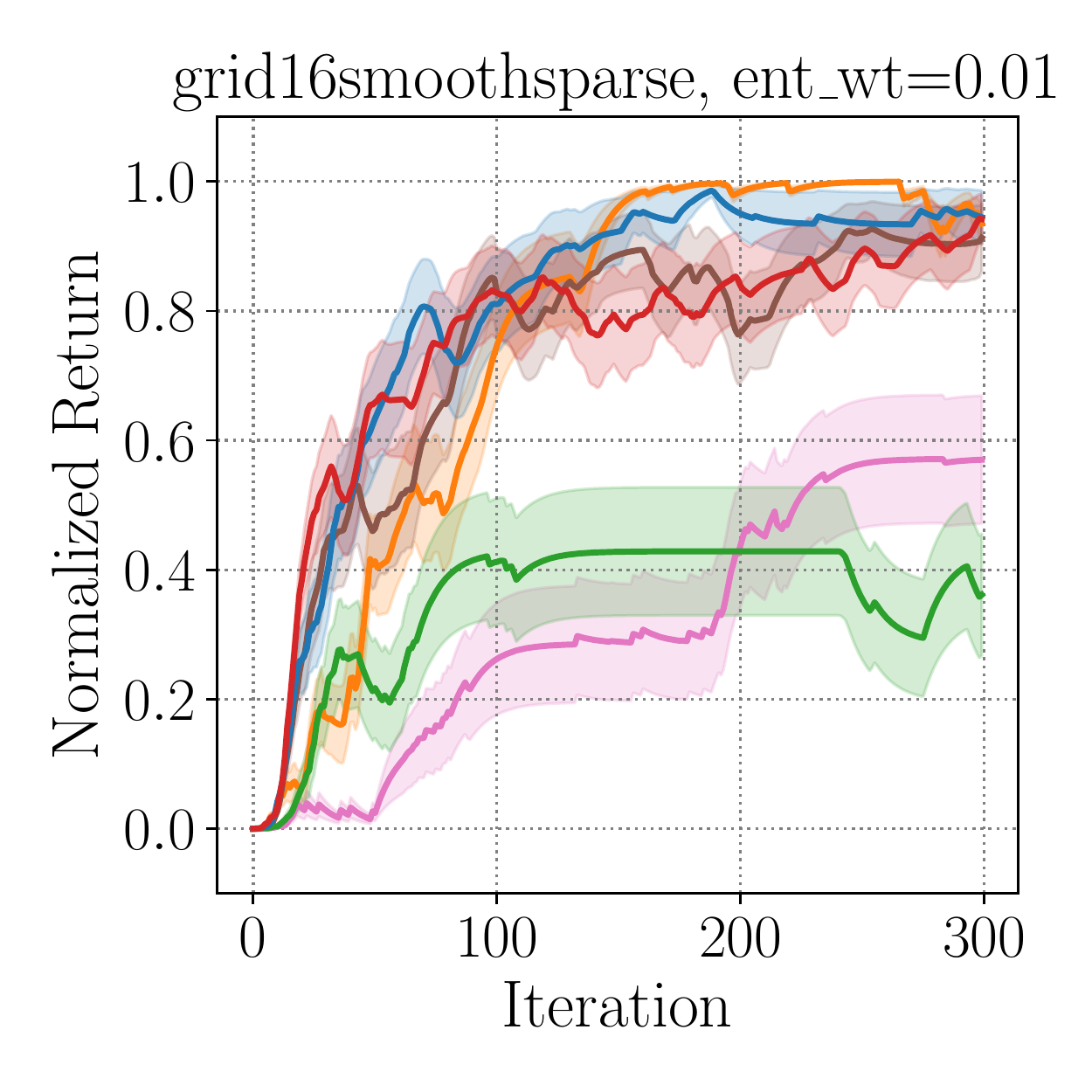}
    \includegraphics[width=0.19\linewidth]{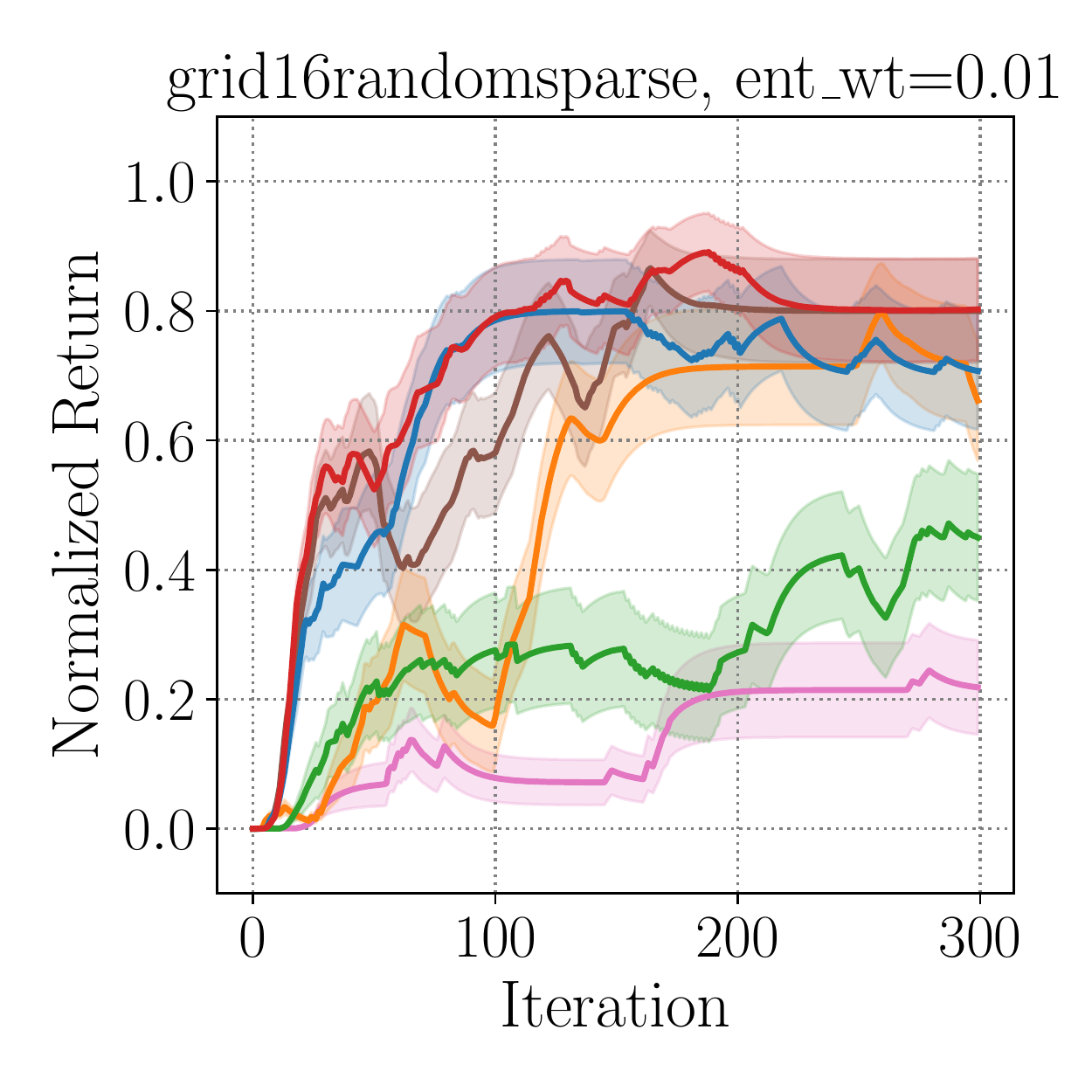}
    \includegraphics[width=0.75\linewidth]{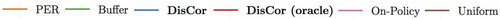}
    \caption{\footnotesize{Learning curves for different algorithms in the \textbf{exact} setting. Note that DisCor (blue) and DisCor (oracle) (red) are generally the best algorithms in these settings. Replay Buffers (green) help over on-policy (pink) distributions. Prioritizing transitions based on high Bellman error (orange) is performant in some cases, but hurts in the other cases -- it is especially slow in cases with sparse rewards, note the speed of learning on grid16randomsparse and grid16smoothsparse (\textbf{right} of the vertical line) environments.}}
    \label{fig:exact_fqi_runs}
\end{figure*}

\begin{figure*}
    \centering
    \includegraphics[width=0.19\linewidth]{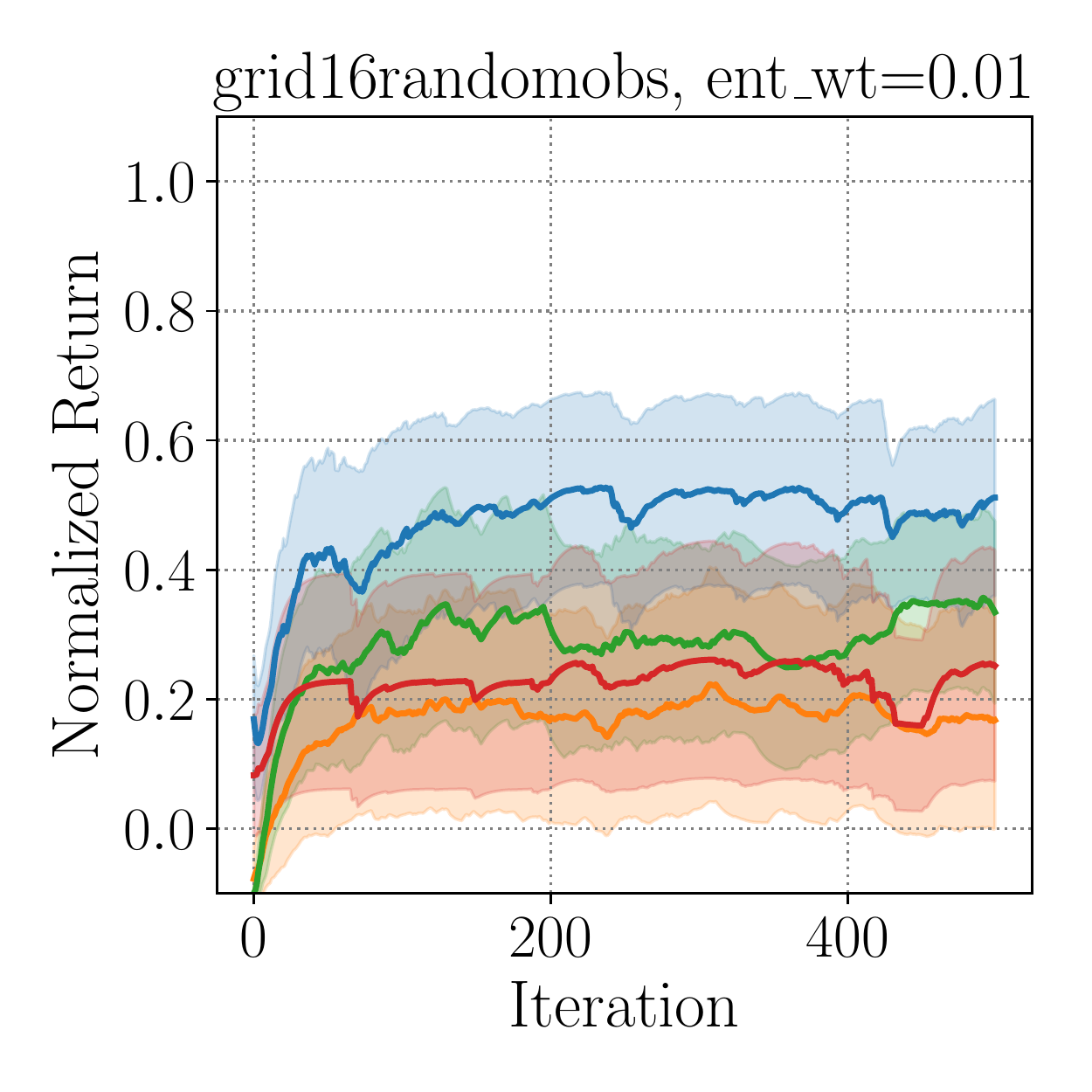}
    \includegraphics[width=0.19\linewidth]{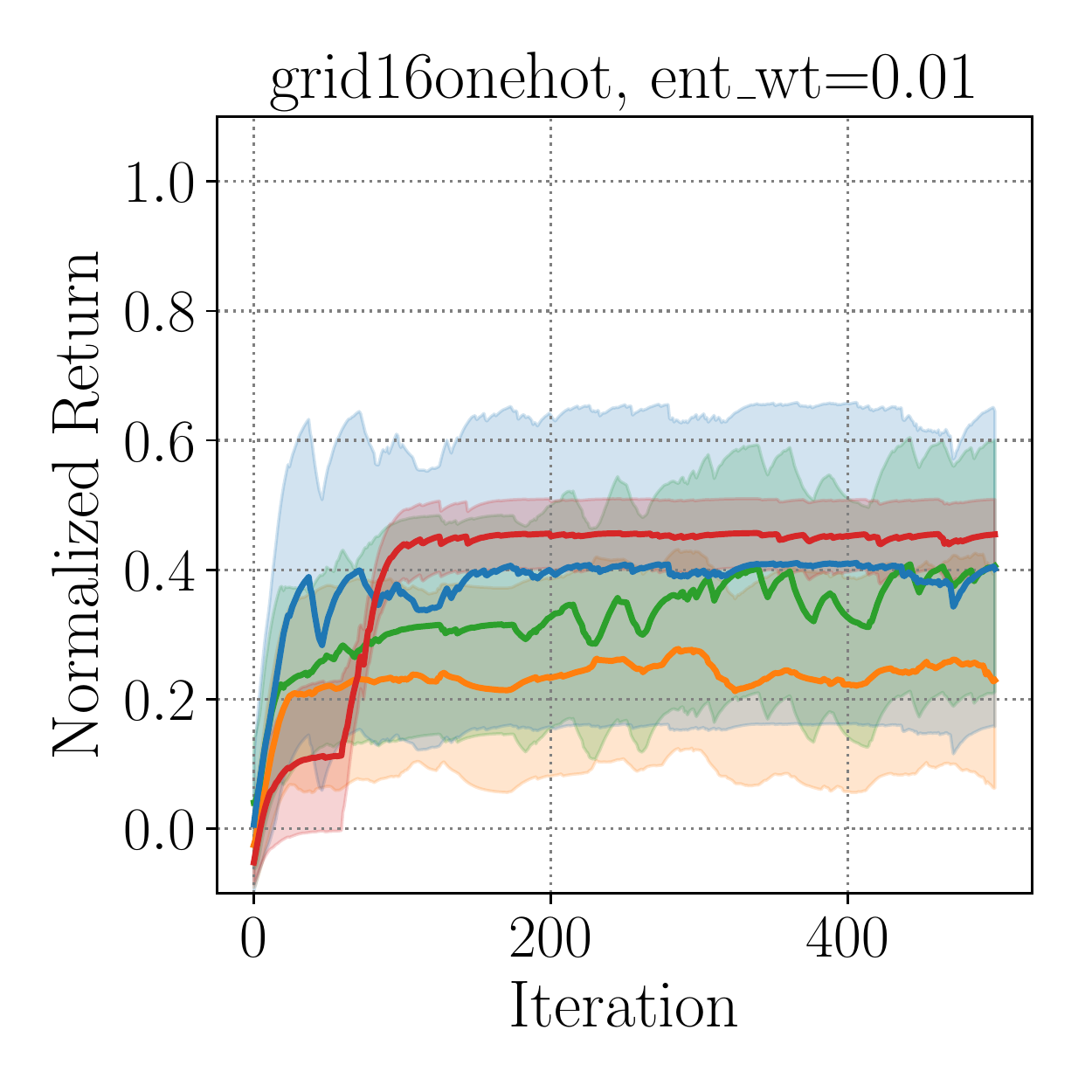}
    \includegraphics[width=0.19\linewidth]{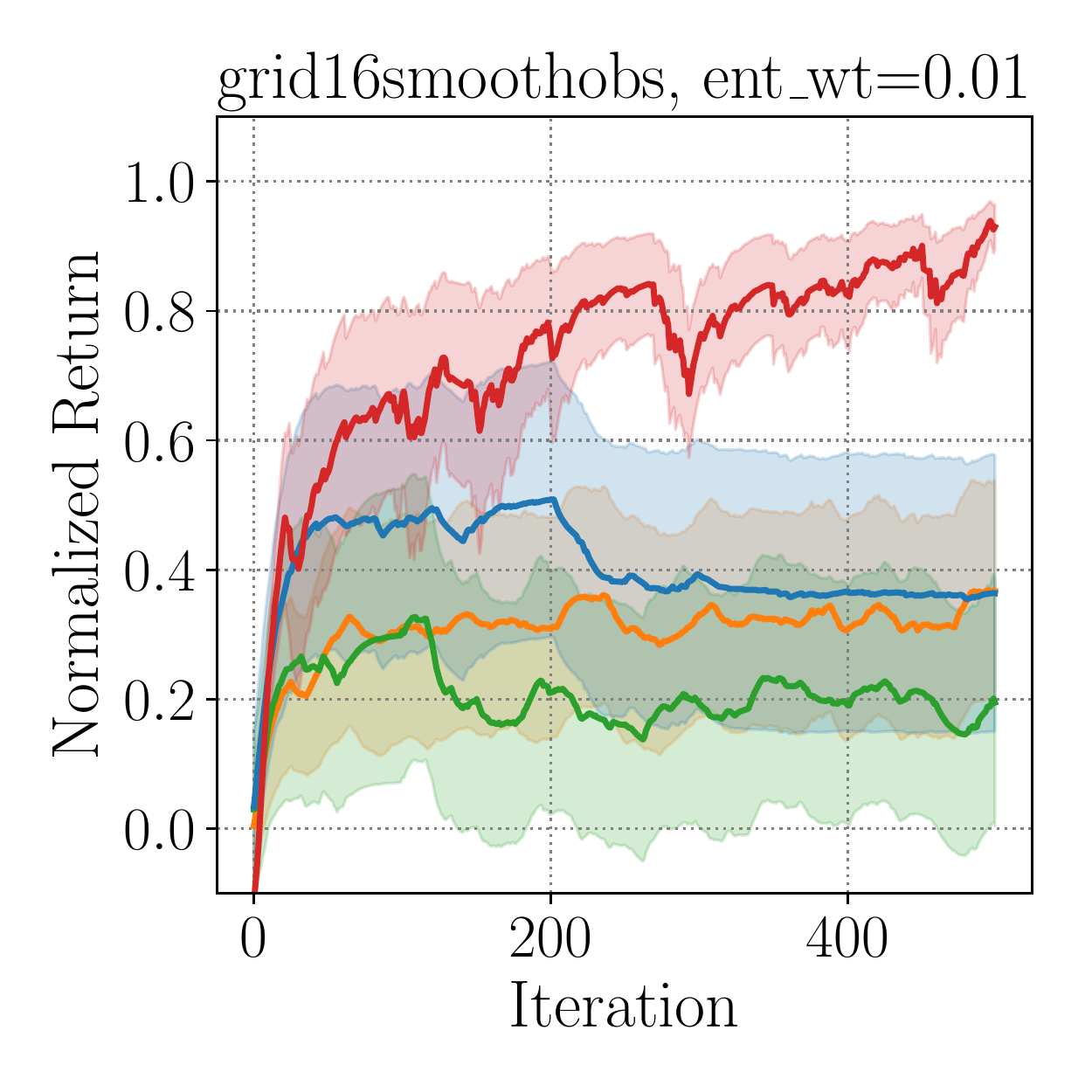}
    \vline
    \includegraphics[width=0.19\linewidth]{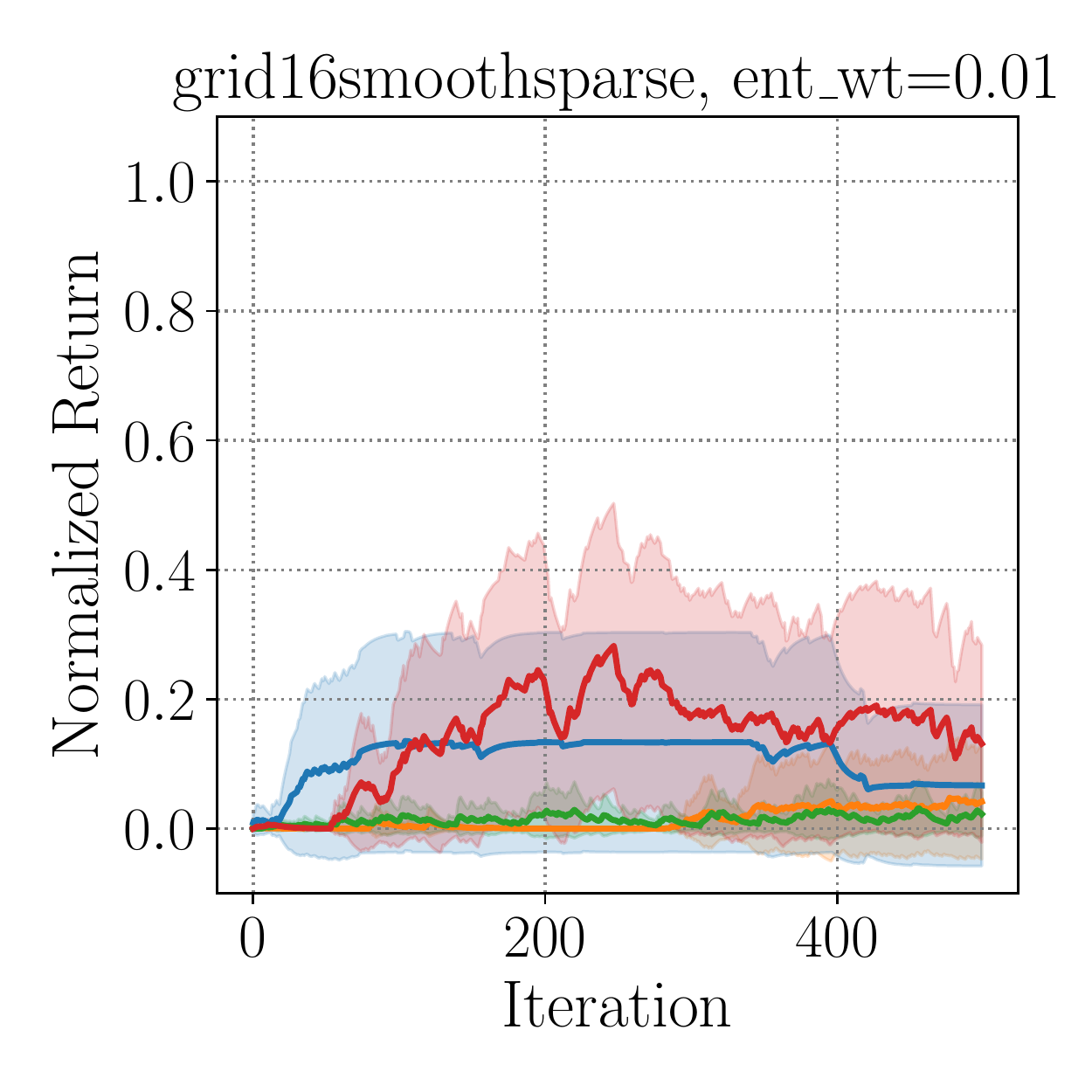}
    \includegraphics[width=0.19\linewidth]{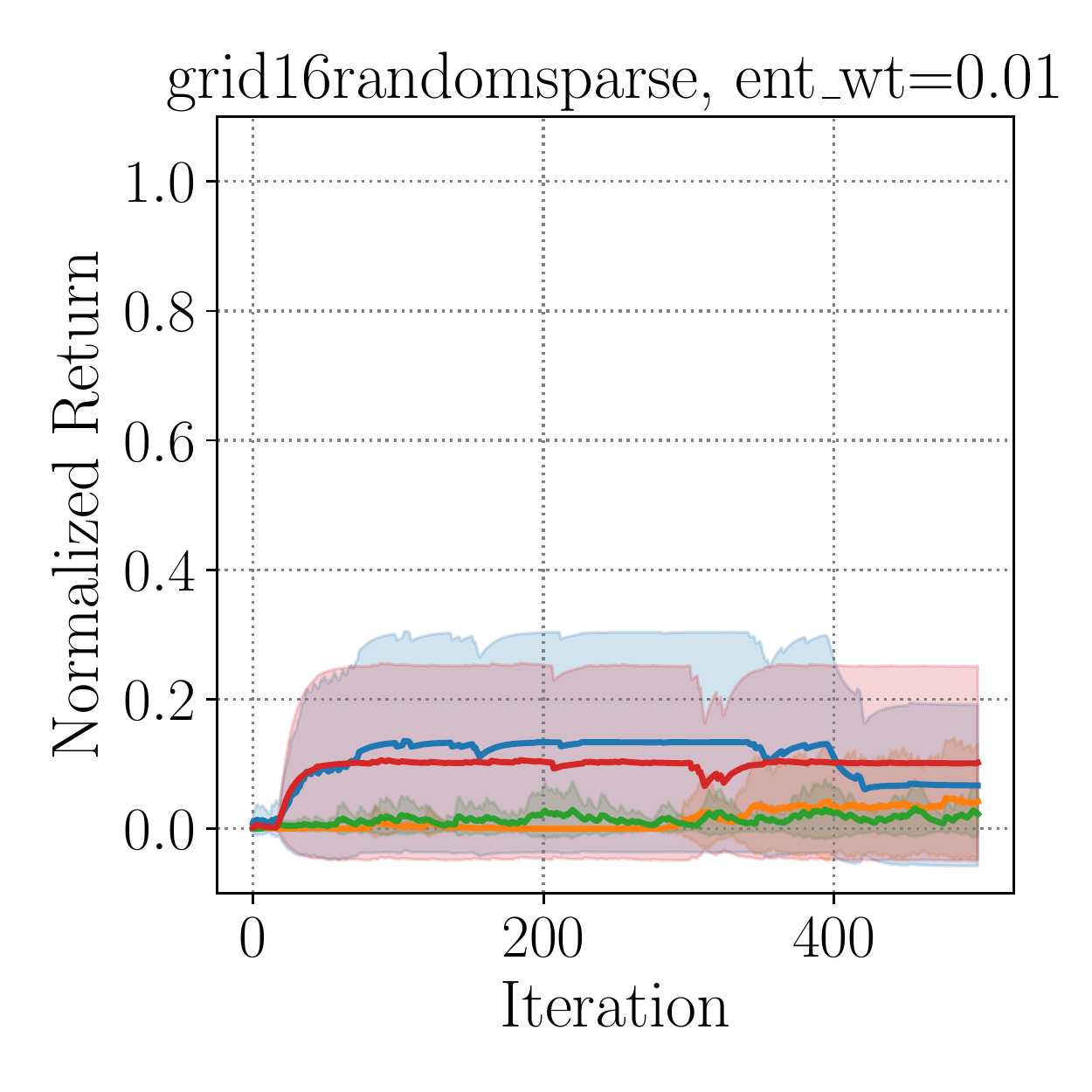}
    \includegraphics[width=0.5\linewidth]{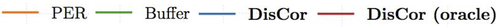}
    \caption{\footnotesize{Learning curves for different algorithms in the \textbf{sampled} setting. Note that DisCor and DisCor (oralce) anre generally the best algorithms in these settings. Replay Buffers (green) help over on-policy (gray) distributions, but may the algorithm may still fail to reach optimal return. Prioritizing for high Bellman error (PER) may fail to learn in sparse-reward tasks as is evident from the curves for sparse reward environments (\textbf{right} of the vertical line).}}
    \label{fig:sampled_fqi_runs}
\end{figure*}

\vspace{-10pt}
\subsection{MetaWorld Tasks}
\label{sec:app_exps_metaworld}
In this section, we first provide a pictorial description of the six hard tasks we tested on from meta-world, where SAC usually does not perform very well. Figure \ref{fig:metaworld_tasks} shows these tasks. We provide the trends for average return achieved during evaluation (not the success rate as shown in Figure \ref{fig:success_rates_metaworld} in Section \ref{sec:experiments}) for each of the six tasks. Note that DisCor clearly outperforms both the baseline SAC and the prior method PER in all six cases, achieving nearly \textbf{50\%} more than the returns achieved by SAC. 

\begin{figure}[ht]
    \centering
    \includegraphics[width=0.6\linewidth]{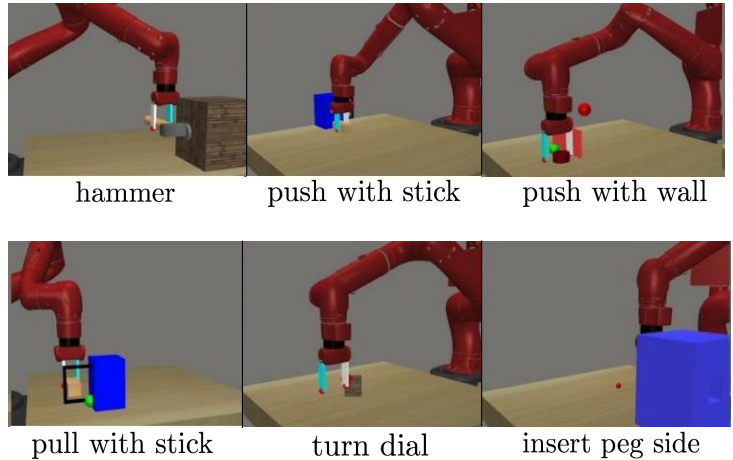}
    \caption{\footnotesize{Visual description of the six MetaWorld tasks used in our experiments in Section~\ref{sec:experiments}. Figures taken from \cite{yu2019meta}.}}
    \label{fig:metaworld_tasks}
\end{figure}

\begin{figure*}
    \centering
    \includegraphics[width=0.25\linewidth]{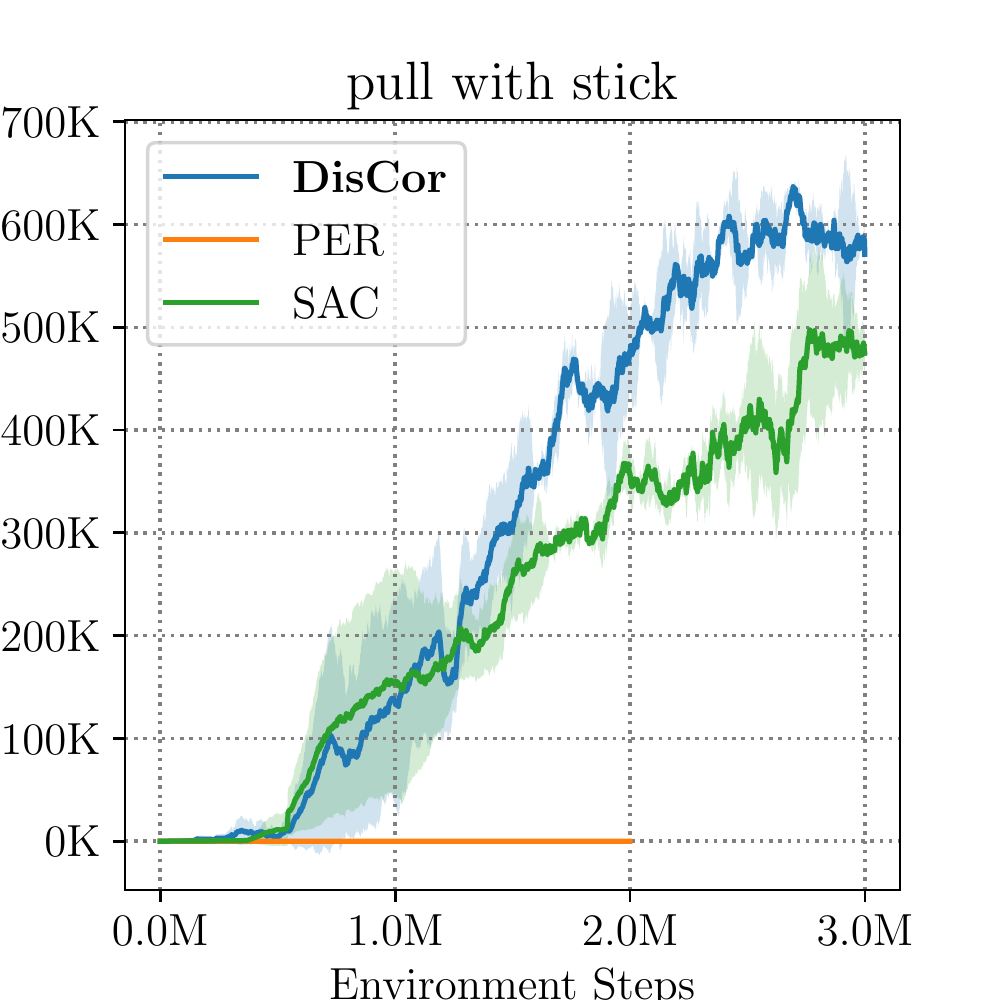}
    \includegraphics[width=0.25\linewidth]{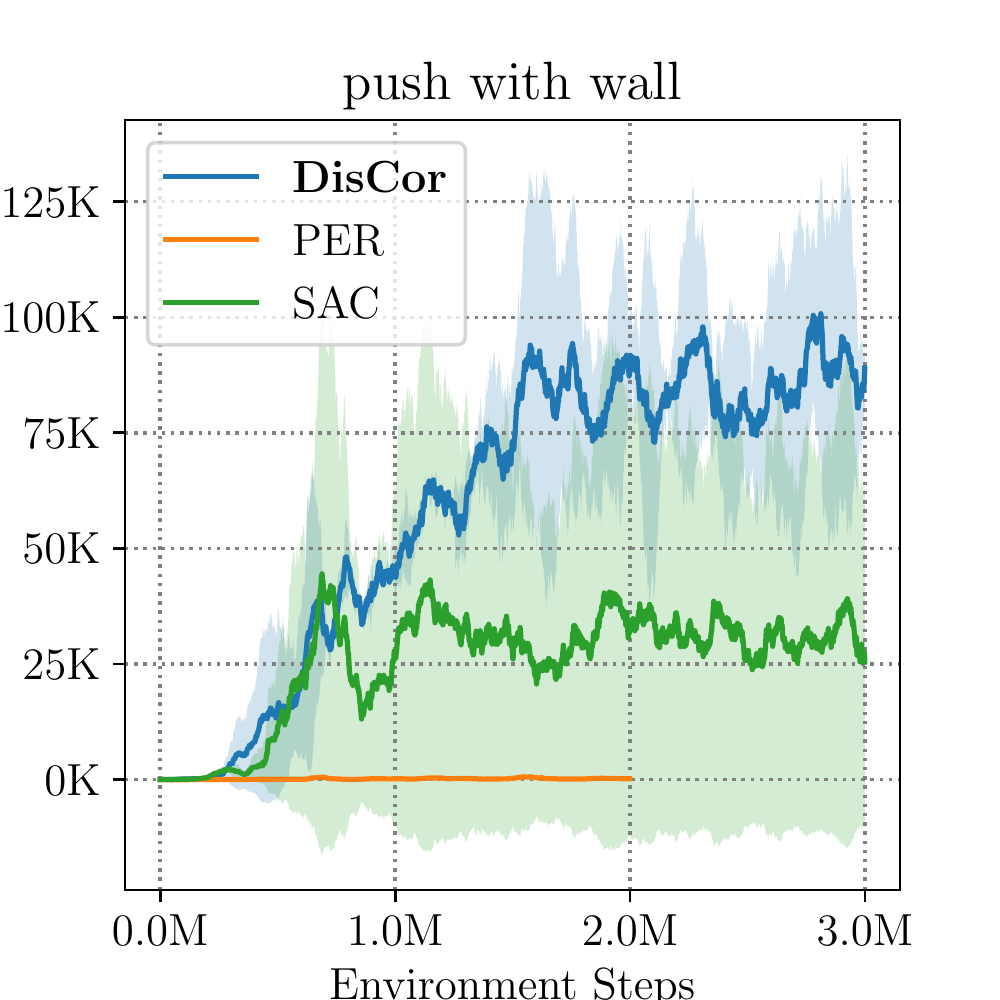}
    \includegraphics[width=0.25\linewidth]{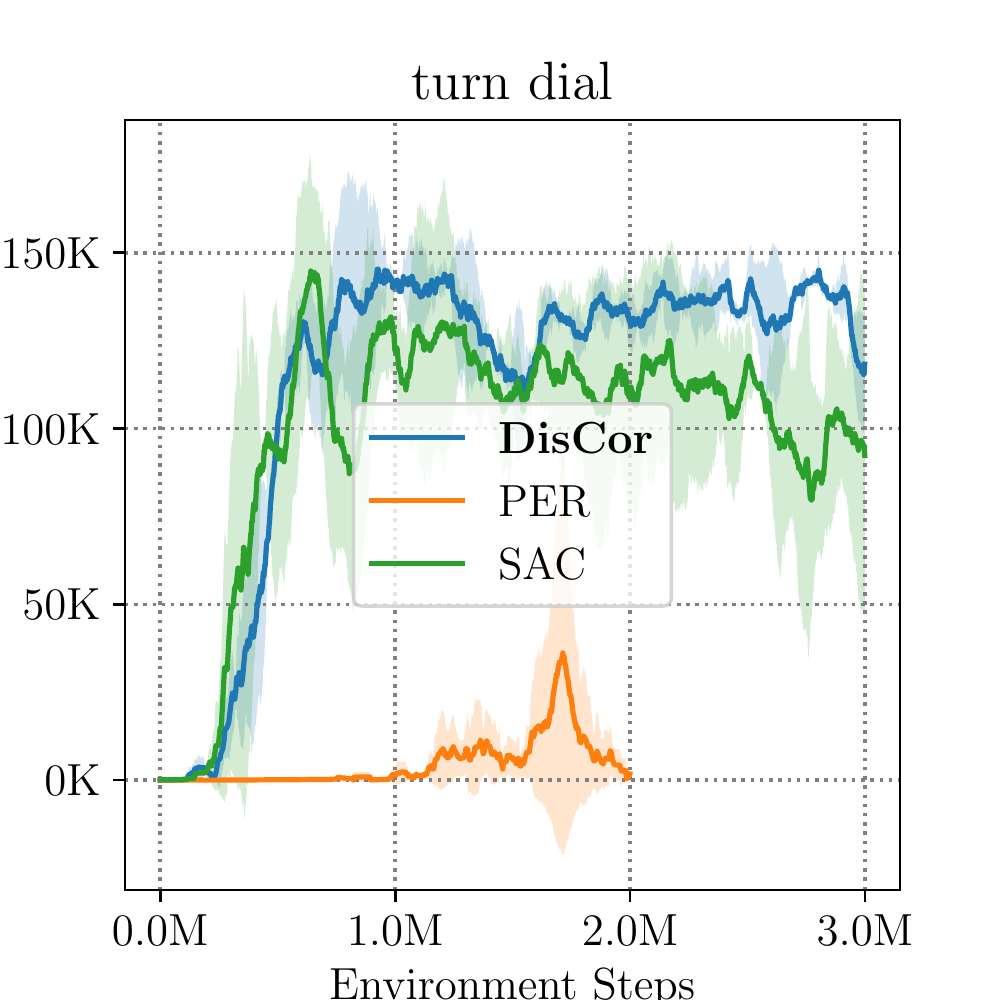} \\
    \includegraphics[width=0.25\linewidth]{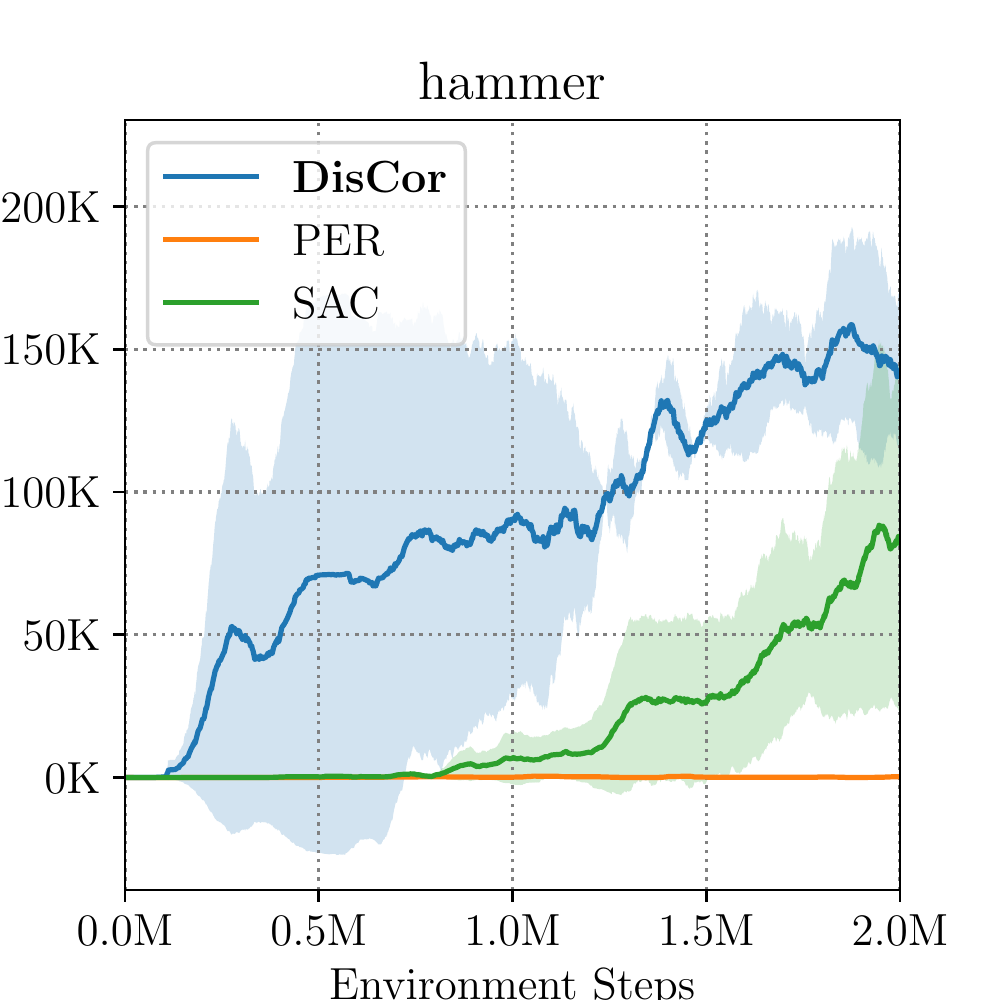}
    \includegraphics[width=0.25\linewidth]{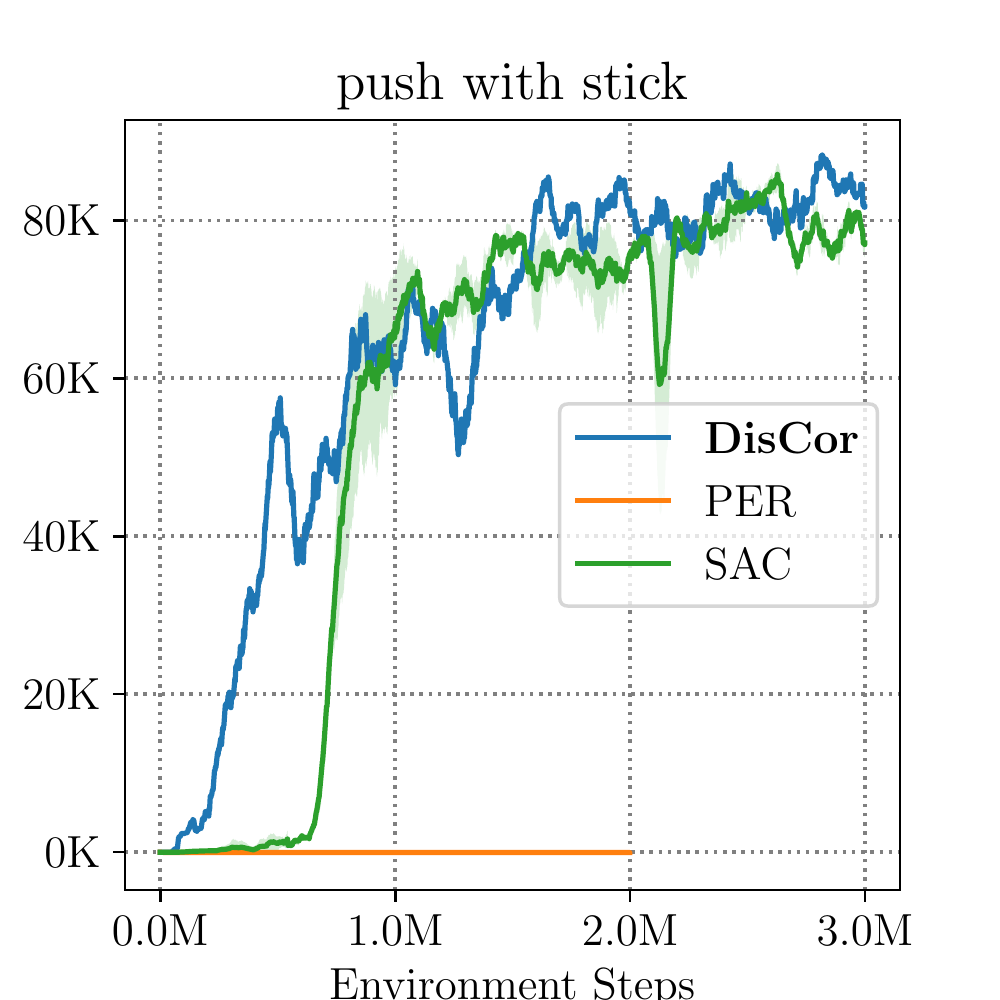}
    \includegraphics[width=0.25\linewidth]{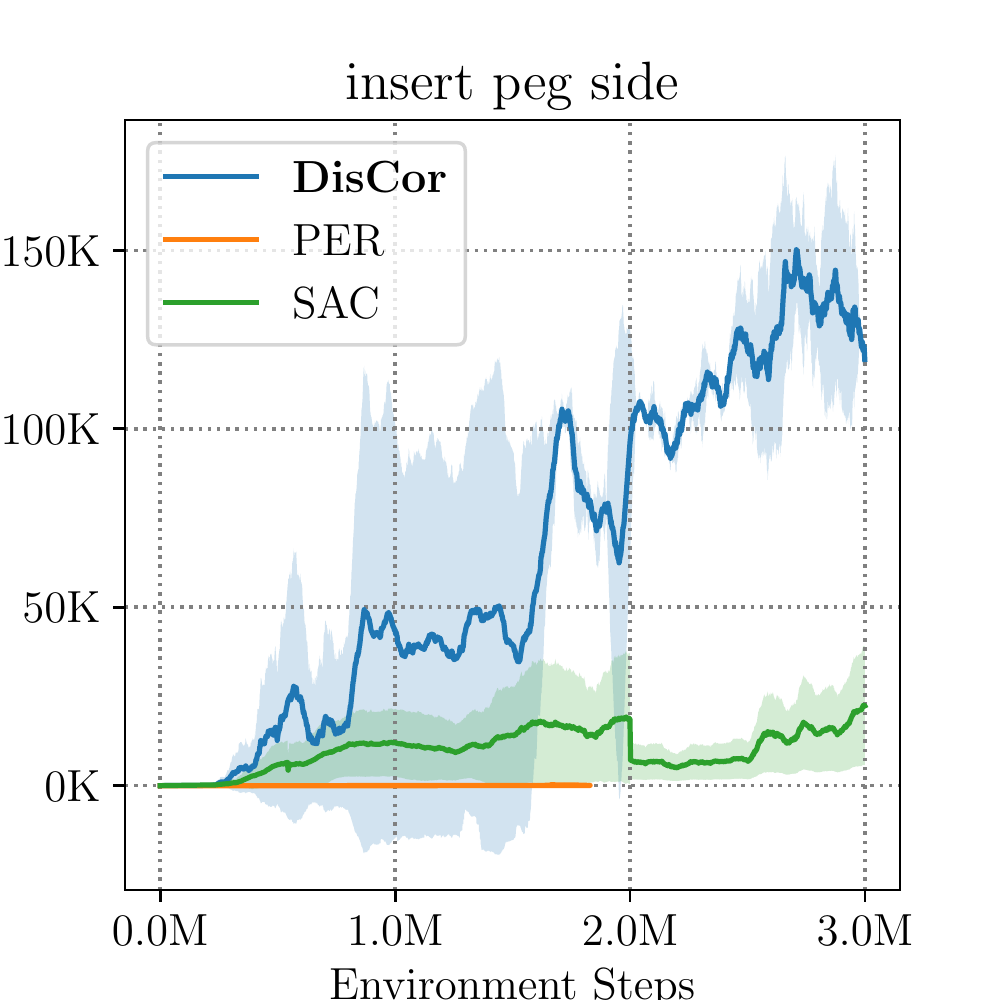}
    \caption{\footnotesize{Evaluation average return achieved by DisCor (blue), SAC (green) and PER (orange) on six Metaworld benchmarks. From left to right: pull stick, push with wall, push with stick, turn dial, hammer and insert peg side tasks. Note that DisCor clearly achieves better returns or learns faster in most of the tasks.}}
    \label{fig:app_returns_metaworld}
\end{figure*}

\subsection{OpenAI Gym Benchmarks}
\label{sec:app_mujoco_benchmarks}
Here we present an evaluation on the standard OpenAI continuous control gym benchmark environments. Modern ADP algorithms such as SAC can already solve these tasks easily, without any issues, since these algorithms have been tuned on these tasks. A comparison of the three algorithms DisCor, SAC and PER, on three of these benchmark tasks is shown in Figure~\ref{fig:app_mujoco_results}. We note that in this case, all the algorithms are roughly comparable to each other. For instance, DisCor performs better than SAC and PER on Walker2d, however, is outperformed by SAC on Ant. 

\paragraph{Stochastic reward signals.} That said, we also performed an experiment to verify the impact of stochasticity, such as noise in the reward signal, on the DisCor algorithm as compared to other baseline algorithms like SAC and PER. Analogous the diagnostic tabular experiments on low signal-to-noise ratio environments, such as those with sparse reward, we would expect a baseline ADP method to be impacted more due to an absence of corrective feedback in tasks with stochastic reward noise, since a noisy reward effectively reduces the signal-to-noise ratio. We would also expect a method that ensures corrective feedback to perform better. 

In order to test this hypothesis, we created stochastic reward tasks out of the OpenAI gym benchmarks. We modified the reward function $r(s, a)$ in these gym tasks to be equal to:
\begin{equation}
    r'(s, a) = r(s, a) + z, ~~~ z \sim \mathcal{N}(0, 1)
\end{equation}
and the agent is only provided these noisy rewards during training. However, we only report the deterministic ground-truth reward during evaluation. 
We present the results in Figure~\ref{fig:sac_noisy_results}. Observe that in this scenario, DisCor emerges as the best performing algorithm on these tasks, and outperforms other baselines SAC and PER both in terms of asymptotic performance (example, HalfCheetah) and sample efficiency (example, Ant).

\begin{figure}[H]
    \centering
    \includegraphics[width=0.25\linewidth]{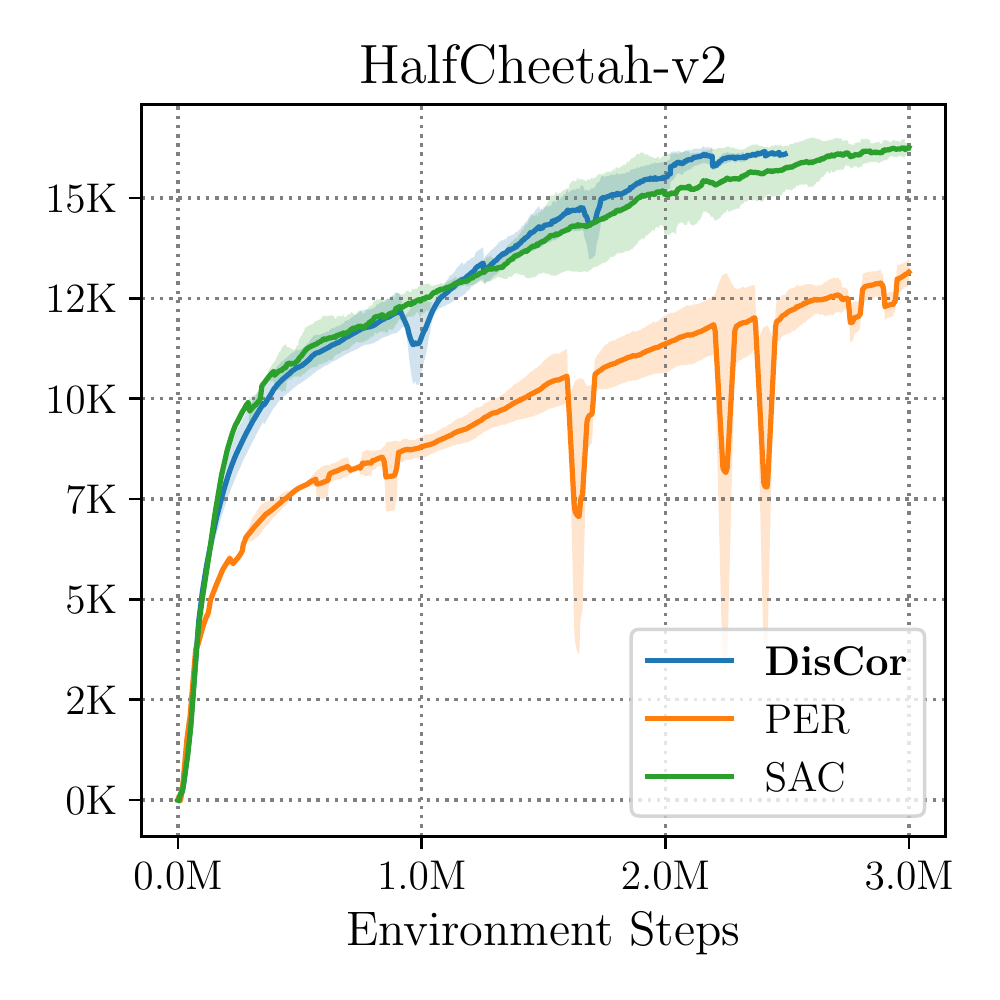}
    \includegraphics[width=0.25\linewidth]{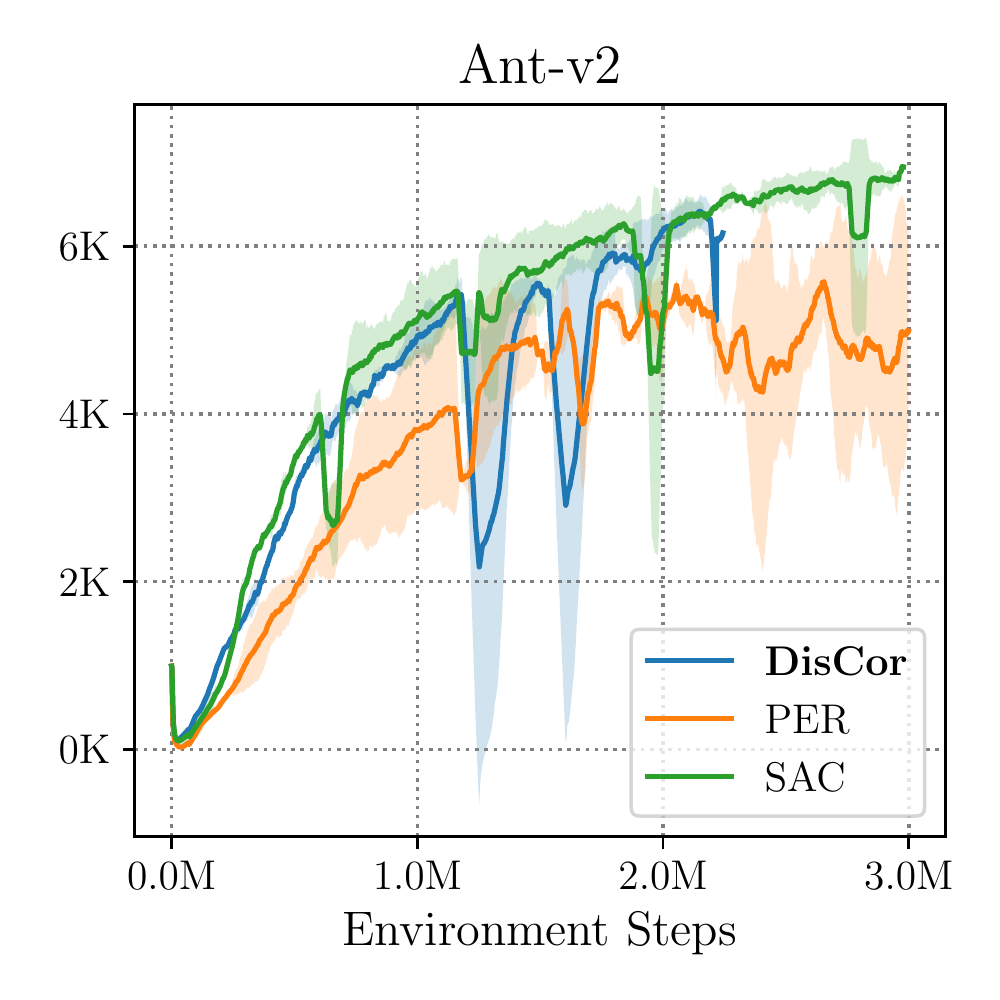}
    \includegraphics[width=0.25\linewidth]{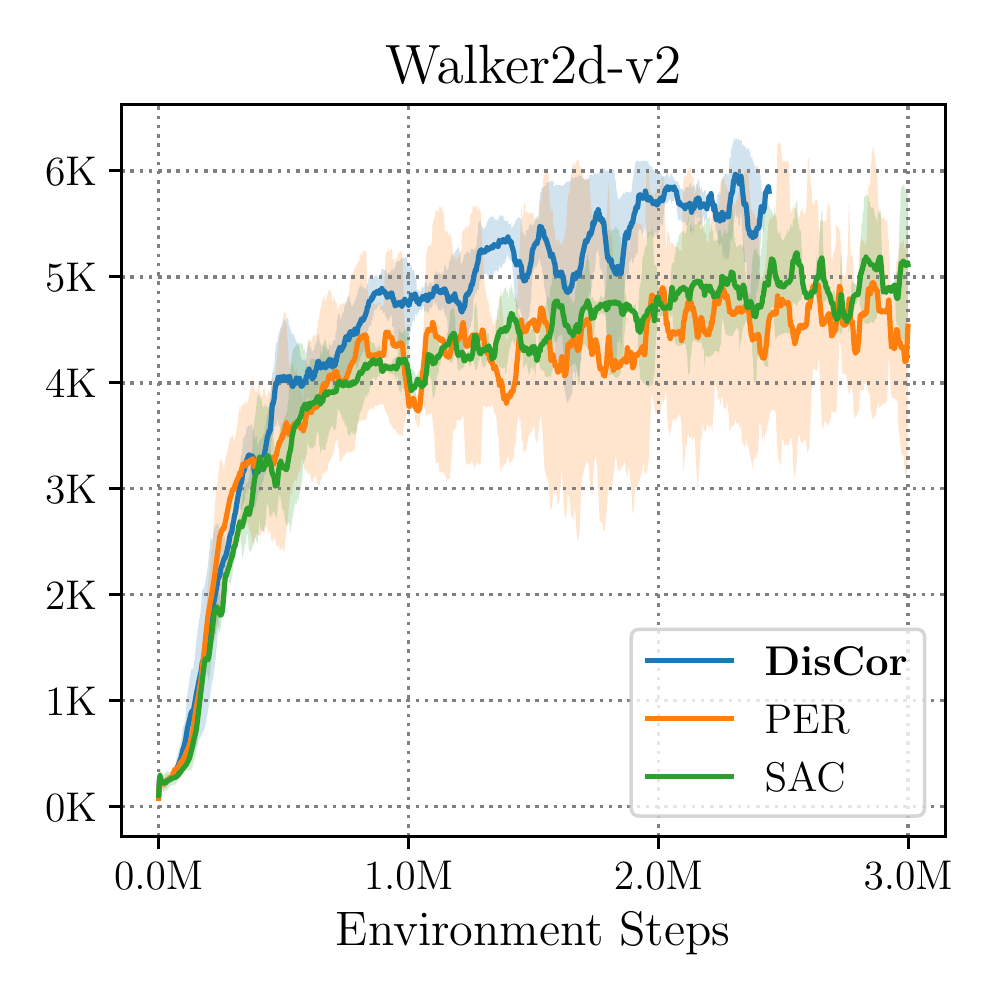}
    \caption{\footnotesize{Peformance of DisCor, SAC and PER on gym benchmarks. On an average, all methods perform roughly similarly on these settings.}}
    \label{fig:app_mujoco_results}
\end{figure}

\begin{figure}[H]
    \centering
        \includegraphics[width=0.25\linewidth]{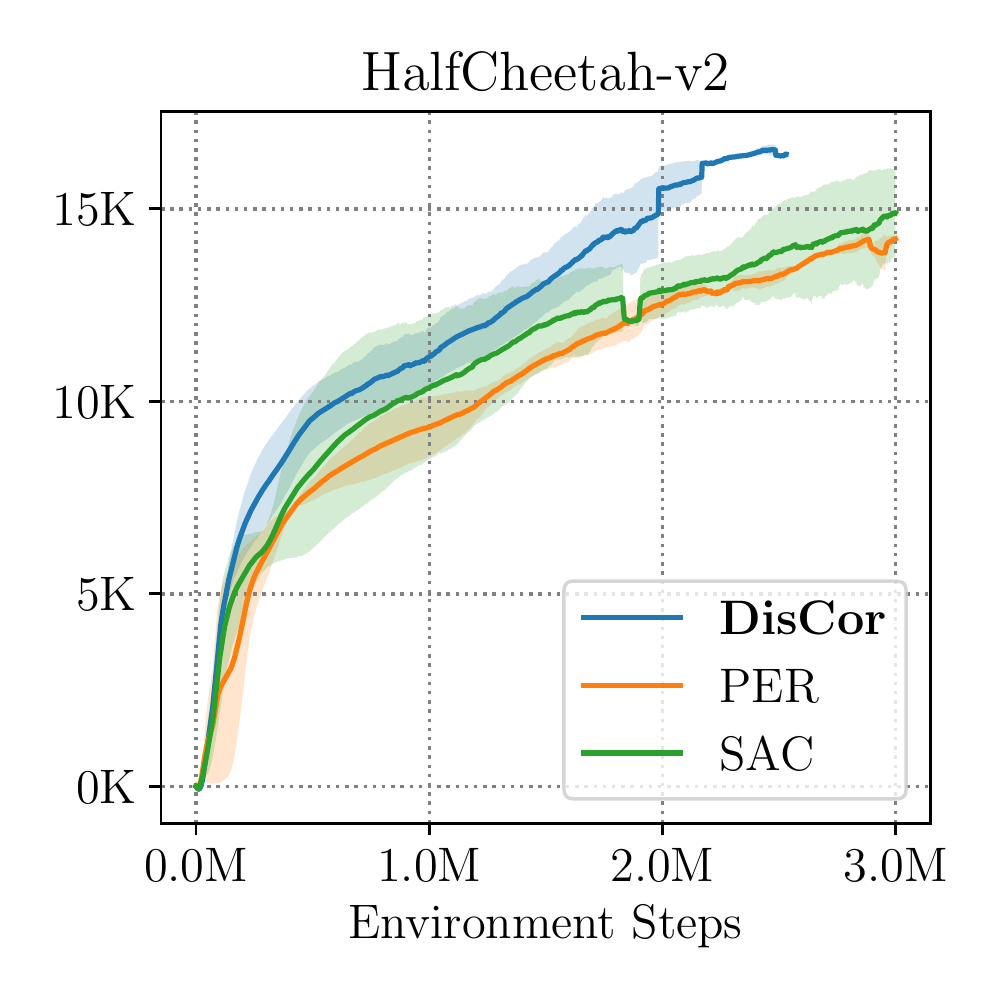}
        \includegraphics[width=0.25\linewidth]{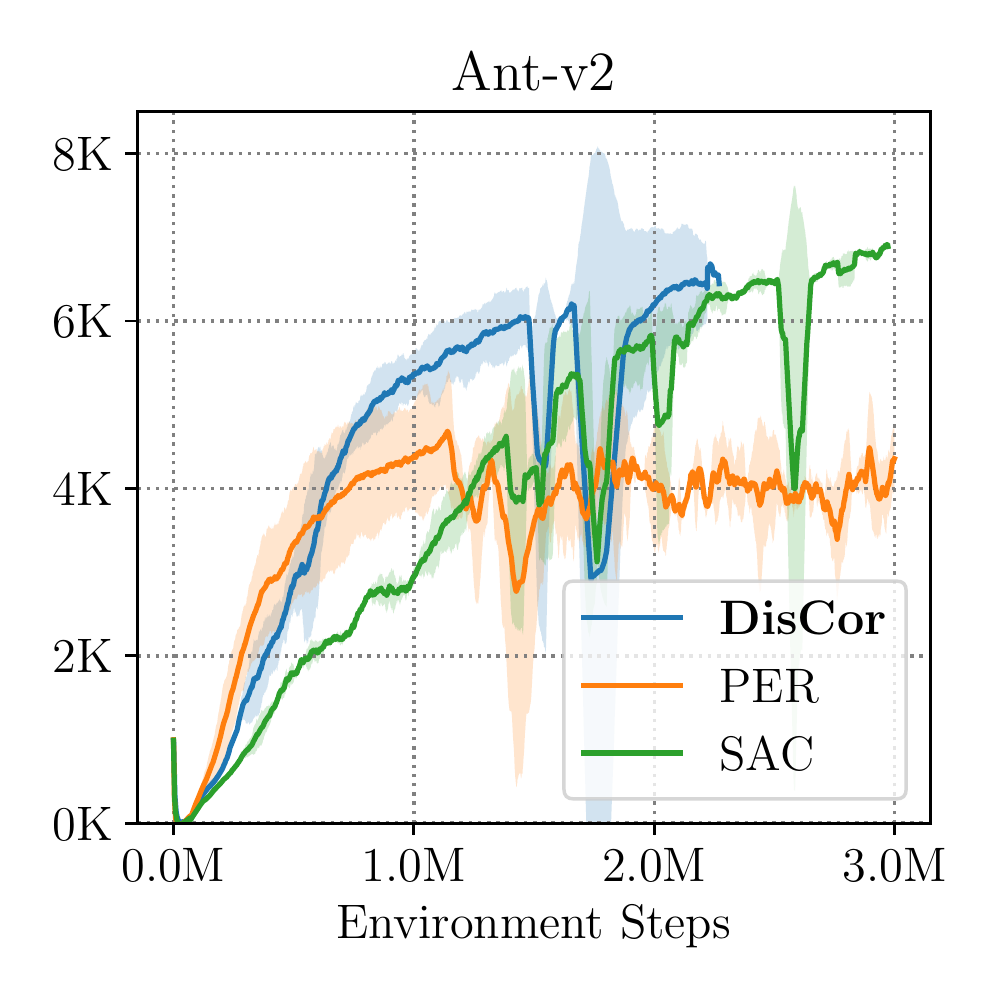}
        \includegraphics[width=0.25\linewidth]{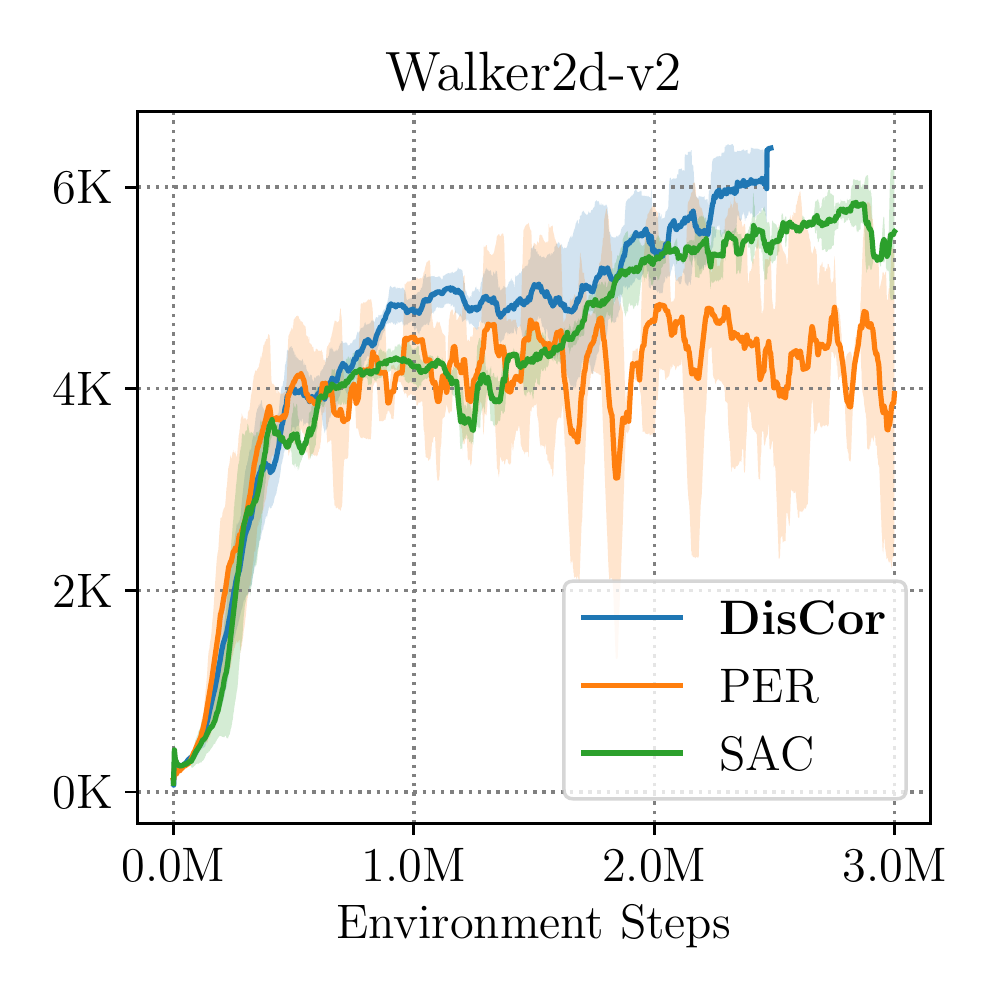}
    \caption{\footnotesize{Performance of DisCor, SAC and PER on continuous control gym benchmarks \textit{with stochastic reward noise}. Note that DisCor learns slightly faster and performs better than SAC and PER on these stochastic problems.}}
    \label{fig:sac_noisy_results}
\end{figure}

\subsection{MT10 Multi-Task Experiments}
\label{sec:app_multi_task}
In this section, we present the trend of returns, as a learning curve and as a comparative histogram (at 1M environment steps of training) for the multi-task MT10 benchmark, extending the results shown in Section~\ref{sec:multi-task}, Figure~\ref{fig:mt_results}. These plots are shown in Figure~\ref{fig:app_mt10}. Observe that DisCor achieves more than \textbf{30\%} of the return of SAC, and obtains an individually higher value of return on more tasks. 

\begin{figure}[H]
    \centering
    \begin{subfigure}[t]{.44\linewidth}
        \centering
        \includegraphics[width=0.65\linewidth]{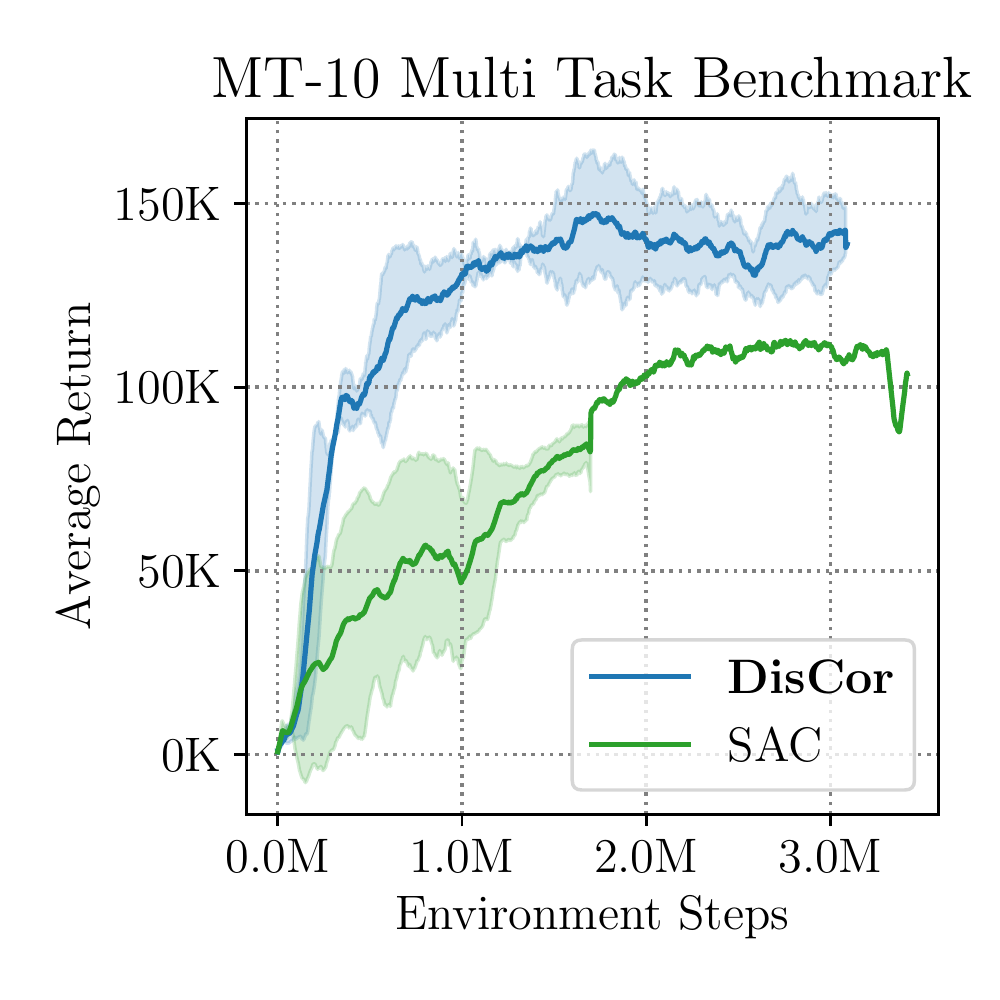}
        \caption{\footnotesize{Average task return}}
    \end{subfigure}
    ~
    \begin{subfigure}[t]{.47\linewidth}
        \centering
        \includegraphics[width=0.7\linewidth]{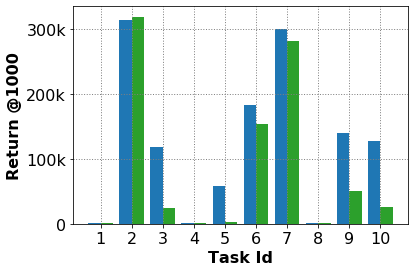}
        \caption{\footnotesize{Per-task return at 1M steps}} 
    \end{subfigure}
    \caption{\footnotesize{Performance of DisCor and SAC on the MT10 benchmark. Returns for DisCor are higher than SAC by around \textbf{30\%}; (2) DisCor achieves a non-trivial return on \textbf{7/10} tasks after 1000k steps, as compared to \textbf{3/10} for unweighted SAC, similar to the trend at 500k steps shown in Figure~\ref{fig:mt_results}.}}
    \label{fig:app_mt10}
\end{figure}

\subsection{MT50 Multi-Task Experiments}
\label{sec:app_multi_task_ablation}
We further evaluated the performance of DisCor on the multi-task MT50 benchmark~\citep{yu2019meta}. This is an extremely challenging benchmark where the task is to learn a single policy that can solve 50 tasks together, with the same evaluation protocol as previously used in the MT10 experiments (Section~\ref{sec:multi-task} and Appendix~\ref{sec:app_multi_task}). We present the results (average task return and average success rate) in Figures~\ref{fig:app_mt50}. Note that while SAC tends to saturate/plateau in between 4M - 8M steps, accounting for corrective feedback via the DisCor algorithm makes the algorithm continue learning in that scenario too.    

\begin{figure}[H]
    \centering
    \begin{subfigure}[t]{.47\linewidth}
        \centering
        \includegraphics[width=0.65\linewidth]{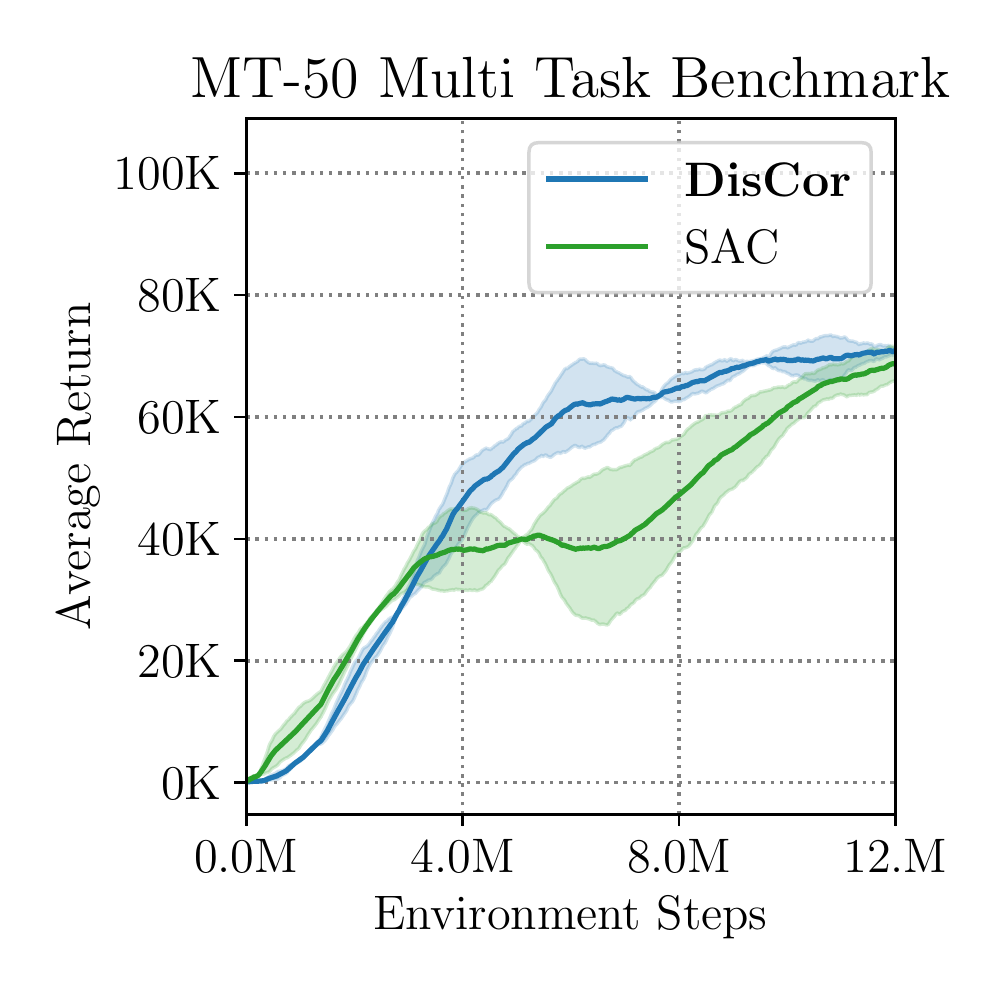}
        \caption{\footnotesize{Average task return}}
    \end{subfigure}
    ~
    \begin{subfigure}[t]{.47\linewidth}
        \centering
        \includegraphics[width=0.65\linewidth]{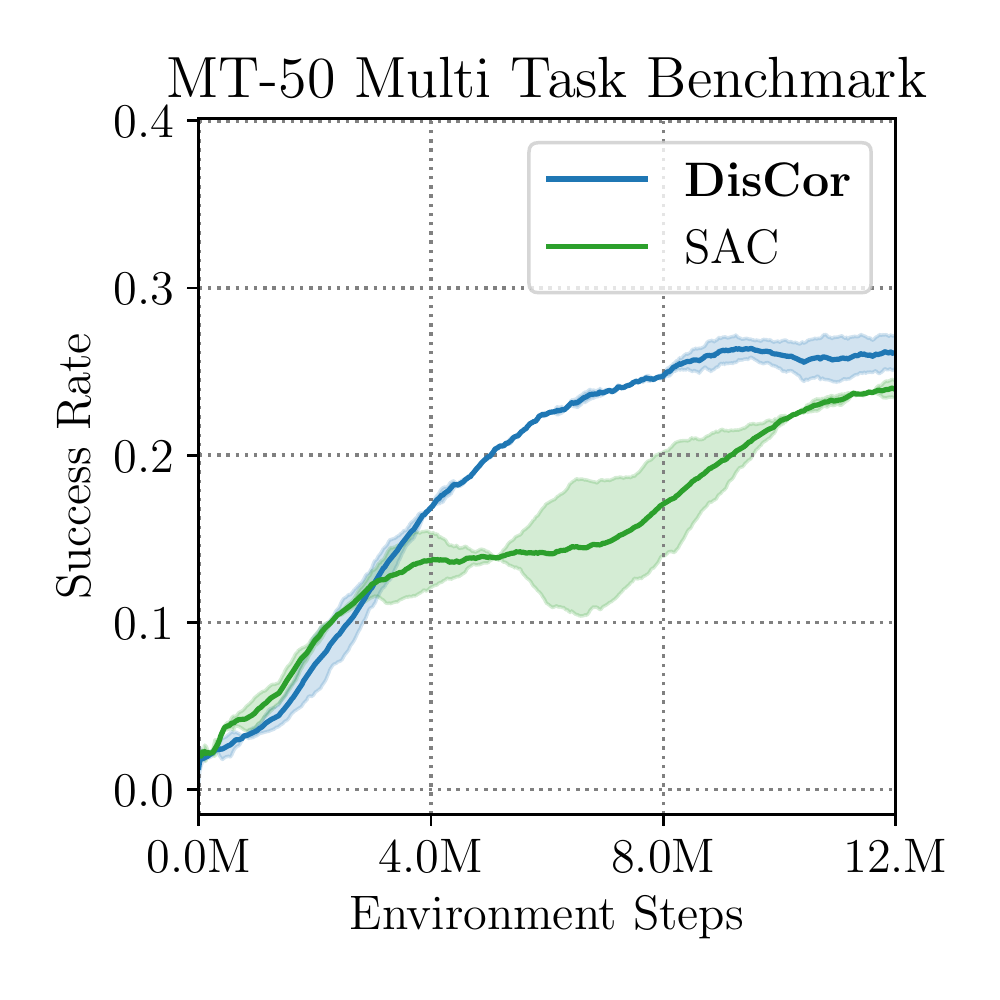}
        \caption{\footnotesize{Average success rate}} 
    \end{subfigure}
    \caption{\footnotesize{Performance of DisCor and SAC on the MT50 benchmark. Note that, DisCor clearly keeps learning unlike SAC which tends to plateau for about 3M steps in the middle (the stretch between 4M and 7M steps on the x-axis, where SAC exhibits a small gradient in the learning progress, whereas DisCor continuously keeps learning).}}
    \label{fig:app_mt50}
\end{figure}
\subsection{Comparison with AFM}
\label{sec:afm_comparison}
In this section, we present a comparison of DisCor and AFM~\citep{fu19diagnosing}, a prior method similar to prioritized experience replay on the MuJoCo gym benchmarks. We find that DisCor clearly outperforms AFM in these scenarios. We present these results in Figure~\ref{fig:app_discor_vs_afm} where the top row presents results in the case of regular gym benchmarks, and the bottom row presents results in the case of gym benchmarks with stochastic reward noise.

\subsection{DQN with multi-step returns}
\label{sec:dqn_multi_step}
N-step returns with DQN are hypothesized to stabilize learning since updates to the Q-function now depends on reward values spanning multiple steps, and the coefficient of the bootstrapped Q-value is $\gamma^T$, which is exponentially smaller than $\gamma$ used conventionally in Bellman backups, implying that the error accumulation process due to incorrect targets is reduced. Thus, we perform a comparison of DisCor and DQN with n-step backups, where $n$ was chosen to be $3$, $n=3$, in accordance with commonly used multi-step return settings for Atari games. We present the average return obtained by DisCor and DQN (+n-step), with sticky actions, in Table~\ref{table:dqn_discor_nstep}. We clearly observe that DisCor outperforms DQN with 3-step returns in all three games evaluated on. We also observe that n-step returns applied with DisCor also outperform n-step returns applied with DQN, indicating the benefits of using DisCor even when other techniques, such as n-step returns are used.

\begin{table}[t]
\centering
\label{table:atari_multistep}
\begin{tabular}{c r r r}
\hline
\textbf{Game} & \textbf{n-step DQN} & \textbf{DisCor} & \textbf{n-step DisCor}\\
 & $(n=3)$ & (Regular) & $(n=3)$ \\
\hline
\hline
\textbf{Pong} & 17 & {17} & \colorbox{blue!30}{\textbf{19}} \\
\textbf{Breakout} & 37 & \textbf{175} & \colorbox{blue!30}{47} \\
\hline
\end{tabular}
\caption{\footnotesize{Average Performance of DQN + 3-step returns, DisCor and Discor + 3-step returns on Pong and Breakout at 60M steps into training, rounded off to the nearest integer. Note that DisCor clearly outperforms DQN with multi-step returns. We also find that adding n-step returns to DisCor can hurt, for instance, on Breakout, where the same hurts with DQN as well (for comparison, see Figure~\ref{fig:atari_results} in the main paper), however, we still observe that DisCor, when applied with multi-step returns performs better than DQN with multi-step returns as well, indicating the benefits of DisCor even when methods such as multi-step returns are used.}} 
\label{table:dqn_discor_nstep}
\end{table}

\begin{figure}[H]
    \centering
        \includegraphics[width=0.25\linewidth]{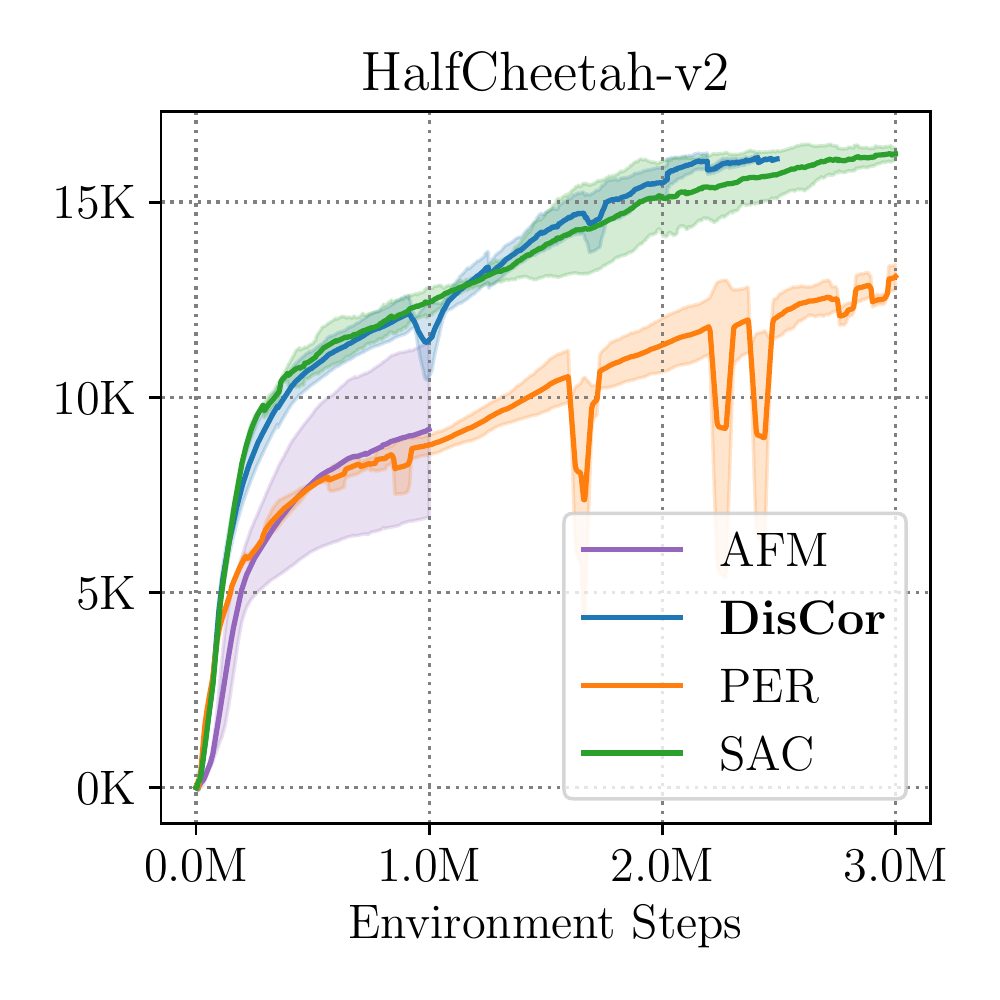}
        \includegraphics[width=0.25\linewidth]{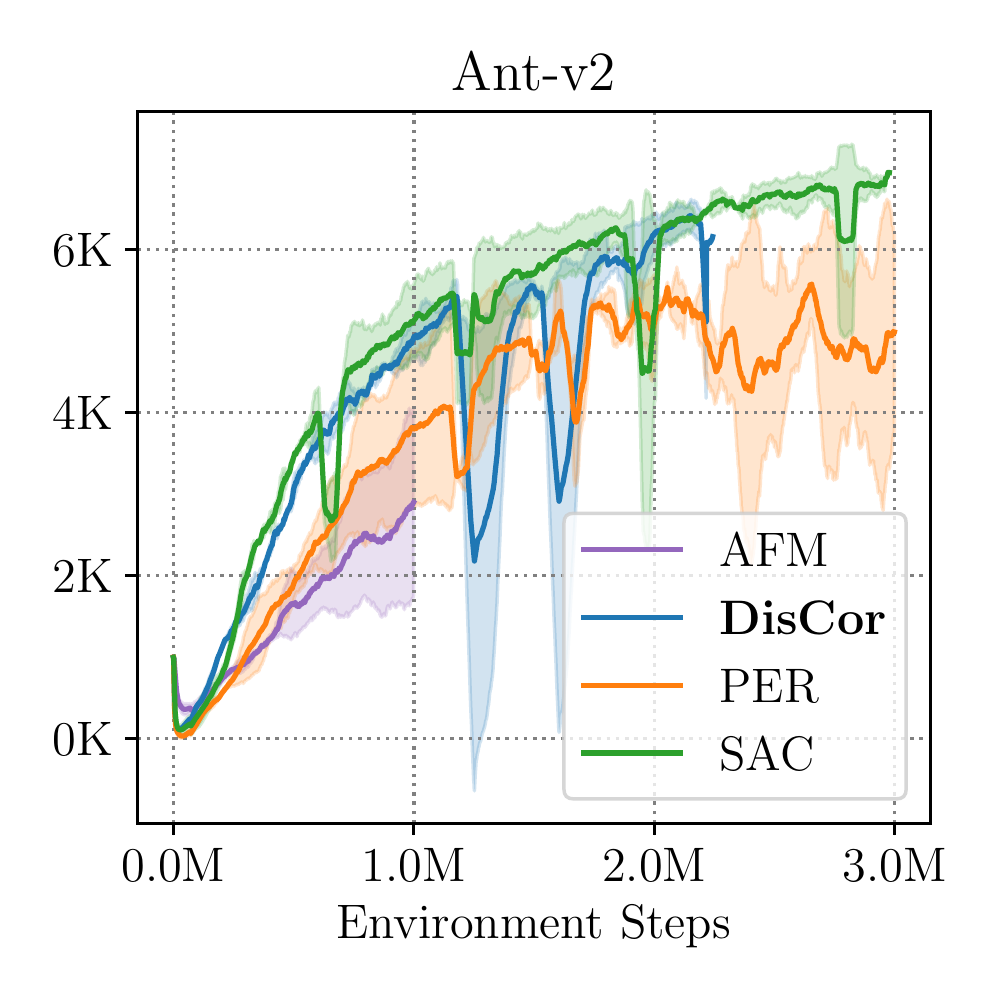}
        \includegraphics[width=0.25\linewidth]{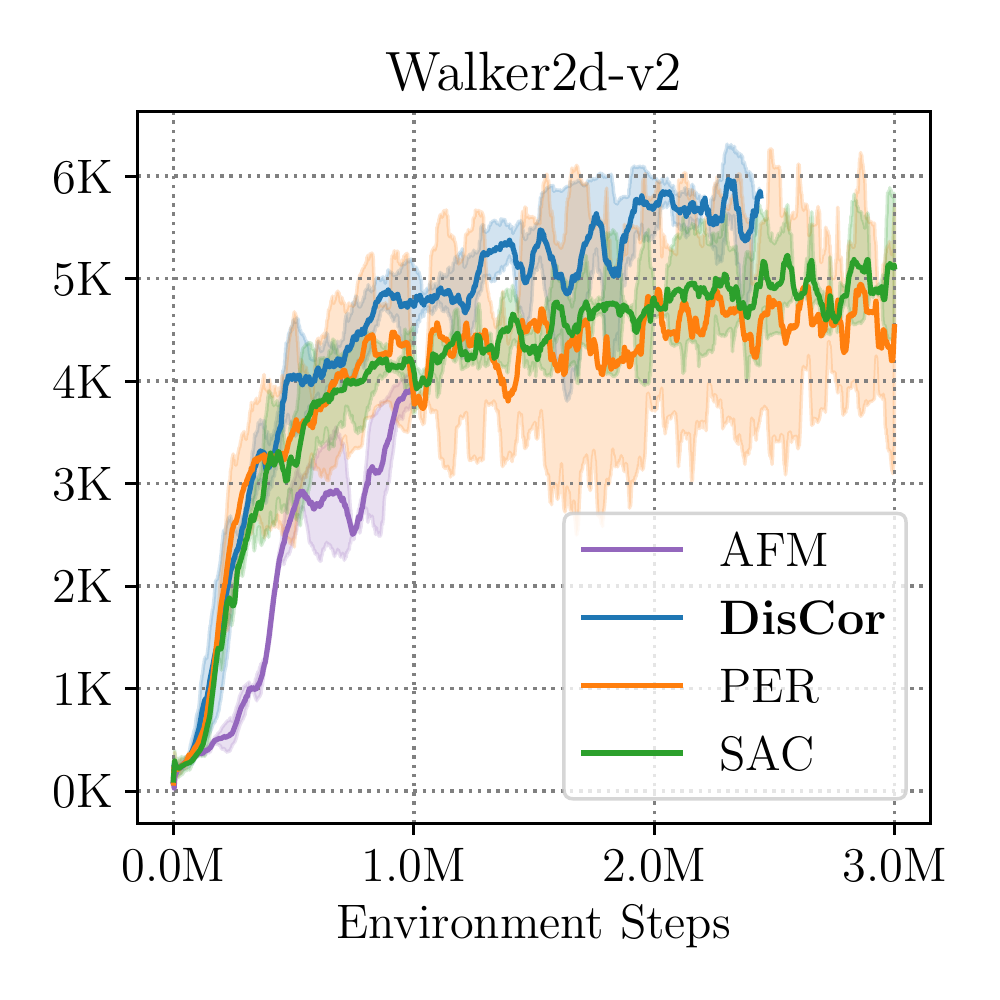}\\
        \vspace{4pt}
        \includegraphics[width=0.25\linewidth]{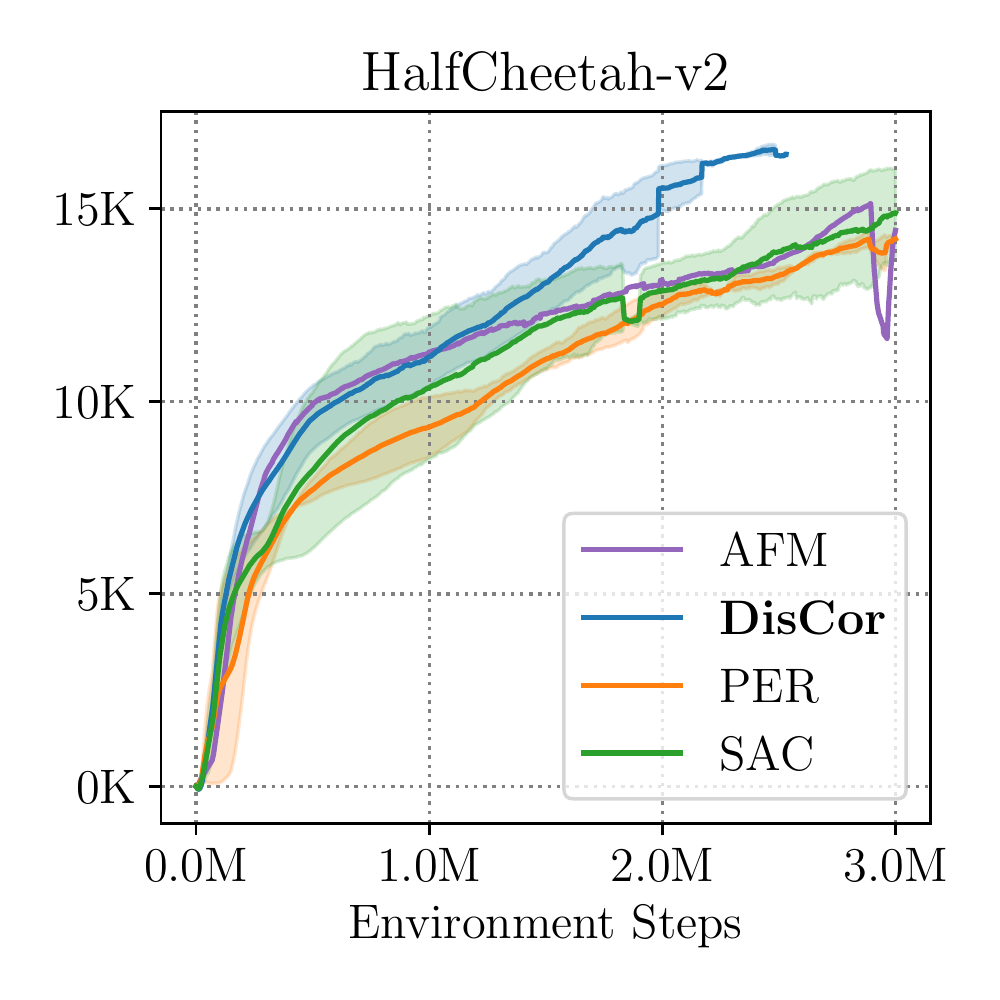}
        \includegraphics[width=0.25\linewidth]{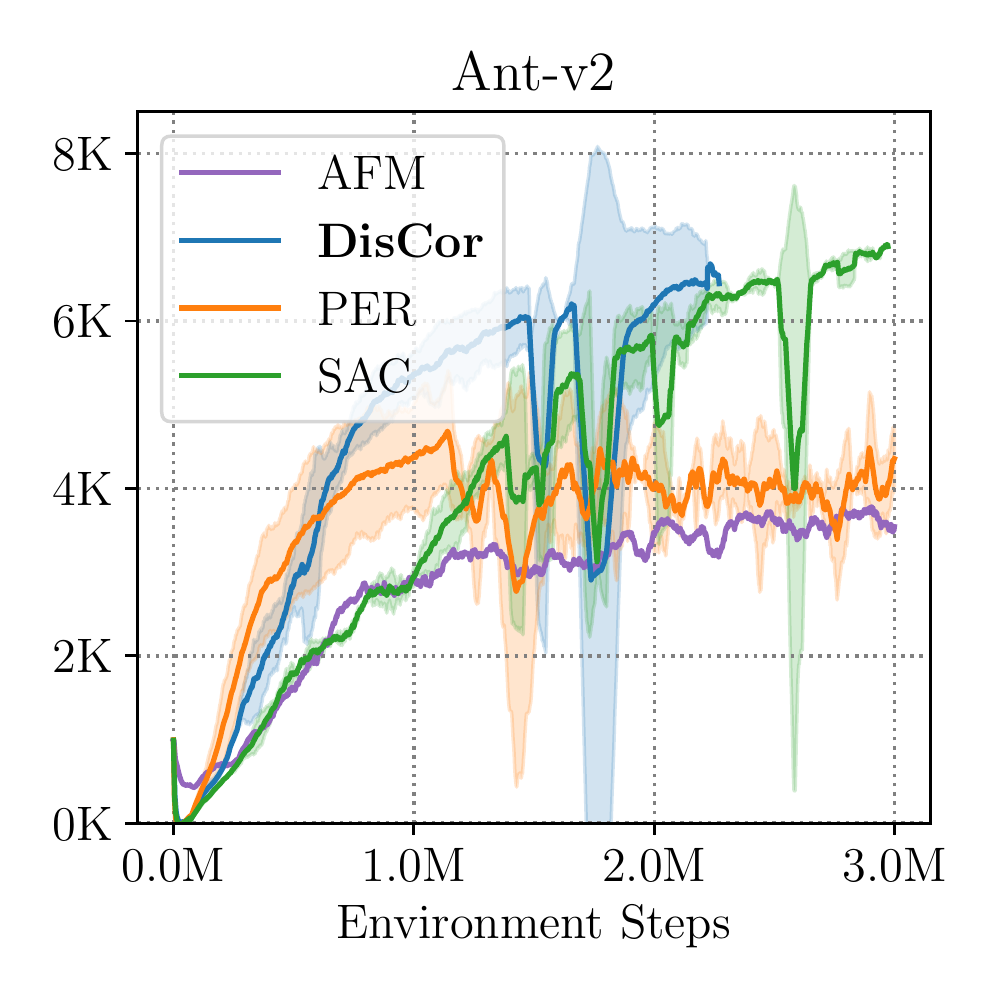}
        \includegraphics[width=0.25\linewidth]{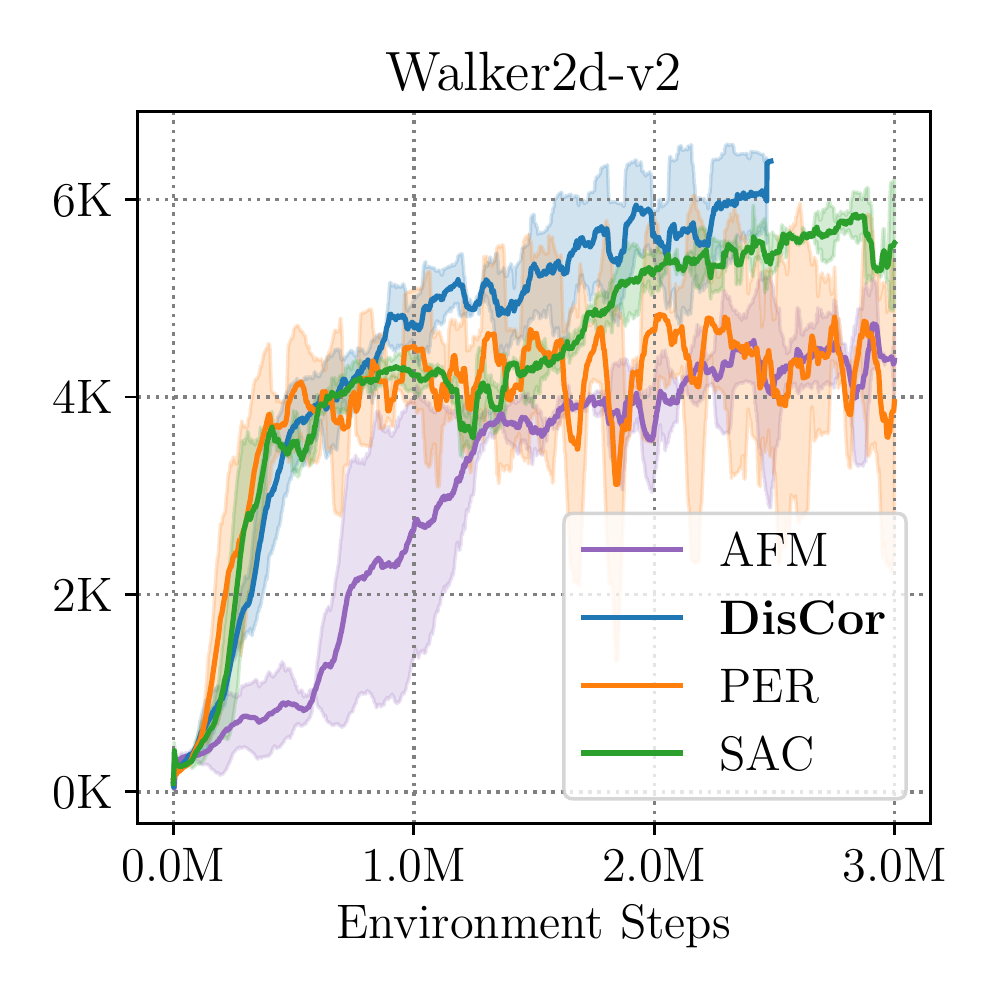}
    \caption{\footnotesize{Performance of DisCor, SAC, PER and AFM on continuous control gym benchmarks (top row) and gym benchmarks \textit{with stochastic reward noise} (bottom row). Note that DisCor clearly out-performs AFM in both scenarios, on all three benchmarks tested on.}}
    \label{fig:app_discor_vs_afm}
\end{figure}

\begin{figure*}[t!]
\centering
    \begin{lstlisting}[language=Python]
    def _init_critic_update_with_dist(self):
        """Update critic with distribution weighting, 
           and update \delta_\phi using recursive update. """
        next_actions = self._policy.actions([self._next_observations_ph])
        
        ## Compute errors at next state, and an action from the policy
        qf_pred_errs = self._error_fns([self._next_observations_ph, next_actions])
            
        ## error_model_tau_ph: moving mean of the error values over batches
        err_logits = -tf.stop_gradient(
            self._discount * qf_pred_errs / self._error_model_tau_ph)

        Q_target = tf.stop_gradient(self._get_Q_target())
        Q_values = self._Q([self._observations_ph, self._actions_ph])
        
        ## Compute importance sampled loss, also perform self-normalized sampling
        loss, weights = importance_sampled_loss(
            labels=Q_target, predictions=Q_values,
            weights=err_logits, weight_options='self_normalized')
        
        ## Train Q-function
        Q_training_ops = tf.contrib.layers.optimize_loss(loss, learning_rate=self._Q_lr,
            optimizer=self._Q_optimizer, variables=self._Q.trainable_variables)
        training_ops.update({'Q': tf.group(Q_training_ops)})
        
        ## Training the error function
        err_values = self._error_fns([self._observations_ph, self._actions_ph])
        
        ## Mean Bellman error used to compute target values for error
        bellman_errors = tf.abs(Q_values - Q_target)
        err_targets = tf.stop_gradient(self._get_error_target(bellman_errors))
        
        ## This is used to update the moving mean, self._error_model_tau_ph
        self._mean_error_values = tf.reduce_mean(err_values)

        ## Simple mean squared error loss for \delta_\phi
        err_losses =  tf.losses.mean_squared_error(
            labels=err_targets, predictions=err_values, weights=0.5)
        
        ## Update error function: \delta_\phi
        err_training_ops = tf.contrib.layers.optimize_loss(err_losses,
            learning_rate=self._dist_lr, 
            optimizer=self._err_optimizer, variables=self._error_fns.trainable_variables)
        training_ops.update({'Error': tf.group(err_training_ops)})
    \end{lstlisting}
    \caption{\footnotesize{Code for training the error function $\Delta_\phi$, and modified training for the Q-function $Q(s, a)$ using $\Delta_\phi$ to get weights $w(s, a)$ for training. Code written in convention with regular Tensorflow guidelines, in the same style as the official SAC implementation~\citep{haarnoja2018sacapps}.}}
\label{fig:code_discor}
\end{figure*}

\subsection{Code for the Method}
\label{sec:code}
The code is shown in Figure~\ref{fig:code_discor}. It is a simplified version of the code from our implementation of DisCor on top of the official SAC repository~\citep{haarnoja2018sacapps}.

\end{document}